\newtheorem{theorem}{Theorem}
\newcommand{\lin}[1]{\textcolor{black}{#1}}
\newcommand{\wyc}[1]{\textcolor{black}{#1}}
\newcommand{\rap}{DuQuant}
\newcommand{\lwc}{DuQuant{\tiny{+LWC}}}
\newcommand{\Brap}{\textbf{DuQuant}}
\newcommand{\Blwc}{\textbf{DuQuant{\tiny{+LWC}}}}
\newcommand{\mX}{\mathbf{X}}
\newcommand{\mW}{\mathbf{W}}
\newcommand{\mR}{\mathbf{R}}
  \newcommand\figcaption{\def\@captype{figure}\caption}
  \newcommand\tabcaption{\def\@captype{table}\caption}
\title{ 
\begin{minipage}{.05\textwidth}
\centering
\vspace{-2pt}
\includegraphics[width=\linewidth]{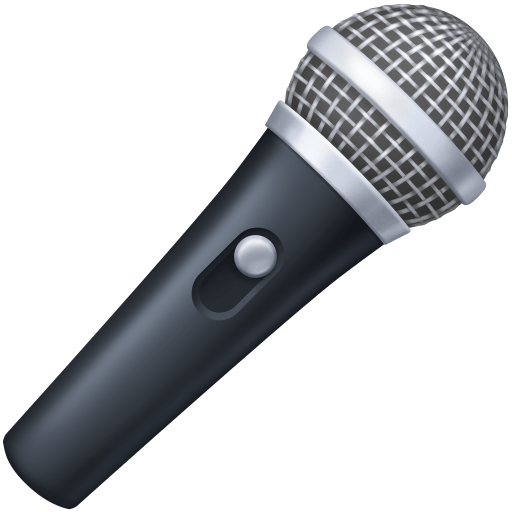} 
\end{minipage}
DuQuant: Distributing Outliers via Dual Transformation Makes Stronger Quantized LLMs
}
\author{Haokun Lin\footnotemark[1] $^{\ 1,3,4}$, Haobo Xu\footnotemark[1] $^{\ 2}$, Yichen Wu\footnotemark[1] $^{\ 4}$, 
Jingzhi Cui$^{2}$, Yingtao Zhang$^2$\vspace{0.1cm},\\
\textbf{Linzhan Mou$^5$, Linqi Song$^4$, Zhenan Sun\footnotemark[2] $^{\ 1,3}$, 
Ying Wei\footnotemark[2] $^{\ 5}$}\vspace{0.3cm}\\
    $^1$ School of Artificial Intelligence, University of Chinese Academy of Sciences\\
    $^2$ Tsinghua University \quad
    $^3$ NLPR \& MAIS, Institute of Automation, CAS \\
    $^4$ City University of Hong Kong \quad
    $^5$ Zhejiang University \vspace{0.15cm}\\
  \texttt{haokun.lin@cripac.ia.ac.cn}\quad \texttt{xuhb20@mails.tsinghua.edu.cn} \\
  \texttt{wuyichen.am97@gmail.com}\quad \texttt{znsun@nlpr.ia.ac.cn}\quad \texttt{ying.wei@zju.edu.cn} \\ 
}
\begin{document}
\renewcommand{\thefootnote}{\fnsymbol{footnote}}
\footnotetext[1]{Equal contribution.}  
\footnotetext[2]{Corresponding authors.}
\vspace{-0.6cm}

\maketitle

\vspace{-0.3cm}
\begin{abstract}
Quantization of large language models (LLMs) faces significant challenges, particularly due to the presence of outlier activations that impede efficient low-bit representation. Traditional approaches predominantly address $\textit{Normal Outliers}$, which are activations across all tokens with relatively large magnitudes. However, these methods struggle with smoothing $\textit{Massive Outliers}$ that display significantly larger values, which leads to significant performance degradation in low-bit quantization. In this paper, we introduce DuQuant, a novel approach that utilizes rotation and permutation transformations to more effectively mitigate both massive and normal outliers. First, DuQuant starts by constructing the rotation matrix, using specific outlier dimensions as prior knowledge, to redistribute outliers to adjacent channels by block-wise rotation. Second, We further employ a zigzag permutation to balance the distribution of outliers across blocks, thereby reducing block-wise variance. A subsequent rotation further smooths the activation landscape, enhancing model performance. DuQuant simplifies the quantization process and excels in managing outliers, outperforming the state-of-the-art baselines across various sizes and types of LLMs on multiple tasks, even with 4-bit weight-activation quantization. 
Our code is available at \url{https://github.com/Hsu1023/DuQuant}.

\end{abstract}
\vspace{-2mm}
\section{Introduction}

Large language models (LLMs)~\citep{touvron2023llama,brown2020gpt,tao2024scaling} have demonstrated exceptional performance across a wide range of natural language processing tasks. \wyc{However, their billions of parameters present considerable deployment challenges on resource-constrained edge devices, particularly in terms of memory usage and inference speed~\citep{heo2023rethinking_channel,dong2024prune-zero,wan2024look}. In response to these challenges, network quantization methods~\citep{han2015deep_compression,hou2016loss_binary} have been extensively explored to minimize memory usage by converting floating-point parameters into low-bit formats~\citep{frantar2022gptq,lin2023awq, chee2024quip}, and to \lin{expedite} inference by quantizing both activations and weights for accelerating the matrix multiplication process~\citep{xiao2023smoothquant,liu2023qllm,zhao2023atom}. }

\wyc{Among LLM quantization methods, a primary issue is the presence of activation outliers, which \lin{enlarge the quantization step sizes}
and subsequently cause significant accuracy loss~\citep{wei2023outlier_supp+}.
To mitigate this problem, current research has developed various methods to address \textbf{Normal Outliers} in activations, which are \textit{persistent in several channels across all tokens}~\citep{dettmers2022llm_int8,xiao2023smoothquant}. 
However, besides Normal Outliers, there exists another type of activation outlier~\citep{sun2024massive,liu2024intactkv}, termed \textbf{Massive Outliers}. 
These outliers are characterized by their \textit{exceedingly high values and limited occurrence in a subset of tokens}, as depicted in Figure~\ref{fig:outlier_vis}(b). Unfortunately, existing 
LLM quantization methods 
struggle to effectively address these Massive Outliers. For instance, SmoothQuant~\citep{xiao2023smoothquant}, despite using a smooth factor to shift some of the activation outliers to the weight part, still cannot effectively handle Massive Outliers with extremely large values, as shown in Figure~\ref{fig:outlier_vis}(c)(d). OmniQuant~\citep{shao2023omniquant} and AffineQuant~\citep{ma2024affinequant}, 
on the other hand, 
exhibit \lin{training} instability issues~\citep{liu2023qllm} due to the presence of Massive Outliers. Consequently, there is a pressing need for an LLM quantization approach that effectively addresses both Normal and Massive Outliers.}

\wyc{To tackle this challenge, we propose the \textbf{Du}al transformations \textbf{Quant}ization (DuQuant) method.
Our motivation is to \textbf{redistribute the activation outlier values across different channels}, facilitating easier quantization. 
Specifically, we construct the orthogonal rotation matrix and the orthogonal permutation matrix. By multiplying these matrices with the activations, we can effectively perform column transformations on the activations, which in turn allows for the redistribution of outliers. 
\lin{For the \textbf{rotation transformation} aspect, we first identify specific dimensions of outliers as the prior knowledge and employ a greedy algorithm to construct the rotation matrix. To enhance the multiplication efficiency, we utilize diagonal block-wise rotation matrices, with each matrix responsible for a small portion of the activations. However, this approach may result in uneven outlier magnitudes across different blocks.} Therefore, we propose the \textbf{zigzag permutation} for reordering the activation channels, which promotes a more uniform distribution across different blocks. 
Concretely, we distribute the channels with the highest activations across the blocks in a back-and-forth pattern. 
After establishing blocks with uniformly distributed outlier magnitudes, we employ another rotation transformation to further redistribute the outliers within each block. 
Note that we multiply the weight matrix with the transpose of the rotation and permutation matrices at the same time, preserving the linear layer equivalence and smoothing weights. 
Theoretical analysis confirms that the rotation and permutation transformations greatly mitigate quantization challenges induced by outliers.
}

As a result, DuQuant offers several clear advantages over QuaRot~\citep{ashkboos2024quarot}: (1) DuQuant’s optimal rotation matrix, derived through a greedy search guided by prior knowledge, surpasses QuaRot's Hadamard rotation in managing outliers; (2) our unique zigzag permutation significantly reduces activation variance across blocks, providing a distinct advantage for handling massive outliers;
and (3) by jointly smoothing weights and activations, DuQuant avoids time-consuming GPTQ algorithm in QuaRot. 
Extensive evaluations demonstrate that our \rap\ approach significantly outperforms existing 4-bit weight-activation quantization baselines across various benchmarks. 
Notably, \rap\ achieves a 5\% improvement in Commonsense QA tasks across all LLaMA model sizes and a 10\% increase in zero-shot MMLU benchmarks for the Vicuna-v1.5-13B. 
Moreover, in practical applications with the LLaMA2-7B model, \rap\ not only accelerates pre-filling phase by up to 2.08$\times$ but also reduces memory usage during decoding phase by 3.50$\times$, with minimal impact on performance: only a 0.61 increase in perplexity and a 2.71\% drop in accuracy compared to the FP16 model. 
These results highlight the effectiveness of \rap\ in enhancing the efficiency and capacity of quantized LLMs.
\vspace{-0.2cm}

\begin{figure*}[t]
    \centering
    \includegraphics[width=0.94\linewidth]{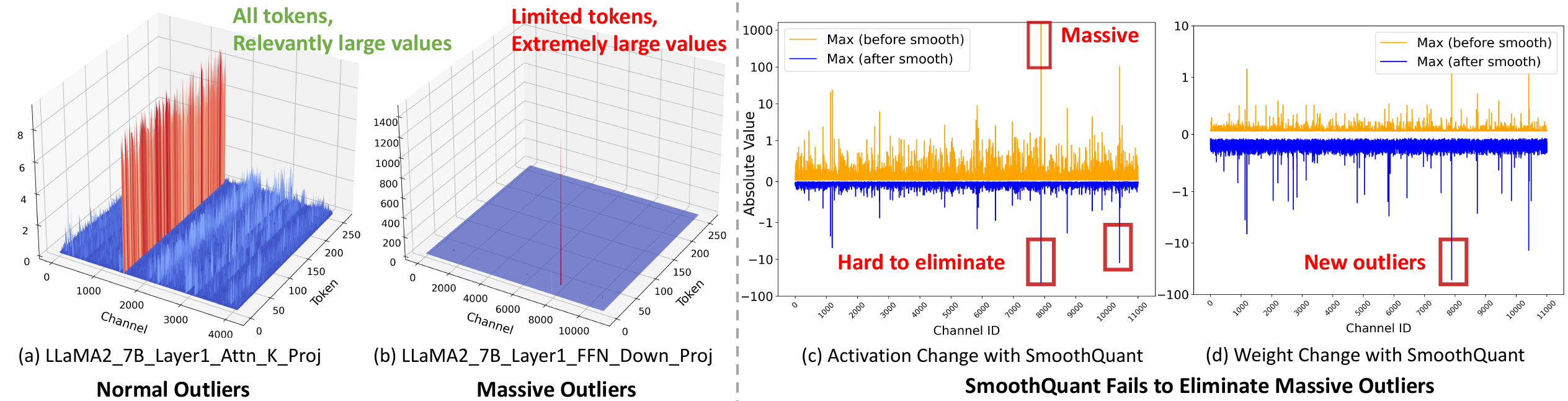}
    \caption{
    Visualizations of Outliers in LLaMA2-7B.
    (a) Input activation of Layer1 attention key projection shows Normal Outliers with relatively high magnitudes across all token sequences.
    (b) Input activation of Layer1 FFN down projection reveals Massive Outliers, presenting extremely high magnitudes (around 1400) at very few tokens.
    (c) Application of SmoothQuant on FFN down projection, illustrating its struggle with massive outliers in the Activation matrix.
    (d) Corresponding weight changes with SmoothQuant, highlighting the emergence of new outliers.
    }
    \vspace{-0.5cm}
    \label{fig:outlier_vis}    
\end{figure*}

\section{Motivation}
\label{sec:motivation}

\vspace{-5pt}
\paragraph{Normal Outliers and Massive Outliers.}
Previous works~\citep{dettmers2022llm_int8, zhang2024plug, lin2023awq} have highlighted the challenge posed by activation outliers in LLMs for model compression. 
These outlier features consistently manifest large values across specific feature dimensions and are present in all token sequences~\citep{xiao2023smoothquant}, which we refer to as \textit{Normal Outliers}.
Recently, a distinct type of outlier~\citep{sun2024massive, liu2024intactkv}, termed \textit{Massive Outliers},  
has been observed in LLMs. 
\lin{
The primary distinctions between normal and massive outliers are: 1) Normal outliers persist across all token sequences, whereas massive outliers are confined to a limited number of tokens. 2) Massive outliers exhibit significantly larger magnitudes, often surpassing 100 and
being approximately 1000 times greater than the median of other activations~\citep{sun2024massive}.
}
In our study, we delve deeper into the impact of these two distinct types of outliers on quantization.

\vspace{-5pt}
\paragraph{Massive Outliers Exist at the Second Linear Layer of FFN Module.}
In contrast to previous studies~\citep{sun2024massive,liu2024intactkv} that observe massive outliers at the output of Transformer blocks, we \textbf{first} discover that these extremely large activations exist at the input of the down-projection layer within the FFN module.
As depicted in Figure~\ref{fig:outlier_vis}, the input of the down-projection layer in the LLaMA2-7B model Layer 1 contains a single activation of significant magnitude (approximately 1400). This activation is isolated to one token and therefore classified as one of massive activations.
This phenomenon is consistently observed across different layers and sizes of models, as illustrated in Appendix~\ref{sec:appx_vis}.
\vspace{-2mm}
\paragraph{Massive Outliers Enlarge Quantization Difficulty.}\wyc{Although previous studies~\citep{xiao2023smoothquant,shao2023omniquant,ma2024affinequant,ashkboos2023quik} have proposed various approaches to eliminate outlier features, they still face challenges in effectively managing massive outliers.}
SmoothQuant~\citep{xiao2023smoothquant}, for instance, attempts to shift the quantization difficulty from activations to weights by dividing the activation by a per-channel smoothing factor and multiplying it to the weight matrix. Nevertheless, we observe that this transfer at the input of the down-projection layer can cause the weights of the down-projection to display noticeable outliers, as demonstrated in Figure~\ref{fig:outlier_vis} .
This issue arises because massive outliers cause the smoothing factor to become significantly large. 
Moreover, extremely large outliers can lead optimization-based methods to encounter problems with loss explosion.
Both OmniQuant~\citep{shao2023omniquant} and AffineQuant~\citep{ma2024affinequant} have had to exclude their learnable parameters for the down projection layer due to unstable gradients.
\lin{Given the poor accuracy observed with 4-bit quantization, 
QUIK~\citep{ashkboos2023quik} opts to use INT8 quantization for the down projection layer and Atom~\citep{zhao2023atom} applies INT8 quantization for 128 outlier channels.}
Consequently, massive outliers introduce new challenges to the quantization process that existing methods cannot fully address. 
This observation has motivated us to develop rotation and permutation transformations, which effectively handles both massive and normal outliers and achieves state-of-the-art performance.

\vspace{-1mm}
\section{Method}
\vspace{-1mm}
\wyc{In this section, we delve into the distribution of outliers and introduce our proposed DuQuant method. The DuQuant method is built on two key components: 1) the block-diagonal rotation matrix, tasked with the local redistribution of feature outliers, and 2) the zigzag permutation, responsible for the global reordering of outliers across different blocks. }  
\vspace{-1mm}
\subsection{Preliminaries}
\vspace{-2mm}

\wyc{As the common modules within each transformer block of LLMs, both Multi-head Self-Attention (MSA) and Feed-Forward Network (FFN) fundamentally consist of basic linear layers, which can be represented as,}
$\mathbf{Y}=\mathbf{X}\cdot\mathbf{W} \in \mathbb{R}^{T \times C_{out}}$. Here,
$\mathbf{X} \in \mathbb{R}^{T \times C_{in}}$ is the activation input and $\mathbf{W} \in \mathbb{R}^{C_{in} \times C_{out}}$ denotes the weight matrix. \wyc{In this paper,} we focus on integer uniform quantization~\citep{jacob2018uniform_quantization} of both activation and weight, \wyc{aiming} to achieve better hardware support. \wyc{Specifically,} the $b$-bit quantization process maps the FP16 tensor $\mathbf{X}$ to low-bit integer $\mathbf{X}_{q}$:
\begin{equation}
    \label{eq_quantization}
    \mathbf{X}_{q} = \text{clamp}\left(\left\lfloor \frac{\mathbf{X}}{\Delta} \right\rceil \!\!+\! z, 0, 2^{b}-1 \right), 
    \textrm{where}~\Delta=\frac{\max(\mathbf{X})-\min(\mathbf{X})}{2^b-1}, z = -\left\lfloor \frac{\min(\mathbf{X})}{\Delta}\right\rceil. ~~~ 
\end{equation}   
The notation $\left\lfloor \cdot \right\rceil$ means the nearest rounding operation, $\Delta$ is the quantization step size and $z$ represents the zero point.
Following ~\citep{xiao2023smoothquant,shao2023omniquant,liu2023qllm,ma2024affinequant}, we employ per-token quantization for activation and per-channel quantization for weight, which entails assigning 
\lin{different step sizes to individual tokens of activations ($\Delta_{\mX}\in\mathbb{R}^{T \times 1}$) and different output channels of weights ($\Delta_{\mW}\in\mathbb{R}^{1 \times C_{out}}$).}

\begin{figure*}[t]
    \centering
    \includegraphics[width=0.95\linewidth]{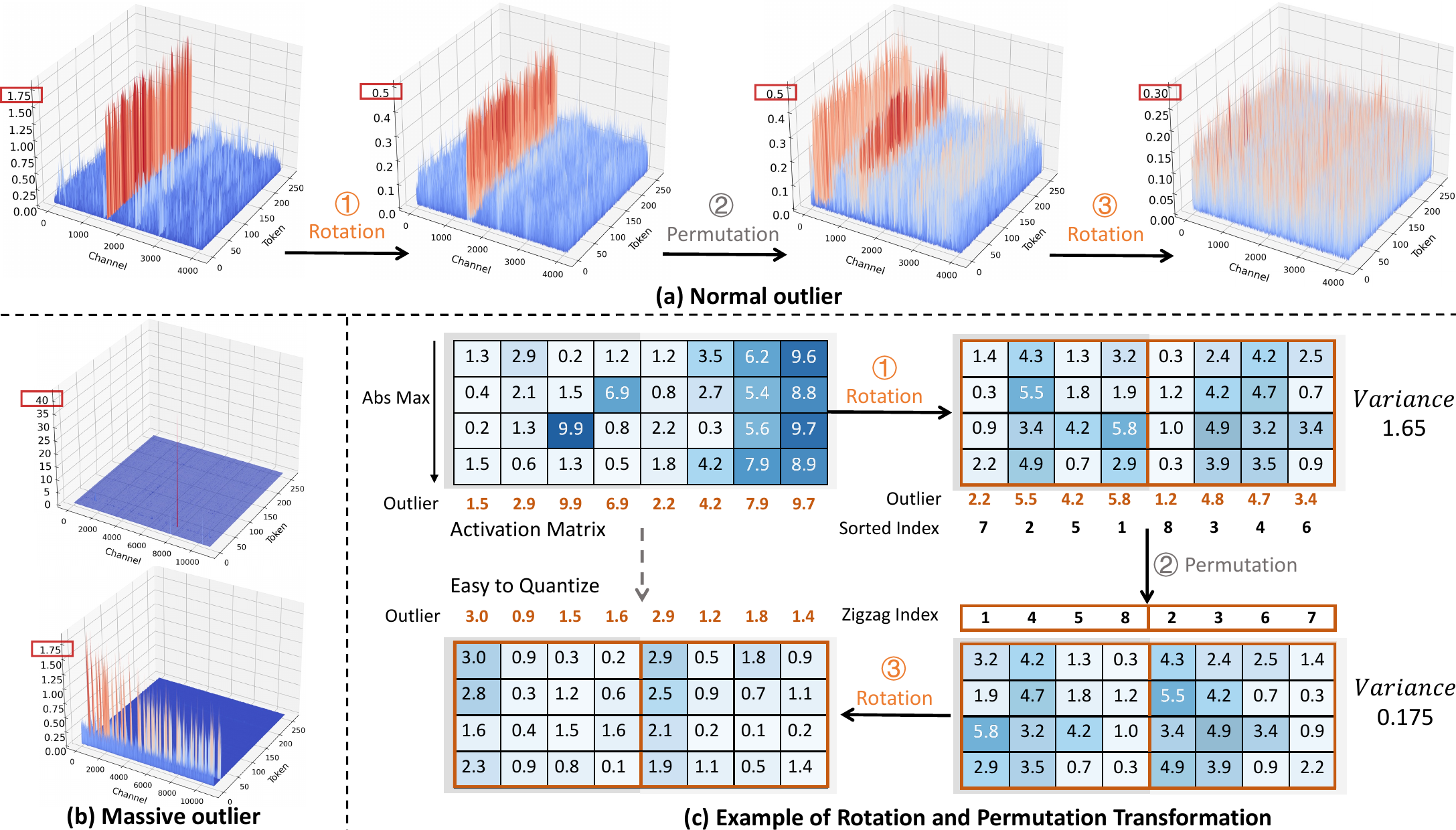}
    \caption{
    Transformation Steps for Activation Matrices after smooth technique.
    (a) Sequential transformations on Normal Outliers: 
    \ding{172} 
    initial rotation to reduce outliers within blocks,
    \ding{173} 
    permutation to evenly distribute outliers across blocks, and 
    \ding{174} 
    a second rotation for further smoothing.
    (b) Activation changes for Massive Outliers before and after DuQuant.
    (c) A sample matrix for highlighting the continual reduction of outliers through rotation and permutation, with outliers marked in dark blue.
    }
    \vspace{-0.5cm}
    \label{fig:dual_quant}    
\end{figure*}

\vspace{-1mm}
\subsection{The proposed DuQuant Method}
\vspace{-2mm}
\label{subsec:method_rap}

\wyc{To address the Normal Outliers issue stated in Section~\ref{sec:motivation}, current quantization methods, such as SmoothQuant~\citep{xiao2023smoothquant} and OmniQuant~\citep{xiao2023smoothquant}, usually adopt the smooth technique.
Concretely, it involves the utilization of a per-channel smoothing diagonal matrix, denoted as $\mathbf{\Lambda}$, to scale the input activation and weight matrix. The adjustment allows us to rewrite the original linear layer as $\mathbf{Y}=\mathbf{X}\cdot \mathbf{W}=(\mathbf{X}\cdot \mathbf{\Lambda})(\mathbf{\Lambda}^{-1} \cdot \mathbf{W}).$
The diagonal element $\mathbf{\Lambda}_j$ within $\mathbf{\Lambda}$ is computed as $\mathbf{\Lambda}_j\!=\text{max}(|\mathbf{X}_{j}|)^{\alpha}/\text{max}(|\mathbf{W}_{j}|)^{1-\alpha}$, where $\alpha$ is a hyper-parameter representing the migration strength. However, despite the ability of this smoothing technique to shift the quantization challenge from activations to weights, it still faces difficulties in effectively managing Massive Outliers, 
as depicted in Figure~\ref{fig:outlier_vis}. 
\lin{This challenge stems from the} \wyc{extremely large} \lin{massive outliers 
inducing large scaling factors $\mathbf{\Lambda}_j$, 
which in turn introduce new outliers in the weight matrix
and result in significant performance declines in 4-bit quantization. }
}

\vspace{-4mm}
\wyc{According to these findings, we propose the DuQuant method, which includes the Rotation and Permutation transformations based on the smooth technique. By combining rotation transformation and channel permutation, our DuQuant method aims to redistribute these features within the activation space, thereby mitigating the effects of both Normal and Massive Outliers.}

\vspace{-2mm}
\paragraph{The Rotation Transformation.}\wyc{In contrast to the smooth technique, our aim is to apply a rotation matrix for row or column transformations, mitigating the impact of both Normal and Massive outliers. The ideal rotation matrix, denoted as $\mathbf{R}$, should possess the following properties: 1) $\mathbf{R}$ is an orthogonal matrix satisfying $\mathbf{R}\mathbf{R}^{\top}=\mathbf{I}$ and $|\mathbf{R}|=\pm 1$. This allows us to reformulate the linear layer within the transformer as $\mathbf{Y}=\mathbf{X}\cdot \mathbf{W}=\mathbf{(X\mathbf{R})(\mathbf{R^\top}\mathbf{W}})$; 2)  $\mathbf{R}$ should be capable of effectively target the positions of outliers and effectively mitigating them through matrix multiplication. However, due to the Massive Outliers are usually randomly distributed within the activation space, it is challenging to directly identify the optimal rotation matrix $\mathbf{R}$ capable of mitigating outliers through a single rotation transformation. To address this problem, we employ a greedy search with prior knowledge to compute a rotation matrix $\hat{\mathbf{R}}$, thereby approximating the ideal rotation matrix $\mathbf{R}$. Specifically, the calculation of $\hat{\mathbf{R}}$ involves the following steps, 
}
\begin{itemize}[leftmargin=6.5mm, itemsep=0mm, topsep=-0.8 mm]
    \item[$\circ$] \wyc{ Identify the feature dimension $d^{(1)}$ where the outlier are primarily concentrated, i.e.,  
    $d^{(1)}=\arg\max_j(\max_i |\mX_{ij}|)$. 
    Here, $\mX_{ij}$ represents the element in the $i$-th row and $j$-th column of $\mX$. }
    
    \item[$\circ$] \wyc{ Based on the searched dimensions $d^{(1)}$, we construct the rotation matrix as follows,
    }
    \begin{equation}
    \label{eq:rotation-matrix}
    \mathbf{R^1}=\mathbf{E}_{d^{(1)}}\tilde{\mathbf{R}}\mathbf{Q}\mathbf{E}_{d^{(1)}},   \qquad  \mathbf{Q}=\small{\left[\begin{matrix}1 &\mathbf{O}\\\mathbf{O} &\mathbf{Q}'\end{matrix}\right]}.
    \end{equation}
    
   \vspace{-2.4mm}
    \wyc{Here, $\mathbf{E}_{d^{(1)}}$ is the switching matrix used to swap the first and the $d^{(1)}$-th columns of the activation, and $\tilde{\mathbf{R}}$ represents an orthogonal initialized rotation matrix, in which the first row is specifically uniformly distributed.
    The motivation behind this is to mitigate outliers in the first column after the transformation by $\mathbf{E}_{d^{(1)}}$. To further increase the randomness of the rotation operation, we retain the first column, where outliers have been mitigated, and randomly rotate the other columns by multiplying them with a random orthogonal matrix $\mathbf{Q}'$.}

    \item[$\circ$] Let $N$ denote the greedy search steps, then the approximated rotation matrix $\hat{\mathbf{R}} = \mathbf{R}^1\mathbf{R}^2\cdots \mathbf{R}^n$, where $n = \arg\min_{k\in[1:N]} \left( \max_{i,j} |(\mX \mR^1\cdots \mR^k)_{ij}|\right)$. Each $\mathbf{R}^i$ is constructed according to Eqn.~(\ref{eq:rotation-matrix}) and the identified feature dimension $d^{(i)}$. 
    Appendix~\ref{sec:appx_algo} provides detailed pseudo code.
\end{itemize}
\wyc{Through this construction manner, we can ensure that the approximated optimal rotation matrix $\hat{\mathbf{R}}$ can effectively mitigate outliers with large magnitudes, as opposed to merely using a randomly selected orthogonal rotation matrix. Nevertheless, directly constructing the entire rotation matrix is time-consuming and results in substantial memory overhead. For fast matrix multiplication, following~\citep{xi2023training4bit}, we approximate the rotation matrix $\hat{\mR} \in \mathbb{R}^{C_{in} \times C_{in}}$ in a block-wise manner,  }  
\begin{equation}
    \label{eq:diagonal-rotation}
    \centering
    \hat{\mR} =~\text{BlockDiag}(\hat{\mR}_{b_1}, ..., \hat{\mR}_{b_K}),
\end{equation}
\wyc{where $\hat{\mR}_{b_i} \in \mathbb{R}^{2^n \times 2^n}$ denotes a square matrix of the $i$-th block, which is constructed following the three steps mentioned above. And the block numbers $K$ is calculated by $K = C_{in}/{2^n}$.          }
\vspace{-4mm}
\paragraph{The Permutation Transformation.} \wyc{Despite adopting the block-diagonal rotation matrix $\hat{\mathbf{R}}$ for its time and storage efficiency, its focus on local information introduces a potential limitation in further reducing the outliers. This is because the rotation transformation, conducted within each small block, cannot integrate the information across different blocks to further minimize outliers. Consequently, one block may have relatively larger outliers while another block has smaller outliers, resulting in high variance among different blocks, as shown in Figure~\ref{fig:dual_quant}. This limitation explains that merely utilizing the block-diagonal rotation matrix is insufficient to effectively reduce the overall outliers.}

\wyc{To effectively mitigate the overall outliers, it is essential to balance the outliers' magnitudes among various blocks.
Specifically, within each small block, we denote the largest outlier in dimension $d_j$ as $O_j$. Meanwhile, $M_{b_i}$ represents the mean value of all $O_j$ in the $i$-th block, where $i=1,2,...,K$. Then the variance in activation magnitudes across various blocks can be expressed as,}
\begin{equation}
    \label{eq:var-blocks}
    \centering
    \text{Var}([M_{b_1}, M_{b_2}, ..., M_{b_K} ]).
\end{equation}
\wyc{To minimize this variance and further reduce the overall outliers, we introduce the \textbf{zigzag permutation}. Concretely, we generate a zigzag sequence that starts by assigning channels with the highest activations to the first block. The process continues by assigning channels with the next highest activations to the subsequent blocks in descending order until the end of block $K$. Upon reaching the final block, the order reverses, starting from the channel with the next highest activation and proceeding in ascending order. This back-and-forth patterning continues throughout all the blocks, ensuring that no single block consistently receives either the highest or lowest activation channels. It is worth noting that the constructed permutation is an orthogonal matrix, which we denote as $\mathbf{P}$, satisfying the conditions $\mathbf{P}\mathbf{P}^\top=\mathbf{I}$ and $|\mathbf{P}|=\pm 1$.} \wyc{By employing the zigzag permutation, we achieve a balanced distribution of outliers across different blocks. This allows us to use an additional rotation transformation to further smooth the outliers.  
Figure~\ref{fig:dual_quant} provides an illustration of outlier mitigation.
}
\vspace{-2mm}
\paragraph{The Overall DuQuant Method.} \wyc{To effectively mitigate both Normal and Massive Outliers, we first employ the smooth technique to shift the quantization challenge from activations to weights. Next, we introduce the block-diagonal rotation matrix $\hat{\mR}$ to locally redistribute feature outliers within the activation space. We then propose the zigzag permutation matrix for globally balancing the outliers across different blocks, followed by another application of the block-diagonal rotation transformation. To sum up, the linear layers within the transformer can be rewrite as,}
\begin{equation}
    \label{eq:RAP}
    \centering
    \small{
    \mathbf{Y} = \mX \cdot \mW = [(\mX \cdot \underbrace{\mathbf{\Lambda})  \hat{\mR}_{(1)} \cdot \mathbf{P} \cdot \hat{\mR}_{(2)}}_{\mathbf{G}}] \cdot [\underbrace{\hat{\mR}_{(2)}^{\top} \cdot \mathbf{P}^{\top} \cdot \hat{\mR}_{(1)}^{\top} (\mathbf{\Lambda}^{-1}}_{\mathbf{G}^{-1}} \cdot \mW)],}
\end{equation} 
\wyc{where the notation $\mathbf{P}$ denotes the orthogonal permutation matrix learned via the zigzag manner, the $\hat{\mR}_{(1)}$ and $\hat{\mR}_{(2)}$ represent the first and second block-diagonal rotation matrix, respectively.}

\textbf{Remark 1.} \wyc{It is worth noting that the proposed DuQuant method can simultaneously smooth the weight matrix. While the commonly adopted smooth technique is effective, it can cause the weight matrix of the down-projection layer to exhibit pronounced outliers, leading to performance degradation. However, in the proposed DuQuant method, the rotation transformation we designed is applied to not only the activation input but also the weight matrix. As a result, the outliers induced by the smooth technique can be mitigated through our approximated rotation matrix $\hat{\mathbf{R}}$, yielding a smoother, more quantization-friendly weight matrix. Moreover, this approach eliminates the reliance on complex weight quantization techniques, such as GPTQ~\citep{frantar2022gptq} used in Atom~\citep{zhao2023atom} and QuaRot~\citep{ashkboos2024quarot}. }

\textbf{Remark 2.} \wyc{To further decrease the computation and memory costs, we initially construct the $k$-th block rotation matrix $\hat{\mathbf{R}}_{b_k}$, with the $k$-th block containing the largest outlier. We then assign $\hat{\mR}_{b_i}=\hat{\mathbf{R}}_{b_k}$ for all $1\le i \le K$. This strategy not only effectively mitigates the impact of outliers, but also reduces the number of block rotation matrices from $K$ to 1, significantly reducing computation and memory requirements.} 
\wyc{Importantly,} incorporating the invertible matrix $\mathbf{G}$ from Eqn.~\eqref{eq:RAP} significantly eases the quantization challenges for $\mX$ and $\mW$. 
Consequently, the quantization process acts as
$\small{\mathbf{Y} = (\mX\mathbf{G})(\mathbf{G}^{-1}\mW) = \hat{\mX} \cdot \hat{\mW}
\approx \Delta_{\hat{\mX}} \Delta_{\hat{\mW}} ({\hat{\mX}}_q - z_{\hat{\mX}})({\hat{\mW}}_q - z_{\hat{\mW}})}$.

\subsection{Theoretical Analysis}
\vspace{-2mm}
\wyc{To further demonstrate the effectiveness of the proposed DuQuant method, we conduct a theoretical analysis of the rotation and permutation transformations. Theorem~\ref{theorem:rotation} shows that within each block, the constructed rotation matrix effectively mitigates the maximum outlier, thereby reducing the outlier magnitude through a greedy search. Theorem~\ref{theorem:zigzag} reveals that the employed zigzag permutation ensures a balanced upper bound shared among different blocks. This suggests that the zigzag permutation effectively reduces the variance shown in Eqn.~\eqref{eq:var-blocks} and thus assists the rotation matrix in further decreasing the outliers. Please refer to Appendix~\ref{sec:appx_proofs} for detailed proofs.
}
\begin{theorem}[Rotation]
\label{theorem:rotation}
\wyc{For the activation input $\mathbf{X}\in \mathbb{R}^{T\times C_{in}}$, $\hat{\mathbf{R}}\in \mathbb{R}^{2^n\times 2^n}$ is a diagonal block matrix constructed as per Eqn.~(\ref{eq:diagonal-rotation}). For a specific block $b_i$, let $O_j(\cdot)$ represent the maximum outlier of the $j$-th dimension $d_j$ within the input. Then, we can deduce that,}
\begin{equation}
  \small{\underset{1\le j\le 2^n}{\max} ~~O_j(\mX_{b_i} \hat{\mathbf{R}}_{b_i} ) \le  \underset{1\le j\le 2^n}{\max}~~ O_j(\mX_{b_i}).}
\end{equation}

\end{theorem}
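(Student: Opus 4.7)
The plan is to first reformulate the quantity on both sides of the inequality. Since $O_j(\mathbf{X}_{b_i}) = \max_t |\mathbf{X}_{b_i,\,t,j}|$ is the largest absolute entry in column $j$ of the block, the statement $\max_j O_j(\mathbf{X}_{b_i}\hat{\mathbf{R}}_{b_i})\le \max_j O_j(\mathbf{X}_{b_i})$ is equivalent to $\|\mathbf{X}_{b_i}\hat{\mathbf{R}}_{b_i}\|_{\max}\le \|\mathbf{X}_{b_i}\|_{\max}$, where $\|\cdot\|_{\max}$ denotes the entrywise maximum absolute value. Rewriting the claim in this form makes the subsequent factor-by-factor analysis of $\hat{\mathbf{R}}_{b_i}$ much cleaner.

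Next, I would exploit the greedy nature of the construction. Recall that $\hat{\mathbf{R}}_{b_i}=\mathbf{R}^1\mathbf{R}^2\cdots\mathbf{R}^n$ where $n=\arg\min_{k\in[1:N]}\|\mathbf{X}_{b_i}\mathbf{R}^1\cdots\mathbf{R}^k\|_{\max}$. Consequently, $\|\mathbf{X}_{b_i}\hat{\mathbf{R}}_{b_i}\|_{\max}\le \|\mathbf{X}_{b_i}\mathbf{R}^1\|_{\max}$, so it suffices to prove the one-step inequality $\|\mathbf{X}_{b_i}\mathbf{R}^1\|_{\max}\le \|\mathbf{X}_{b_i}\|_{\max}$. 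This reduction is the structural backbone of the proof and turns a statement about an adaptive product of rotations into a statement about a single explicitly constructed matrix.

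I would then unfold $\mathbf{R}^1=\mathbf{E}_{d^{(1)}}\tilde{\mathbf{R}}\mathbf{Q}\mathbf{E}_{d^{(1)}}$ and argue step by step. The switching matrices $\mathbf{E}_{d^{(1)}}$ only permute columns and therefore preserve $\|\cdot\|_{\max}$. After the first $\mathbf{E}_{d^{(1)}}$, the column of greatest magnitude (indexed by $d^{(1)}=\arg\max_i\max_j|\mathbf{X}_{ij}|$) sits in position $1$; call its peak value $M=\|\mathbf{X}_{b_i}\|_{\max}$. The key property of $\tilde{\mathbf{R}}$ is that its first row is uniform, so for any row $r$ and column $j$,
\[
(\mathbf{Y}\tilde{\mathbf{R}})_{r,j}= \tfrac{1}{\sqrt{C}}\mathbf{Y}_{r,1} + \sum_{k=2}^{C}\mathbf{Y}_{r,k}\tilde{\mathbf{R}}_{k,j},
\]
where $C=2^n$ and $\mathbf{Y}=\mathbf{X}_{b_i}\mathbf{E}_{d^{(1)}}$. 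Orthogonality of $\tilde{\mathbf{R}}$ gives $\sum_{k\ge 2}\tilde{\mathbf{R}}_{k,j}^{2}=1-1/C$, and applying Cauchy–Schwarz to the tail yields the bound $|(\mathbf{Y}\tilde{\mathbf{R}})_{r,j}|\le M/\sqrt{C}+\tfrac{C-1}{\sqrt{C}}\,m_r$, where $m_r=\max_{k\ge 2}|\mathbf{Y}_{r,k}|$. Whenever $m_r\le M/(\sqrt{C}+1)$—which is exactly the regime of interest because $d^{(1)}$ carries a massive outlier that, as noted in Section~\ref{sec:motivation}, is orders of magnitude larger than neighboring entries—the right-hand side collapses to at most $M$. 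The subsequent factor $\mathbf{Q}=\mathrm{diag}(1,\mathbf{Q}')$ keeps the now-reduced first column untouched and applies an orthogonal rotation to the remaining columns; because the entries of those remaining columns were never larger than $M$ to begin with and the uniform-row averaging has already dampened the dominant coordinate, an analogous Cauchy–Schwarz bound controls their new magnitudes by $M$ as well.

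The main obstacle I anticipate is the last half of the previous paragraph: carefully proving that the factors $\tilde{\mathbf{R}}$ and $\mathbf{Q}'$ cannot create a \emph{new} outlier in some row $r'$ different from the one that originally achieved the maximum, and doing so without auxiliary probabilistic assumptions on $\mathbf{Q}'$. I would handle this by applying the same uniform-first-row argument row-wise—every row of $\mathbf{Y}$ is bounded in absolute value by $M$, so the Cauchy–Schwarz estimate $|(\mathbf{Y}\tilde{\mathbf{R}})_{r',j}|\le \|\mathbf{Y}_{r',:}\|_{2}\le M$ holds coordinate-by-coordinate, and the block-diagonal structure from Eqn.~(\ref{eq:diagonal-rotation}) confines all interactions to within block $b_i$. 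Combining the row-wise bounds with the greedy reduction from the second paragraph then yields $\|\mathbf{X}_{b_i}\hat{\mathbf{R}}_{b_i}\|_{\max}\le M$, which is the claimed inequality.
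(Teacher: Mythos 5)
Your proposal shares the paper's core mechanism: exploit the uniform first row of $\tilde{\mathbf{R}}$ so that the dominant (swapped\nobreakdash-to\nobreakdash-first) column enters each rotated entry with weight $1/\sqrt{2^n}$, use orthonormality to constrain the remaining weights, and close the bound only because the non-dominant entries are assumed small. Your greedy reduction $\max_{i,j}|(\mathbf{X}_{b_i}\hat{\mathbf{R}}_{b_i})_{ij}|\le\max_{i,j}|(\mathbf{X}_{b_i}\mathbf{R}^1)_{ij}|$ is a correct structural step the paper does not spell out, and your explicit hypothesis $m_r\le M/(\sqrt{C}+1)$ (with $C=2^n$) plays the same role as the paper's inequality (1), which is likewise justified only by the first column holding the largest outlier and hence also rests on an unstated smallness condition; your arithmetic for that conditional step is fine.

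The genuine gap is in how you rule out \emph{new} outliers in other rows and under the $\mathbf{Q}'$ factor. The claimed estimate $|(\mathbf{Y}\tilde{\mathbf{R}})_{r',j}|\le\|\mathbf{Y}_{r',:}\|_2\le M$ is false: a row whose entries are each at most $M$ has Euclidean norm up to $\sqrt{2^n}M$, so Cauchy--Schwarz only yields $\sqrt{2^n}M$, and an orthogonal matrix whose relevant column is close to the normalized all-ones vector (which is compatible with the uniform-first-row constraint) really does produce an entry of size $\sqrt{2^n}M$ from a constant row. The same defect blocks the $\mathbf{Q}=\mathrm{diag}(1,\mathbf{Q}')$ step: knowing every entry of $\mathbf{Y}\tilde{\mathbf{R}}$ outside column $1$ is at most $M$ does not bound the entries of $(\mathbf{Y}\tilde{\mathbf{R}})\mathbf{Q}$ by $M$; you would need the $2$-norm of the row restricted to columns $2,\dots,2^n$ to be at most $M$, and even under your hypothesis $m_r\le M/(\sqrt{C}+1)$ that restricted norm can approach $\sqrt{2}\,M$ (since $\|\mathbf{Y}_{r,:}\|_2^2\le M^2\bigl(1+\tfrac{C-1}{(\sqrt{C}+1)^2}\bigr)=\tfrac{2\sqrt{C}}{\sqrt{C}+1}M^2$ and the column-$1$ entry need not absorb the excess), so a worst-case $\mathbf{Q}'$ can push an entry above $M$. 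Hence the one-step bound $\|\mathbf{X}_{b_i}\mathbf{R}^1\|_{\max}\le\|\mathbf{X}_{b_i}\|_{\max}$ is not established for columns $j\ge 2$ or for rows other than the outlier row. The paper sidesteps (rather than solves) this by considering only the worst-case row containing all column maxima and folding every remaining coefficient into weights $\delta_i$ bounded in its step (1); if you keep your factor-by-factor route, you need either a stronger quantitative assumption on all non-dominant entries (strong enough to control restricted row norms, not just individual entries) or an argument about what $\mathbf{Q}'$ can do, since orthogonality alone does not prevent it from concentrating mass onto a single coordinate.
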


\begin{theorem}[Zigzag Permutation]
\label{theorem:zigzag}
\wyc{
For the activation input $\mathbf{X}\in \mathbb{R}^{T\times C_{in}}$, it can be divided into $K$ blocks, where $K=C_{in}/2^n$. Let $O_j$ denote the max outlier of the dimension $d_j$ in $\mX$, the reordered outliers from large to small is expressed as $O^{(1)}, O^{(2)},..., O^{(C_{in})}$. Moreover, the $M_{b_i}$ represents the mean value of all $O_j$ in the $i$-th block, $i=1,2,...,K$. Let $\delta := \max \{|O^{(i+1)} - O^{(i)}|\}, i=1,2,...,C_{in}\!\!-\!1$. Then, following the zigzag permutation described in Section~\ref{subsec:method_rap}, the mean value $M_{b_i}$ within each $i$-th block consistently satisfies,}
\begin{equation}
    M_{b_i} \le O^{(1)} + \small{ \frac{(2^nK-1)(2^{n-1}-1)}{2^n}\delta}, \qquad i=1, 2, 3,...,K.
\end{equation}
\end{theorem}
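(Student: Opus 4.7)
The proof will be essentially combinatorial: make the zigzag assignment explicit, apply a telescoping consequence of the gap bound $\delta$, and evaluate a finite sum.

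First, I would spell out the zigzag. In the $r$-th of the $2^n$ passes, blocks are processed in the order $1,2,\ldots,K$ for odd $r$ and in the order $K,K-1,\ldots,1$ for even $r$, so block $i$ receives the sorted outlier of index $k_{i,r} = (r-1)K + i$ on odd passes and $k_{i,r} = rK - i + 1$ on even passes. Hence
\begin{equation*}
M_{b_i} \;=\; \frac{1}{2^n}\sum_{r=1}^{2^n} O^{(k_{i,r})}.
\end{equation*}

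Second, from the hypothesis $|O^{(k+1)} - O^{(k)}| \le \delta$ and the triangle inequality I would get the global Lipschitz-style bound $|O^{(a)} - O^{(b)}| \le |a-b|\,\delta$, and in particular $O^{(k_{i,r})} \le O^{(1)} + (k_{i,r}-1)\,\delta$. Substituting into the expression for $M_{b_i}$ and observing that the $i$-dependent contributions to $\sum_r k_{i,r}$ cancel (which is precisely how the zigzag balances the blocks and is the structural reason the same bound applies uniformly to every $i$) gives an inequality of the form $M_{b_i} \le O^{(1)} + C\,\delta$ with $C$ independent of $i$.

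Third, I would evaluate $C$ explicitly by splitting into $2^{n-1}$ odd passes and $2^{n-1}$ even passes and using the arithmetic series identities $\sum_{r=1}^{m}(2r-1)=m^2$ and $\sum_{r=1}^{m}2r=m(m+1)$. The main obstacle I expect is matching the stated coefficient exactly: applying the Lipschitz bound term by term yields the looser constant $(2^n K - 1)/2$, whereas the theorem claims the sharper $(2^n K - 1)(2^{n-1}-1)/2^n$. Closing this gap will require a pairing argument that exploits the zigzag's coupling of a high-index entry with a low-index entry of the same block in each odd/even pass pair, so that their sum can be bounded better than by twice $O^{(1)}$. Once the pairing is set up correctly, the remaining work is a routine algebraic simplification, but careful bookkeeping of the parity of $r$ and of the "anchor" index used in the Lipschitz bound will be essential in order not to lose the factor $(2^{n-1}-1)/2^{n-1}$ that separates the stated bound from the naive one.
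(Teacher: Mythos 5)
Your setup is essentially the paper's: you make the zigzag assignment explicit (your indices $k_{i,r}$ coincide with the paper's description of block $b_i$ as receiving the sorted positions $2mK+i$ and $2(m+1)K-i+1$ for $m=0,\dots,2^{n-1}-1$), you note that $\sum_r k_{i,r}$ is the same for every block, and you bound each term through a telescoped consequence of the gap $\delta$ before summing an arithmetic series. The difference is that you stop short of the stated constant: the term-by-term bound $O^{(k)}\le O^{(1)}+(k-1)\delta$ gives $M_{b_i}\le O^{(1)}+\tfrac{2^nK-1}{2}\delta$, and you defer the improvement to an unspecified ``pairing argument.'' That deferred step is the gap in your plan, but no pairing trick is needed: the $O^{(k)}$ are ordered from large to small, so every entry of every block is at most $O^{(1)}$, hence $M_{b_i}\le O^{(1)}$, and since the additive term $\tfrac{(2^nK-1)(2^{n-1}-1)}{2^n}\delta$ is nonnegative the claimed inequality follows at once. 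Note also that your Lipschitz bound is anchored in the uninformative direction; the useful consequence of $\delta$ is the lower bound $O^{(k)}\ge O^{(1)}-(k-1)\delta$, which, combined with the $i$-independence of $\sum_r k_{i,r}$ that you correctly identified, is what actually expresses the balancing effect of the zigzag (uniform control of all block means), rather than the upper bound in the theorem statement.

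For what it is worth, the paper's own proof runs into exactly the looseness you flagged: the sum displayed in its argument for $M_{b_1}$ also evaluates to a coefficient $2^{n-1}(2^nK-1)$, i.e.\ $\tfrac{2^nK-1}{2}$ after dividing by $2^n$, and the sharper factor $\tfrac{2^{n-1}-1}{2^{n-1}}$ in its final line is asserted rather than derived from the preceding display; the statement is rescued by the monotonicity observation above, not by that computation. So your proposal is structurally faithful to the paper's route but incomplete at precisely the step the paper glosses over, and the clean way to finish is the one-line sortedness argument, not a parity bookkeeping or pairing computation.
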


\vspace{-4mm}
\section{Experiment}
\label{sec:exp}

\paragraph{\wyc{Models and Evaluations.}}
We apply our \rap\ on pre-trained LLMs: LLaMA (7B-65B)~\citep{touvron2023llama}, LLaMA2 (7B-70B)~\citep{touvron2023llama2}, LLaMA3 (8B, 70B), Mistral, Phi2 and instruction-tuned LLMs: Vicuna-v1.5 (7B-13B)~\citep{chiang2023vicuna}.
We evaluate quantized pre-trained LLMs on language generation tasks and commonsense QA tasks.
Specifically, we \wyc{assess} the perplexity 
on WikiText2~\citep{merity2016wiki} and C4~\citep{raffel2020c4} datasets, \wyc{as well as} the zero-shot accuracy on PIQA~\citep{bisk2020piqa}, ARC~\citep{clark2018arc}, BoolQ~\citep{clark2019boolq}, HellaSwag~\citep{zellers2019hellaswag}, and WinoGrande~\citep{sakaguchi2021winogrande} datasets. 
Moreover, we evaluate quantized Vicuna models on MMLU~\citep{hendrycks2020mmlu} and MT-Bench~\citep{zheng2023mt-bench} benchmarks, as well as their long-form generative capabilities on LongBench~\citep{bai2023longbench}.

\vspace{-5pt}
\paragraph{Implementation Details.}
\wyc{In line with prior studies}~\citep{liu2023qllm,shao2023omniquant,ma2024affinequant}, we apply per-token activation quantization and per-channel weight quantization. \wyc{Given that W8A8 quantization has been established as lossless in precision by SmoothQuant~\citep{xiao2023smoothquant}, our primary evaluation in this paper focuses on 4-bit and 6-bit quantization for weights and activations.} \wyc{As for details,} we quantize all intermediate activations, excluding the SoftMax output. \wyc{Moreover,} we have developed two types of quantized models, denoted as \Brap\ and \Blwc\,. For \rap, \wyc{we employ round-to-nearest quantization, 
using a clipping ratio of 0.9 for activations and 0.8 for weights. 
To improve weight matrix quantization, \lwc\ integrates the learnable weight clipping (LWC) technique from OmniQuant. Concretely, LWC} adjusts weights by training parameters $\gamma, \beta \in [0,1]$ to compute step size $\Delta=\frac{\gamma\max(\mathbf{X})-\beta\min(\mathbf{X})}{2^b-1}$ in Eqn.~\eqref{eq_quantization}. Notably, the 
smoothing diagonal matrix
and the learned weight clipping factor can be integrated into the quantized weights, introducing no additional computational or memory costs. More details and hyperparameters are left in Appendix~\ref{sec:appx_implementation}.

\vspace{-5pt}
\paragraph{Baselines.}
We compare with state-of-the-art (SOTA) weight-activation PTQ methods, including SmoothQuant~\citep{xiao2023smoothquant}, Outlier Supression+~\citep{wei2023outlier_supp+}, OmniQuant~\citep{shao2023omniquant}, QLLM~\citep{liu2023qllm}, AffineQuant~\citep{ma2024affinequant},
and Atom~\citep{zhao2023atom}.
For Atom, we reproduce the results with no group-wise asymmetric quantization. 

\begin{table*}[!t]
\caption{Perplexity ($\downarrow$) results under 4-bit weight-activation quantization. The results for W6A6 can be found in Table~\ref{tab:ppl-w6a6}. Atom and OmniQuant unprocessed group-query attention for LLaMA2-70B.
}
\vspace{-10pt}
\begin{center}
\resizebox{0.92\linewidth}{!}{
\begin{tabular}{c|l|l|ccccccc}
\toprule
\textbf{Dataset}           & \textbf{\#Bit}        & \multicolumn{1}{c|}{\textbf{Method}} & \textbf{1-7B} & \textbf{1-13B} & \textbf{1-30B} & \textbf{1-65B} & \textbf{2-7B} & \textbf{2-13B} & \textbf{2-70B} \\ 
\midrule
\multirow{8}{*}{WikiText2} & FP16                  &        -                              & 5.68          & 5.09           & 4.10           & 3.53           & 5.47          & 4.88           & 3.31           \\ 
\cmidrule{2-10} 
       & \multirow{7}{*}{W4A4} & SmoothQuant                          & 25.25         & 40.05          & 192.40         & 275.53         & 83.12         & 35.88          & 26.01          \\
       &                       & OmniQuant                            & 11.26         & 10.87          & 10.33          & 9.17           & 14.26         & 12.30          & NaN            \\
       &                       & AffineQuant                          & 10.28         & 10.32          & 9.35           & -              & 12.69         & 11.45          & -              \\
       &                       & QLLM                                 & 9.65          & 8.41           & 8.37           & 6.87           & 11.75         & 9.09           & 7.00           \\
       &                       & Atom                                 & 8.15          & 7.43           & 6.52           & 5.14           & 8.40          & 6.96           & NaN            \\
       &                       & \cellcolor{purple!10}\Brap           & \cellcolor{purple!10}6.40          & \cellcolor{purple!10}5.65           & \cellcolor{purple!10}4.72           & \cellcolor{purple!10}4.13           & \cellcolor{purple!10}6.28          & \cellcolor{purple!10}5.42           & \cellcolor{purple!10}3.79           \\
       &                       & \cellcolor{purple!10}\Blwc           & \cellcolor{purple!10}\textbf{6.18} & \cellcolor{purple!10}\textbf{5.47}  & \cellcolor{purple!10}\textbf{4.55}  & \cellcolor{purple!10}\textbf{3.93}  & \cellcolor{purple!10}\textbf{6.08} & \cellcolor{purple!10}\textbf{5.33}  & \cellcolor{purple!10}\textbf{3.76}  \\ 
\midrule
\multirow{8}{*}{C4}        & FP16                  &                                      & 7.08          & 6.61           & 5.98           & 5.62           & 6.97          & 6.46           & 5.52           \\ 
\cmidrule{2-10} 
       & \multirow{7}{*}{W4A4} & SmoothQuant                          & 32.32         & 47.18          & 122.38         & 244.35         & 77.27         & 43.19          & 34.61          \\
       &                       & OmniQuant                            & 14.51         & 13.78          & 12.49          & 11.28          & 18.02         & 14.55          & NaN            \\
       &                       & AffineQuant                          & 13.64         & 13.44          & 11.58          & -              & 15.76         & 13.97          & -              \\
       &                       & QLLM                                 & 12.29         & 10.58          & 11.51          & 8.98           & 13.26         & 11.13          & 8.89           \\
       &                       & Atom                                 & 10.34         & 9.57           & 8.56           & 8.17           & 10.96         & 9.12           & NaN            \\
       &                       &\cellcolor{purple!10}\Brap           &\cellcolor{purple!10}7.84          &\cellcolor{purple!10}7.16           & \cellcolor{purple!10}6.45           &\cellcolor{purple!10}6.03           &\cellcolor{purple!10}7.90          &\cellcolor{purple!10}7.05           & \cellcolor{purple!10}5.87           \\
       &                       & \cellcolor{purple!10}\Blwc           & \cellcolor{purple!10}\textbf{7.73} &\cellcolor{purple!10}\textbf{7.07}  &\cellcolor{purple!10}\textbf{6.37}  & \cellcolor{purple!10}\textbf{5.93}  & \cellcolor{purple!10}\textbf{7.79} &\cellcolor{purple!10}\textbf{7.02}  & \cellcolor{purple!10}\textbf{5.85}  \\ 
\bottomrule
\end{tabular}

}
\end{center}
\vspace{-3.5mm}
\label{tab:ppl-w4a4}
\end{table*}

\begin{table*}[!t]

\caption{Zero-shot QA ($\uparrow$) results of LLaMA1 models under 4-bit weight-activation quantization. The results for LLaMA2 models and W6A6 quantization can be found in Table~\ref{tab:qa-llama2-w4a4} ~\ref{tab:qa-llama1-w6a6}, and ~\ref{tab:qa-llama2-w6a6}.
}
\vspace{-10pt}
\begin{center}
\resizebox{0.92\linewidth}{!}{
\begin{tabular}{c|l|ccccccc}
\toprule
\textbf{Model}     & \textbf{Method}  & \textbf{PIQA}  & \textbf{ARC-E} & \textbf{ARC-C} & \textbf{BoolQ} & \textbf{HellaSwag} & \textbf{WinoGrande} & \textbf{Avg.}   \\ 
\midrule
\multirow{9}{*}{\begin{tabular}[c]{@{}c@{}}LLaMA1-7B\\ W4A4\end{tabular}}  
& FP16             & 77.47          & 52.48          & 41.46          & 73.08          & 73.00              & 67.07               & 64.09          \\ \cmidrule{2-9}
& SmoothQuant      & 49.80          & 30.40          & 25.80          & 49.10          & 27.40              & 48.00               & 38.41          \\
& OS+              & 62.73          & 39.98          & 30.29          & 60.21          & 44.39              & 52.96               & 48.43          \\
& OmniQuant        & 66.15          & 45.20          & 31.14          & 63.51          & 56.44              & 53.43               & 52.65          \\
& AffineQuant      & 69.37          & 42.55          & 31.91          & 63.73          & 57.65              & 55.33               & 53.42          \\
& QLLM             & 68.77          & 45.20          & 31.14          & -              & 57.43              & 56.67               & 51.84          \\
& Atom             & 71.44          & 47.74          & 35.49          & 67.71          & 63.89              & 55.01               & 56.88          \\
& \cellcolor{purple!10}\Brap     & \cellcolor{purple!10}\textbf{76.44} & \cellcolor{purple!10}\textbf{50.04} & \cellcolor{purple!10}\textbf{38.99} 
& \cellcolor{purple!10}\textbf{70.98} & \cellcolor{purple!10}69.39   & \cellcolor{purple!10}\textbf{64.72}  & \cellcolor{purple!10}\textbf{61.76} \\
& \cellcolor{purple!10}\Blwc     & \cellcolor{purple!10}76.22          & \cellcolor{purple!10}\textbf{50.04} & \cellcolor{purple!10}38.31          
& \cellcolor{purple!10}70.09    & \cellcolor{purple!10}\textbf{69.82}  & \cellcolor{purple!10}62.59    & \cellcolor{purple!10}61.18          \\ 
\midrule
\multirow{9}{*}{\begin{tabular}[c]{@{}c@{}}LLaMA1-13B\\ W4A4\end{tabular}} 
& FP16             & 79.10          & 59.89          & 44.45          & 68.01          & 76.21              & 70.31               & 66.33          \\ \cmidrule{2-9} 
& SmoothQuant      & 61.04          & 39.18          & 30.80          & 61.80          & 52.29              & 51.06               & 49.36          \\
& OS+              & 63.00          & 40.32          & 30.38          & 60.34          & 53.61              & 51.54               & 49.86          \\
& OmniQuant        & 69.69          & 47.39          & 33.10          & 62.84          & 58.96              & 55.80               & 54.37          \\
& AffineQuant      & 66.32          & 43.90          & 29.61          & 64.10          & 56.88              & 54.70               & 52.58          \\
& QLLM             & 71.38          & 47.60          & 34.30          & -              & 63.70              & 59.43               & 55.28          \\
& Atom             & 71.38          & 49.07          & 36.69          & 64.53          & 68.00              & 58.56               & 58.04          \\
& \cellcolor{purple!10}\Brap     & \cellcolor{purple!10}77.26          & \cellcolor{purple!10}\textbf{58.04}   & \cellcolor{purple!10}\textbf{41.55} 
& \cellcolor{purple!10}\textbf{67.55} & \cellcolor{purple!10}73.62    & \cellcolor{purple!10}\textbf{66.69}   & \cellcolor{purple!10}\textbf{64.12} \\
& \cellcolor{purple!10}\Blwc     & \cellcolor{purple!10}\textbf{77.64} & \cellcolor{purple!10}57.32            & \cellcolor{purple!10}41.21          
& \cellcolor{purple!10}66.79    & \cellcolor{purple!10}\textbf{74.12}     & \cellcolor{purple!10}65.98         &\cellcolor{purple!10}63.84          \\ 
\midrule
\multirow{9}{*}{\begin{tabular}[c]{@{}c@{}}LLaMA1-30B\\ W4A4\end{tabular}} 
& FP16             & 80.08          & 58.92          & 45.47          & 68.44          & 79.21              & 72.53               & 67.44          \\ \cmidrule{2-9} 
& SmoothQuant      & 58.65          & 35.53          & 27.73          & 60.42          & 35.56              & 48.06               & 44.83          \\
& OS+              & 67.63          & 46.17          & 34.40          & 60.70          & 54.32              & 52.64               & 52.62          \\
& OmniQuant        & 71.21          & 49.45          & 34.47          & 65.33          & 64.65              & 59.19               & 56.63          \\
& AffineQuant      & 70.84          & 49.41          & 37.12          & 70.12          & 65.53              & 58.64               & 58.61          \\
& QLLM             & 73.83          & 50.67          & 38.40          & -              & 67.91              & 58.56               & 57.87          \\
& Atom             & 71.98          & 49.07          & 40.02          & 66.85          & 70.45              & 58.64               & 59.50          \\
& \cellcolor{purple!10}\Brap     & \cellcolor{purple!10}78.56   & \cellcolor{purple!10}\textbf{56.99}  & \cellcolor{purple!10}42.32          & \cellcolor{purple!10}66.73    & \cellcolor{purple!10}76.70        & \cellcolor{purple!10}69.61      & \cellcolor{purple!10}65.15    \\
& \cellcolor{purple!10}\Blwc     & \cellcolor{purple!10}\textbf{78.73} & \cellcolor{purple!10}56.52     & \cellcolor{purple!10}\textbf{43.17} 
& \cellcolor{purple!10}\textbf{68.84} & \cellcolor{purple!10}\textbf{77.53}   & \cellcolor{purple!10}\textbf{70.96}    & \cellcolor{purple!10}\textbf{65.96} \\
\midrule
\multirow{7}{*}{\begin{tabular}[c]{@{}c@{}}LLaMA1-65B\\ W4A4\end{tabular}} 
& FP16             & 80.79          & 58.71          & 46.24          & 82.29          & 80.72              & 77.50               & 71.04          \\ \cmidrule{2-9} 
& SmoothQuant      & 64.47          & 40.44          & 29.82          & 59.38          & 39.90              & 52.24               & 47.71          \\
& OS+              & 68.06          & 43.98          & 35.32          & 62.75          & 50.73              & 54.30               & 52.52          \\
& OmniQuant        & 71.81          & 48.02          & 35.92          & 73.27          & 66.81              & 59.51               & 59.22          \\
& QLLM             & 73.56          & 52.06          & 39.68          & -              & 70.94              & 62.90               & 59.83          \\
& Atom             & 74.48          & 51.60          & 40.61          & 73.76          & 73.78              & 62.12               & 62.73          \\
& \cellcolor{purple!10}\Brap           & \cellcolor{purple!10}79.71   & \cellcolor{purple!10}57.95           &\cellcolor{purple!10}\textbf{45.05} 
& \cellcolor{purple!10}\textbf{79.82}  & \cellcolor{purple!10}78.66   &\cellcolor{purple!10}\textbf{72.29}   &\cellcolor{purple!10}\textbf{68.91} \\
& \cellcolor{purple!10}\Blwc        & \cellcolor{purple!10}\textbf{79.98}      & \cellcolor{purple!10}\textbf{58.29}   & \cellcolor{purple!10}44.80          
& \cellcolor{purple!10}77.89       & \cellcolor{purple!10}\textbf{79.22}     & \cellcolor{purple!10}72.21            & \cellcolor{purple!10}68.73          \\ 
\bottomrule
\end{tabular}
}
\end{center}
\vspace{-4mm}
\label{tab:qa-llama1-w4a4}
\end{table*}

\subsection{Main Results}

\vspace{-0.5em}
\paragraph{Quantization of LLaMA1 and LLaMA2 Models.}
We conduct a comprehensive comparison of our \rap\ with several SOTA baselines on LLaMA1 and LLaMA2 models. 
Results for W4A4 quantization are presented in this Section, while results for W6A6 quantization are provided in Appendix~\ref{sec:appx_empirical}. 
Table~\ref{tab:ppl-w4a4} indicates that our \rap~quantized models notably outperform other baselines on both the WikiText2 and C4 datasets. 
Notably, LWC technique further enhances model capacity, with our \lwc\ achieving comparable performance with FP16 models. 
Table~\ref{tab:qa-llama1-w4a4} and Table~\ref{tab:qa-llama2-w4a4} showcase the zero-shot accuracy of W4A4 quantization on Commonsense QA tasks, where \rap\ significantly improves the average accuracy. 
Our method surpasses QLLM by +9\%, and Atom by +5\% for all model sizes. 
These results demonstrate the superiority of our rotation and permutation transformation, which establishes new SOTA performance by effectively eliminating outlier features.

\begin{table*}[!t]
\caption{
Zero-shot and five-shot results on the MMLU benchmark for Vicuna-v1.5-13B under 4-bit weight-activation quantization.
The results for Vicuna-v1.5-7b can be found in Table~\ref{tab:mmlu-7b}.
}
\vspace{-10pt}
\begin{center}
\resizebox{0.95\linewidth}{!}{
\begin{tabular}{c|l|ccccc|ccccc}
\toprule
\multirow{2}{*}{\textbf{Model}}  & \multirow{2}{*}{\textbf{Method}} & \multicolumn{5}{c|}{\textbf{MMLU (0 shot) $\uparrow$}}          & \multicolumn{5}{c}{\textbf{MMLU (5 shot) $\uparrow$}}  \\ \cmidrule{3-12} 
                                 &                                  
&\textbf{STEM}  & \textbf{Hums}  & \textbf{Social} & \textbf{Others} & \textbf{Avg.}  & \textbf{STEM}   & \textbf{Hums}  & \textbf{Social} & \textbf{Others} & \textbf{Avg.}  \\ 
\midrule
\multirow{5}{*}{\begin{tabular}[c]{@{}c@{}}Vicuna-v1.5-13B\\ W4A4\end{tabular}}
& FP16                    & 43.70          & 50.48          & 62.72          & 62.74          & 54.54          & 44.96           & 51.97          & 65.26          & 62.40                              & 55.78          \\
& SmoothQuant             & 21.70          & 24.29          & 22.13          & 23.16          & 22.82          & 25.31           & 24.97          & 26.00          & 27.08                              & 25.84          \\
& OmniQuant               & 26.81          & 26.57          & 30.35          & 28.75          & 28.12          & 28.79           & 27.29          & 31.13          & 28.99                              & 29.05          \\
& Atom                    & 32.54          & 39.60          & 46.02           & 46.11         & 41.07          & 35.35           & 39.21          & 59.72           & 45.77           
           & 45.01          \\
&  \cellcolor{purple!10}\Brap          & \cellcolor{purple!10}\textbf{40.82} & \cellcolor{purple!10}46.61          & \cellcolor{purple!10}58.73          & \cellcolor{purple!10}57.59          & \cellcolor{purple!10}50.94          & \cellcolor{purple!10}40.92              & \cellcolor{purple!10}\textbf{48.78} & \cellcolor{purple!10}\textbf{60.42} & \cellcolor{purple!10}57.71                              & \cellcolor{purple!10}\textbf{51.96} \\
&  \cellcolor{purple!10}\Blwc          & \cellcolor{purple!10}40.13          & \cellcolor{purple!10}\textbf{47.48} & \cellcolor{purple!10}\textbf{58.86} & \cellcolor{purple!10}\textbf{57.83} & \cellcolor{purple!10}\textbf{51.08} 
                                 &  \cellcolor{purple!10}\textbf{41..42} & \cellcolor{purple!10}48.52          & \cellcolor{purple!10}58.73          & \cellcolor{purple!10}\textbf{57.74} & \cellcolor{purple!10}51.61          \\ 
\bottomrule
\end{tabular}
}
\end{center}
\vspace{-1.em}
\label{tab:mmlu}
\end{table*}

\begin{table*}[!t]
\caption{
Long-context generation results for 4-bit Vicuna models on the LongBench benchmark.
}
\vspace{-10pt}
\begin{center}
\resizebox{0.95\linewidth}{!}{
\begin{tabular}{c|l|ccccccccc}
\toprule
\textbf{Vicuna}                                                                 & \multicolumn{1}{c|}{\textbf{Setting}} & \textbf{Qasper} & \textbf{QMSum} & \textbf{MultiNews} & \textbf{TREC}  & \textbf{TriviaQA} & \textbf{SAMSum} & \textbf{DuReader} & \textbf{RepoBench-P} & \textbf{Avg}   \\ \midrule
\multirow{5}{*}{\begin{tabular}[c]{@{}c@{}}Vicuna-v1.5-7B\\ W4A4\end{tabular}}  
& FP16                                  & 23.27           & 21.07          & 26.91              & 66.00          & 82.59             & 41.06           & 25.53             & 48.23                & 41.83          \\
& SmoothQuant                           & 4.11            & 2.00           & 6.05               & 15.00          & 1.62              & 1.55            & 4.24              & 25.92                & 7.56           \\
& OmniQuant                             & 1.62            & 3.93           & 2.64               & 1.00           & 0.81              & 0.61            & 1.87              & 14.97                & 3.43           \\
& Atom                                  & 17.97           & 20.24          & 24.60              & 58.00          & 67.20             & 37.94           & 19.41             & 29.34                & 34.34          \\
& \cellcolor{purple!10}\textbf{DuQuant}                      & \cellcolor{purple!10}\textbf{19.98}  & \cellcolor{purple!10}\textbf{21.15} & \cellcolor{purple!10}\textbf{25.85}     & \cellcolor{purple!10}\textbf{64.00} & \cellcolor{purple!10}\textbf{78.91}    & \cellcolor{purple!10}\textbf{42.24}  & \cellcolor{purple!10}\textbf{23.15}    & \cellcolor{purple!10}\textbf{47.66}       & \cellcolor{purple!10}\textbf{40.37} \\ \midrule
\multirow{5}{*}{\begin{tabular}[c]{@{}c@{}}Vicuna-v1.5-13B\\ W4A4\end{tabular}} 
& FP16                                  & 24.41           & 21.24          & 26.53              & 68.00          & 86.81             & 41.97           & 27.57             & 43.08                & 42.45          \\
& SmoothQuant                           & 2.18            & 2.95           & 3.54               & 1.50           & 1.83              & 0.35            & 6.71              & 11.57                & 3.83           \\
& OmniQuant                             & 0.68            & 1.78           & 2.83               & 9.00           & 1.13              & 0.45            & 13.83             & 8.46                 & 4.77           \\
& Atom                                  & 17.67           & 20.23          & 23.39              & 59.00          & 80.75             & 38.72           & 21.79             & 37.31                & 37.36          \\
& \cellcolor{purple!10}\textbf{DuQuant}                      & \cellcolor{purple!10}\textbf{18.93}  & \cellcolor{purple!10}\textbf{20.72} & \cellcolor{purple!10}\textbf{26.59}     & \cellcolor{purple!10}\textbf{66.50} & \cellcolor{purple!10}\textbf{83.04}    & \cellcolor{purple!10}\textbf{42.67}  & \cellcolor{purple!10}\textbf{26.02}    & \cellcolor{purple!10}\textbf{38.09}       & \cellcolor{purple!10}\textbf{40.32} \\ 
\bottomrule
\end{tabular}
}
\end{center}
\vspace{-1.em}
\label{tab:longbench}
\end{table*}

\begin{table*}[!t]

\caption{
Perplexity and QA results of LLaMA3-8B under 4-bit/6-bit weight-activation quantization.
}
\vspace{-10pt}
\begin{center}
\resizebox{0.95\linewidth}{!}{
\begin{tabular}{c|l|ccc|ccccccc}
\toprule
\textbf{\#Bits}            & \textbf{Method}  & \textbf{WikiText2} $\downarrow$ & \textbf{C4} $\downarrow$    & \textbf{PTB} $\downarrow$   
& \textbf{PIQA}    & \textbf{ARC-E}   & \textbf{ARC-C}   & \textbf{BoolQ} & \textbf{HellaSwag} & \textbf{WinoGrande} & \textbf{Avg.} $\uparrow$   \\ 
\midrule
FP16                                                                     & -                & 6.14          & 8.88           & 9.91           & 80.85          & 77.78          & 53.41          & 81.28          & 79.16              & 72.84               & 74.22          \\ 
\midrule
\multirow{5}{*}{\begin{tabular}[c]{@{}c@{}}LLaMA3-8B\\ W6A6\end{tabular}} 
& SmoothQuant      & 7.07          & 9.57           & 11.69          & 78.94          & 75.88          & 49.49          & 77.58          & 77.39              & 70.8                & 71.68          \\
& OmniQuant        & 7.24          & 9.82           & 11.90          & 78.90          & 73.95          & 47.35          & 74.95          & 76.77              & 70.56               & 70.41          \\
& AffineQuant      & 7.35          & 9.99           & 12.30          & 78.73          & 73.32          & 46.08          & 74.59          & 77.08              & 70.88               & 70.11          \\
&\cellcolor{purple!10}\Brap & \cellcolor{purple!10}\textbf{6.27} & \cellcolor{purple!10}\textbf{8.38}  & \cellcolor{purple!10}\textbf{10.77} & \cellcolor{purple!10}\textbf{80.20} & \cellcolor{purple!10}77.27          & \cellcolor{purple!10}52.05          & \cellcolor{purple!10}\textbf{80.12} & \cellcolor{purple!10}\textbf{79.14}     & \cellcolor{purple!10}72.77               & \cellcolor{purple!10}73.59          \\
&\cellcolor{purple!10}\Blwc & \cellcolor{purple!10}\textbf{6.27} & \cellcolor{purple!10}\textbf{8.38}  & \cellcolor{purple!10}10.78          & \cellcolor{purple!10}79.71          & \cellcolor{purple!10}\textbf{77.57} & \cellcolor{purple!10}\textbf{53.07} & \cellcolor{purple!10}80.00          & \cellcolor{purple!10}78.70              & \cellcolor{purple!10}\textbf{73.09}      & \cellcolor{purple!10}\textbf{73.69} \\ 
\midrule
\multirow{6}{*}{\begin{tabular}[c]{@{}c@{}}LLaMA3-8B\\ W4A4\end{tabular}} 
& SmoothQuant      & 210.19        & 187.93         & 278.02         & 54.57          & 31.9           & 24.23          & 52.72          & 31.26              & 51.14               & 40.97          \\
& OmniQuant        & 3.64e3   & 2.80e3   & 3.09e3      & 50.22          & 26.94          & 24.57          & 37.98          & 26.55              & 50.20               & 36.08          \\
& AffineQuant      & 21.21e3  & 34.60e3  & 16.72e3     & 50.71          & 25.93          & 26.02          & 40.55          & 26.07              & 48.46               & 36.29          \\
& Atom             & 22.14         & 31.83          & 40.04          & 62.95          & 49.45          & 30.12          & 60.31          & 53.75              & 56.04               & 52.10          \\
&\cellcolor{purple!10}\Brap & \cellcolor{purple!10}8.56          & \cellcolor{purple!10}11.98          & \cellcolor{purple!10}13.66          
&\cellcolor{purple!10}75.68          & \cellcolor{purple!10}68.48          & \cellcolor{purple!10}41.81          & \cellcolor{purple!10}71.99          
&\cellcolor{purple!10}73.07              & \cellcolor{purple!10}66.22               & \cellcolor{purple!10}66.21          \\
&\cellcolor{purple!10}\Blwc & \cellcolor{purple!10}\textbf{8.06} & \cellcolor{purple!10}\textbf{11.29} & \cellcolor{purple!10}\textbf{13.19} & \cellcolor{purple!10}\textbf{76.22} & \cellcolor{purple!10}\textbf{70.41} & \cellcolor{purple!10}\textbf{43.69} & \cellcolor{purple!10}\textbf{74.34} & \cellcolor{purple!10}\textbf{73.87}     & \cellcolor{purple!10}\textbf{67.80}      & \cellcolor{purple!10}\textbf{67.72} \\ 
\bottomrule
\end{tabular}
}
\end{center}

\vspace{-1.8em}
\label{tab:llama3-8b}
\end{table*}

\vspace{-10pt}
\paragraph{Quantization of Instruction-tuned Models.}
We quantize Vicuna-v1.5~\citep{chiang2023vicuna} models to assess the generalizability of our \rap. 
Table~\ref{tab:mmlu} illustrates that our quantized models surpass the baselines across all task categories on MMLU benchmark.
For Vicuna-13B, our \lwc\ surpasses Atom by 10.01\% under zero-shot settings and 6.95\% under five-shot settings. 
Moreover, we compare our \rap\ with Atom and OmniQuant using MT-Bench and utilize GPT-4 to evaluate the answers from quantized models.
As shown in Figure~\ref{fig:mtbench}, \rap\ quantized models significantly outperform both Atom and OmniQuant in win rates.
Specifically, for Vicuna-7B, \rap\ only lost 16 and 1 times to Atom and OmniQuant, respectively, while achieving 68 and 155 wins against them.

\vspace{-10pt}
\paragraph{Evaluation of Long-context Generation.}
To further evaluate the long-text generative capabilities, we follow ~\citep{liu2024kivi,liu2024minicache,wan2024d2o} and conduct a comprehensive comparison of DuQuant against state-of-the-art baselines on the LongBench~\citep{bai2023longbench}, which includes a variety of generative tasks to provide a broader evaluation. We set the maximum sequence length to 3500 for Vicuna models, with results presented in Table~\ref{tab:longbench}.  DuQuant achieves performance comparable to FP16 models, demonstrating the effectiveness of our dual transformations. 
More detailed results on different subtasks are listed in Table~\ref{tab:longbench-7b}, ~\ref{tab:longbench-13b}.

\vspace{-3mm}
\paragraph{Quantization of LLaMA3 Models.}
LLaMA3, 
known for its superior performance in various tasks, faces significant degradation in low-bit quantization~\citep{huang2024llama3_quantized}. 
To address this, we apply our \rap\ to quantize LLaMA3-8B.
Table~\ref{tab:llama3-8b} displays the perplexity and zero-shot accuracy results. 
Notably, under W6A6 setting, our \rap\ achieves performance comparable to FP16 model. Furthermore, unlike other methods that show weaker results under W4A4 setting, our \rap\ maintains competitive performance, indicating its robustness with LLaMA3.
We attribute this success to the advanced handling of outliers achieved through dual transformations, which is not restricted to specific models.

\vspace{-2mm}
\subsection{Ablation Study}
\label{subsec:ablation}
\paragraph{Module-wise Impact.} 
We ablate four distinct operations within DuQuant: 1) only the smoothing technology like SmoothQuant; 2) one rotation following the smoothing operation; 3) a sequence of rotation, permutation, and another rotation without smoothing; and 
k4) full DuQuant approach. 
Table~\ref{tab:abl_module} shows that the smoothing operation plays a basic role in our \rap\ by shifting activation outliers to weight. 
The initial rotation significantly enhances model performance, yielding competitive PPL results.
Finally, permutation combined with a second rotation further enhances the quantized model.

\vspace{-5pt}
\begin{table*}[h]
\caption{
Influence of different components in \rap\
under 4-bit weight-activation quantization.
}
\vspace{-10pt}
\begin{center}
\resizebox{0.65\linewidth}{!}{
\begin{tabular}{cccc|cc|cc}
\toprule

\multicolumn{4}{c|}{\textbf{Modules}} & \multicolumn{2}{c|}{\textbf{LLaMA2-7B}} & \multicolumn{2}{c}{\textbf{LLaMA2-13B}}\\
\midrule
\textbf{Smooth} & \textbf{Rotation 1} & \textbf{Permutation} & \textbf{Rotation 2} & \textbf{WikiText2} $\downarrow$ & \textbf{C4} $\downarrow$ & \textbf{WikiText2} $\downarrow$ & \textbf{C4} $\downarrow$\\
\midrule
$\checkmark$ & & & &NaN & 1379.46 & 160.30 & 203.87\\
 & $\checkmark$ & & & 8.48& 10.63 & 14.32 & 21.73 \\
$\checkmark$ & $\checkmark$ & & & 7.92&10.64 & 5.96 & 7.94\\
 & $\checkmark$ &  $\checkmark$&  $\checkmark$&6.79&8.51 & 6.06 & 8.03 \\
$\checkmark$ & $\checkmark$ & $\checkmark$ & $\checkmark$  & \textbf{6.28}&\textbf{7.90}&\textbf{5.42}&\textbf{7.05} \\

\bottomrule
\end{tabular}
}
\end{center}
\vspace{-5mm}
\label{tab:abl_module}
\end{table*}

\paragraph{Influence of Normal/Massive Outliers.}
In this section, we comprehensively explore the influence of massive and normal outliers on quantization.
Notably, we observe that massive outliers primarily occur at the down-projection of the FFN module. 
To isolate their effect, we remove the rotation and permutation transformations, applying only the smoothing technique to all down-projection inputs. 
The resulting perplexity for LLaMA2-7B and LLaMA-13B showed significant degradation, presented in Table~\ref{tab:abl_outliers}. 
Conversely, when we eliminate the rotation and permutation transformations for normal outliers, the performance decrease was noticeable but less severe compared to massive outliers.
These findings indicate that: 1) massive outliers exert a more substantial impact on quantization, corroborating our claims in Section~\ref{sec:motivation}; 2) the smoothing technique alone struggles to fully mitigate the influence of outliers, particularly massive ones; and 3) our rotation and permutation methods prove highly effective against both types of outliers, leading to superior performance.

\begin{table*}[h]
\begin{minipage}{.45\textwidth}
\caption{Outliers impact on quantization.
We only apply the smooth technique on Normal and Massive outliers for W4A4 quantization.
}
\vspace{-3mm}
\label{tab:abl_outliers}
\begin{center}
\resizebox{\linewidth}{!}{
\begin{tabular}{cc|cc|cc}
\toprule
\multicolumn{2}{c|}{\textbf{Outlier Type}} & \multicolumn{2}{c|}{\textbf{LLaMA2-7B}} & \multicolumn{2}{c}{\textbf{LLaMA2-13B}}\\
\midrule
\textbf{Normal} & \textbf{Massive} & \textbf{WikiText2} $\downarrow$ & \textbf{C4} $\downarrow$ & \textbf{WikiText2} $\downarrow$ & \textbf{C4} $\downarrow$\\
\midrule
& $\checkmark$ &18.16 & 26.42  & 10.51 & 16.01\\
$\checkmark$ & & 10.88 & 13.89 & 7.87 & 10.52\\
$\checkmark$ & $\checkmark$ & \textbf{6.28} & \textbf{7.90} & \textbf{5.42} & \textbf{7.05}\\

\bottomrule
\end{tabular}
}
\end{center}
\end{minipage}
\begin{minipage}{0.55\textwidth}

\begin{center}
\captionof{figure}{GPT-4 evaluation on the MT-Bench.
}
\label{fig:mtbench}
\vspace{-7pt}
\includegraphics[scale=0.38]{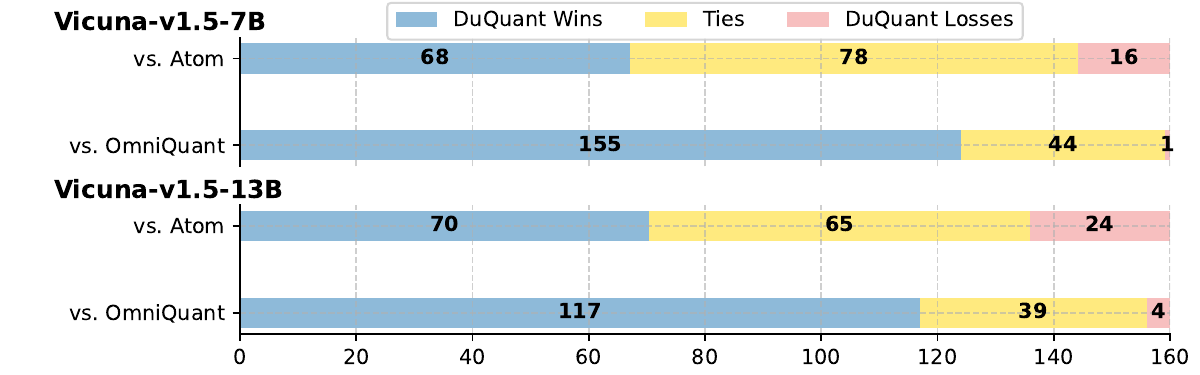}
\end{center}
\end{minipage}
\vspace{-0.8em}
\end{table*}

\paragraph{Comparison with QuaRot~\citep{ashkboos2024quarot}}\wyc{In light of the recent introduction of Hadamard rotations by QuaRot\citep{ashkboos2024quarot} to eliminate outlier features, we have undertaken a detailed analysis to highlight the key differences between our \rap\ and QuaRot. To ensure a balanced evaluation, we have re-implemented QuaRot in accordance with our quantization settings. The results demonstrate that 1) the rotation matrix constructed by DuQuant outperforms QuaRot's approach of simply selecting a randomly initialized Hadamard matrix. As depicted in Figure~\ref{fig:hadamard}, our DuQuant more effectively smooths activations than QuaRot. 
This is attributed to the prior knowledge utilized by DuQuant to accurately target the outliers; 
2) As demonstrated by the perplexity in Table~\ref{tab:abl_quarot_simple}, QuaRot employs GPTQ for their weight quantization method, whereas our \rap, with its sophisticated outlier management, attains competitive results using RTN quantization. 
The superiority proves the effectiveness of our zigzag permutation to enhance capacity.
For a more comprehensive comparison, please refer to Appendix~\ref{sec:appx_quarot}. }

\vspace{-0.8em}
\begin{table*}[h]
\begin{minipage}{0.5\textwidth}
\caption{PPL ($\downarrow$) comparison under W4A4 setting.}
\vspace{-10pt}
\label{tab:abl_quarot_simple}
\begin{center}
\resizebox{\linewidth}{!}{
\begin{tabular}{l|ccccc}
\toprule
\textbf{Method}  & \textbf{1-7B} & \textbf{1-13B} & \textbf{1-30B} & \textbf{2-7B} & \textbf{2-13B} \\ \midrule
FP16             & 5.68               & 5.09                & 4.10                & 5.47               & 4.88                \\ \midrule
QuaRot-RTN       & 7.08               & 6.57               & 5.44                & 9.66               & 6.73                \\
QuaRot-GPTQ      & 6.44               & 5.63                & 4.73                & 6.39               & 5.75                \\
\textbf{\rap}     & 6.40               & 5.65                & 4.72                & 6.28               & 5.42                \\
\textbf{\lwc} & \textbf{6.18}      & \textbf{5.47}       & \textbf{4.55}       & \textbf{6.08}      & \textbf{5.33}       \\ \bottomrule
\end{tabular}
}
\end{center}
\end{minipage}
\begin{minipage}{0.43\textwidth} 
\begin{center}
\hspace{0.5pt}
\captionof{figure}{LLaMA2-7B Attention key\_proj.
}
\vspace{-3pt}
\includegraphics[scale=0.24]{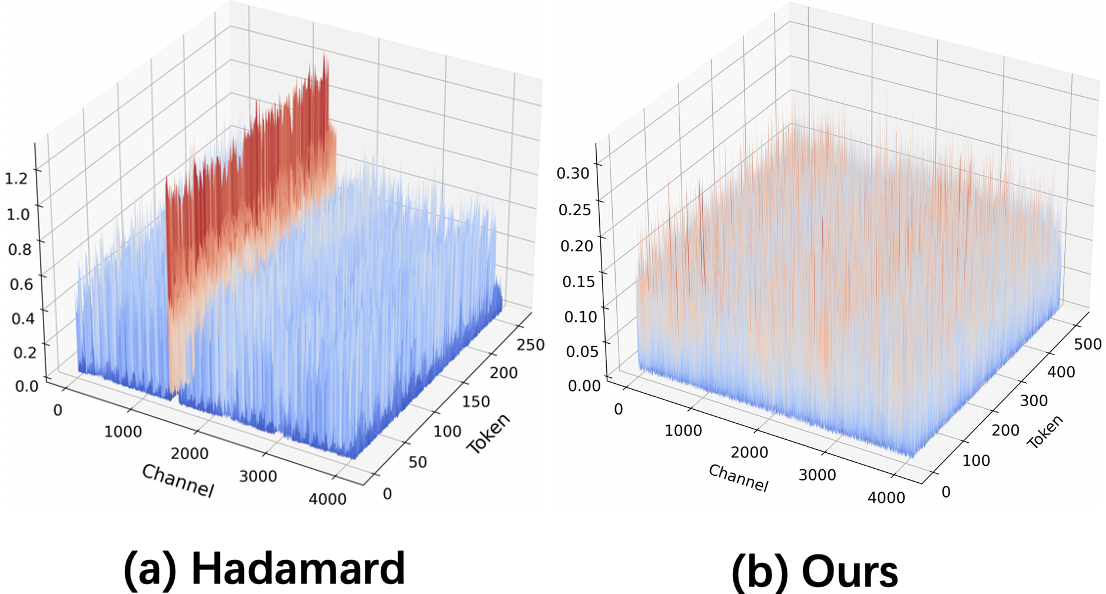}
\label{fig:hadamard}
\end{center}
\end{minipage}
\vspace{-0.5em}
\end{table*}

\vspace{-2mm}
\paragraph{Permutation Frequency.}
We conduct ablations on the rotation and permutation frequencies in DuQuant. As shown in Figure~\ref{fig:speedup},
“Perm 1” (two rotations with one permutation) achieves stronger performance compared with “Perm 0” (no permutation), while incurring an additional 8.9\% computational cost on LLaMA2-7B and 9.3\% on LLaMA2-13B compared to the W4A4 setup. Considering the approximately $2\times$ speedup and the impressive performance, these additional costs are deemed acceptable.  Further permutations, like “Perm 2,” do not improve performance and reduce inference efficiency. Consequently, “Perm 1” strikes the best balance between perplexity and inference speed, making it the optimal configuration for DuQuant.

\vspace{-2mm}
\paragraph{Inference Speedup.}
To assess the inference speedup delivered by our \rap, we adopt the measurement strategy and W4A4 kernel from \citep{ashkboos2024quarot}. 
We evaluate the layer-wise speedup of LLaMA2 models on one NVIDIA 3090 GPU, with results detailed in Table~\ref{tab:abl_speedup} and~\ref{tab:decoding_speedup}. 
We set the pre-filling sequence length at 2048 and decode for 128 steps.
In the pre-filling stage, DuQuant achieves a $2.08\times$ speedup over FP16 for LLaMA2-7B and a $2.34\times$ speedup for LLaMA2-13B, with slight variations across different batch sizes. 
In the decoding stage, batching the token generation phase yields high throughput without any downside~\citep{patel2024splitwise}. Consequently, we enlarge the batch size to 64 and the results for LLaMA2-7B in Table~\ref{tab:decoding_speedup} prove DuQuant achieves speedup comparable to QuaRot.
More detailed analyses and end-to-end speedup are available in Appendix~\ref{subsec_appx_speedup}.

\vspace{-1.em}
\begin{table*}[h]
\begin{minipage}{.5\textwidth}
\caption{Layer-wise speedup during pre-filling stage for 4-bit weight-activation quantization.}
\vspace{-5pt}
\label{tab:abl_speedup}
\centering
\resizebox{0.75\linewidth}{!}{
\begin{tabular}{ccc}
\toprule

\textbf{Model} & \textbf{Batch Size} & \textbf{Speedup} \\
\midrule
\multirow{3}{*}{\begin{tabular}[c]{c} LLaMA2-7B\\ \end{tabular}} & 1 &  $1.95\times$\\
&4 &  $2.03\times$\\
&16 &  $2.08\times$\\

\midrule

\multirow{3}{*}{\begin{tabular}[c]{c} LLaMA2-13B\\ \end{tabular}} & 1 &  $2.15\times$\\
&4 &  $2.30\times$\\
&16 &  $2.34\times$\\

\bottomrule
\end{tabular}
}
\end{minipage}
\begin{minipage}{.48\textwidth} 
\hspace{15pt}
\captionof{figure}{
Computational overhead analysis.
}
\centering
\includegraphics[width=0.75\textwidth]{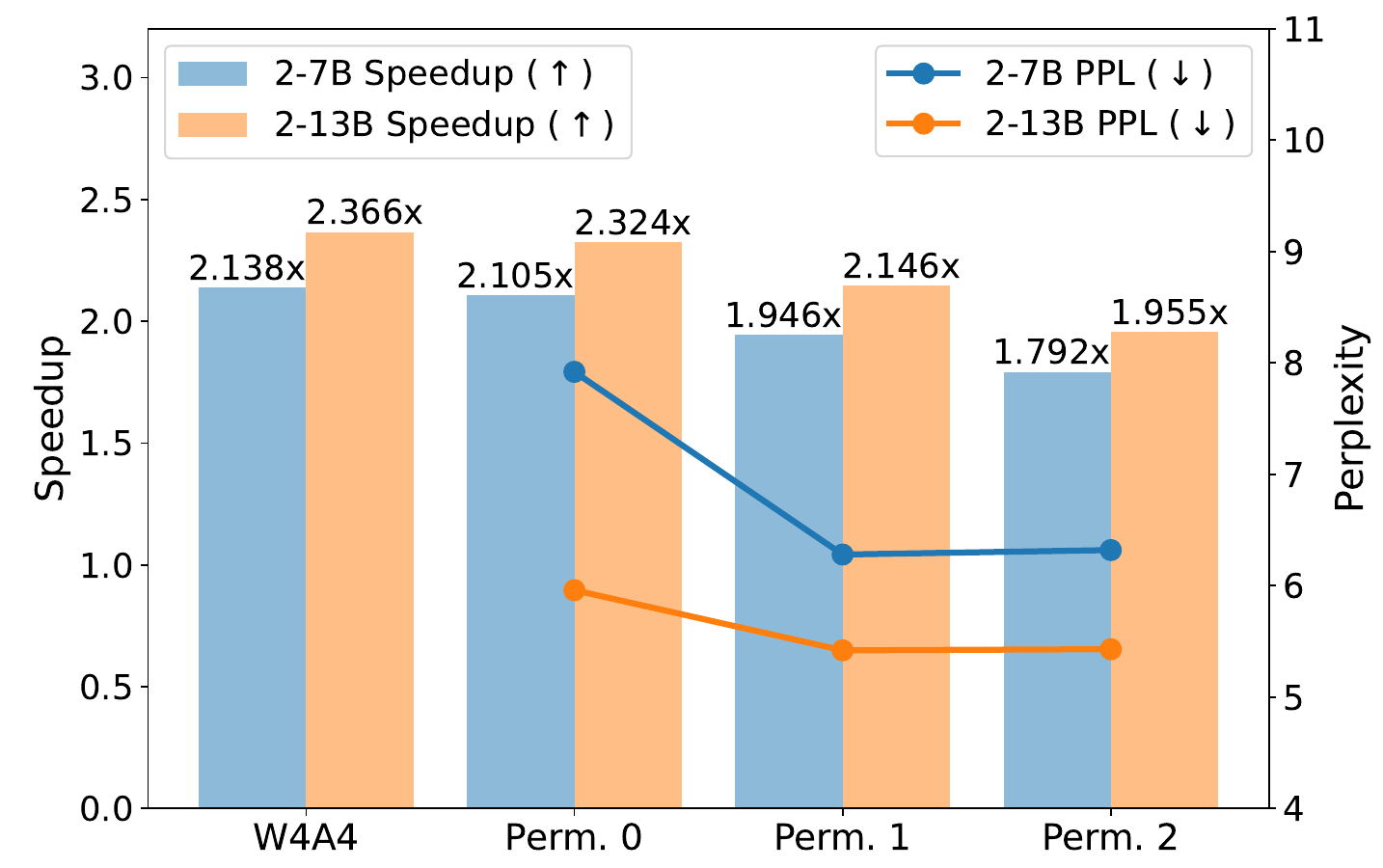}
\label{fig:speedup}
\end{minipage}
\vspace{-1.2em}
\end{table*}

\paragraph{Memory Consumption.}
We measure the peak memory usage of DuQuant with the W4A4 kernel on LLaMA2-7B using a single NVIDIA 3090 GPU. 
We process 2048 tokens for pre-filing and run 128 decoding steps, with the results listed in Table~\ref{tab:abl_memory}.
In the pre-filling stage, DuQuant, SmoothQuant, and QuaRot achieve up to 3.2$\times$ memory reduction, while QLLM performs worse. In the decoding stage, DuQuant maintains strong memory efficiency, with superior performance.

\vspace{-0.5em}
\begin{table*}[h]
\begin{minipage}{0.27\textwidth} 
\caption{Decoding stage.}
\vspace{-10pt}
\label{tab:decoding_speedup}
\begin{center}
\resizebox{\linewidth}{!}{
\begin{tabular}{l|c}
\toprule
\textbf{INT4, BS=64}   & \textbf{Speedup} \\ \midrule
FP16        & -       \\
SmoothQuant & 1.508$\times$  \\
QLLM        & OOM     \\
QuaRot      & 1.442$\times$  \\
DuQuant     & 1.321$\times$  \\ \bottomrule
\end{tabular}
}
\label{fig:hadamard}
\end{center}
\end{minipage}
\begin{minipage}{0.66\textwidth}
\caption{Peak memory usage with a batch size of 1.}
\vspace{-10pt}
\label{tab:abl_memory}
\begin{center}
\resizebox{\linewidth}{!}{
\begin{tabular}{l|cccc}
\toprule
\textbf{LLaMA2-7B} & \textbf{Pre-filling (GB)} & \textbf{Saving} & \textbf{Decoding (GB)} & \textbf{Saving} \\ \midrule
FP16               & 15.282                          & -                      & 13.638                        & -                      \\
SmoothQuant        & 4.782                           & 3.196$\times$                 & 3.890                         & 3.506$\times$                 \\
QLLM               & 5.349                           & 2.857$\times$                 & 3.894                         & 3.502$\times$                 \\
QuaRot             & 4.784                           & 3.194$\times$                 & 3.891                         & 3.505$\times$                 \\
DuQuant            & 4.786                           & 3.193$\times$                 & 3.893                         & 3.503$\times$                 \\ \bottomrule
\end{tabular}
}
\end{center}
\end{minipage}
\vspace{-1.2em}
\end{table*}

\begin{wraptable}[6]{r}{8.5cm}
\vspace{-3.8mm}
\centering
\small
\tabcaption{Quantization runtime on one NVIDIA A100.}
\vspace{-2mm}
\begin{adjustbox}{width=\linewidth}
\begin{tabular}{l|cccccc}
\toprule

\textbf{Model} & \textbf{Omni.} & \textbf{Affine.} & \textbf{QLLM}  & \textbf{Atom}  & \textbf{\rap} \\
\midrule
LLaMA2-7B & 2.0h & 9.1h & 1.1h &  20min  & 50s\\
LLaMA2-13B & 3.2h & 16.0h & 1.7h &  36min & 71s \\
LLaMA2-70B & 14.6h & 18.6h & 9.3h & 3.5h  & 270s \\
\bottomrule
\end{tabular}
\label{tab:runtime}
\end{adjustbox}
\vspace{-2mm}
\end{wraptable}

\paragraph{Runtime.}
Our \rap\ stands out for its efficiency, surpassing other baselines~\citep{shao2023omniquant,liu2023qllm,ma2024affinequant,zhao2023atom}.
The block-wise rotation ensures fast multiplication between the rotation and activation matrices.
Zigzag permutation, involving simple channel swaps, is much faster than complex algorithms like Simulated Annealing, as discussed in Appendix ~\ref{subsec:appx_permutation}.
Moreover, 
the advanced management of
outliers makes \rap\ not rely on GPTQ or gradient-based training.
Hence, \rap\ enables a rapid quantization process shown in Table~\ref{tab:runtime}, e.g., we successfully quantize LLaMA2-13B in just 71s with superior performance.
\section{Conclusion}
\vspace{-1.5mm}
In conclusion, this paper presents \rap, an innovative quantization strategy for large language models (LLMs) that effectively addresses the challenge of outlier activations. 
By integrating rotation and permutation transformations, \rap\ effectively mitigates the impacts of both massive and normal outliers.
This strategic redistribution of outliers not only simplifies the quantization process but also leads to substantial improvements in model performance.
Consequently, \rap\ establishes new state-of-the-art results in 4-bit weight-activation quantization scenarios.
This advancement enhances the deployment of efficient LLMs in resource-constrained environments.

\section*{Acknowledgements}
\vspace{-2mm}
This work is supported in part by the National Natural Science Foundation of China (Grant No. U23B2054, 62276263, and 62371411), the Research Grants Council of the Hong Kong SAR under Grant GRF 11217823 and Collaborative Research Fund C1042-23GF, InnoHK initiative, the Government of the HKSAR, Laboratory for AI-Powered Financial Technologies.


\bibliography{nips}
\bibliographystyle{plainnat}

\newpage

\appendix

\newpage

\section*{Appendix Overview}
\begin{itemize}[leftmargin=2em]
    \item Section~\ref{sec:related_work}: Related work.
    \item Section~\ref{sec:appx_proofs}: Theory proofs.
    \item Section~\ref{sec:appx_implementation}: Additional implementation details.
    \item Section~\ref{sec:appx_empirical}: More empirical results.
    \item Section~\ref{sec:appx_ablation}: More detailed ablation studies.
    \item Section~\ref{sec:appx_quarot}: Detailed comparison with QuaRot.
    \item Section~\ref{sec:appx_algo}: Algorithm for rotation matrix.
    \item Section~\ref{sec:appx_limit}: Limitations and broader impacts.
    \item Section~\ref{sec:appx_vis}: More visualization examples.
\end{itemize}

\renewcommand\thefigure{\Alph{section}\arabic{figure}}
\renewcommand\thetable{\Alph{section}\arabic{table}}
\setcounter{figure}{0}
\setcounter{table}{0}

\section{Related Work}
\label{sec:related_work}

\subsection{Network Quantization}
Neural networks achieve great success in recent years~\citep{lin2024mope,xuslog,zhang2024mathverse,ma2024happy,wu2022adversarial,guo2024crossmae,wu2024meta}, quantization~\citep{ashkboos2023quik,xi2023training4bit,yang2024mgrq,zhao2023atom,wang2024quest,zhao2024dataset} is a widely utilized technique aimed at reducing model size and memory usage. 
Research in this area generally falls into two main categories: quantization-aware training (QAT)~\citep{zhang2020ternarybert,bai2020binarybert,tao2022compression_gpt} and post-training quantization (PTQ)~\citep{nagel2020adaround,wu2024ptq4dit,zhao2024mixdq}. 
QAT involves training quantized model weights using additional data, often with the assistance of a straight-through estimator (STE)~\citep{bengio2013ste}. 
However, the computational cost associated with QAT poses challenges, particularly for large language models (LLMs) with millions of parameters, which necessitate significant amounts of data for retraining~\citep{liu2023llm-qat,du2024bitdistiller,chen2024efficientqat}.
In contrast, PTQ has gained popularity for LLMs~\citep{wan2023efficient,heo2023rethinking_channel,yuan2023rptq,wang2024svd,liu2024kivi,he2024zipcache} due to its efficient approach, involving the training of quantized models using a small amount of data, known as calibration data~\citep{frantar2022gptq}. 
However, PTQ often leads to significant performance degradation, especially when employing low-bit settings ~\citep{frantar2022gptq,tseng2024quip_sharp,shang2023pb-llm,li2024eval_quant}. 
Consequently, our work focuses on enhancing the performance of low-bit PTQ quantized models.

\subsection{Post Training Quantization of LLM}
Post-training quantization for LLMs can be categorized into weight-only quantization~\citep{lin2023awq,dettmers2023spqr,kim2023squeezellm,li2024normtweaking} and weight-activation quantization~\citep{yao2022zeroquant,xiao2023smoothquant,zhao2023atom}. We focus on 4-bit weight-activation quantization due to the actual speedup it provides with low-bit quantization kernels \citep{ashkboos2023quik}.
Quantizing LLMs faces challenges due to activation outlier features persisting across different tokens and layers~\citep{dettmers2022llm_int8, xiao2023smoothquant}. 
Some approaches~\citep{dettmers2022llm_int8,zhao2023atom} retain a small portion of crucial outlier channels at high precision (e.g., INT8), which poses challenges to hardware compatibility and leads to additional memory footprint.
Other methods~\citep{xiao2023smoothquant,wei2023outlier_supp+,shao2023omniquant} attempt to shift quantization difficulty from activation to weight channels. 
However, the learnable equivalent transformation in OmniQuant~\citep{shao2023omniquant} and the affine transform matrix in AffineQuant~\citep{ma2024affinequant} exhibit instability as discussed in Section~\ref{sec:motivation}.
The channel disassembly and assembly in QLLM~\citep{liu2023qllm}, coupled with LoRA-tuning, incur significant time costs.
Notably, these methods demonstrate poor performance under W4A4 quantization. 
We attribute this degradation to the ineffective handling of outlier features,
especially massive outliers.
Hence, we propose \rap\ to effectively eliminate outlier features through rotation matrices and channel permutation, achieving state-of-the-art performance. 
In contrast with QuaRot~\citep{ashkboos2024quarot} also utilizing Hadamard matrices to enhance weight-activation quantization, our approach uniquely incorporates knowledge about the actual outlier channels. Furthermore, unlike QuaRot, which relies on GPTQ~\citep{frantar2022gptq} for weight quantization, our permutation transformation has been proven helpful and efficient, facilitating a faster quantization process.
The more detailed analysis and comparison with QuaRot are left in Appendix~\ref{sec:appx_quarot}.
In addition, unlike RPTQ~\citep{yuan2023rptq} and SKVQ~\citep{duanmu2024skvq}, which use channel reordering to cluster similar activations, our method employs Permutation transformations with a fundamentally different goal: to evenly distribute outliers across blocks. 
This balanced distribution is crucial for enabling effective secondary rotations, ultimately leading to smoother activations that facilitate easier quantization.

\section{Proofs}
\label{sec:appx_proofs}
\setcounter{theorem}{0}
\begin{theorem}[Rotation]
\label{theorem:rotation}
For the activation input $\mathbf{X}\in \mathbb{R}^{T\times C_{in}}$, $\hat{\mathbf{R}}\in \mathbb{R}^{2^n\times 2^n}$ is a diagonal block matrix constructed as per Eqn.~(\ref{eq:diagonal-rotation}). For a specific block $b_i$, let $O_j(\cdot)$ represent the maximum outlier of the $j$-th dimension $d_j$ within the input. Then, we can deduce that,
\begin{equation}
  \underset{1\le j\le 2^n}{\max} ~~O_j(\mX_{b_i} \hat{\mathbf{R}}_{b_i} ) \le  \underset{1\le j\le 2^n}{\max}~~ O_j(\mX_{b_i}).
\end{equation}
\end{theorem}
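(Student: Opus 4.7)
The plan centers on the greedy construction of $\hat{\mathbf{R}}_{b_i} = \mathbf{R}^1 \cdots \mathbf{R}^n$. Since $n$ is defined as the argmin of $\max_{i,j}|(\mathbf{X}\mathbf{R}^1\cdots\mathbf{R}^k)_{ij}|$ over $k\in[1{:}N]$, the attained value at $k=n$ is no larger than the value at any single prefix product in the search range. Thus it suffices to establish the desired inequality for one well-chosen candidate; the natural one is $k=1$, since $\mathbf{R}^1$ is specifically engineered from the dominant outlier coordinate $d^{(1)}$, and the subsequent factors $\mathbf{R}^2,\ldots,\mathbf{R}^n$ can only further reduce the max by the greedy rule.

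To analyze $\mathbf{R}^1 = \mathbf{E}_{d^{(1)}}\tilde{\mathbf{R}}\mathbf{Q}\mathbf{E}_{d^{(1)}}$ acting on each token row of $\mathbf{X}_{b_i}$, I would trace the effect step by step. Writing $M := \max_j O_j(\mathbf{X}_{b_i})$, attained at column $d^{(1)}$: the outer swap $\mathbf{E}_{d^{(1)}}$ relocates this coordinate to position $1$; the uniform first row of $\tilde{\mathbf{R}}$ (entries $\pm 1/\sqrt{2^n}$) then disperses that outlier contribution across all output coordinates with per-coordinate amplitude at most $M/\sqrt{2^n}$; the block-structured $\mathbf{Q}=\mathrm{diag}(1,\mathbf{Q}')$ leaves position $1$ untouched and orthogonally rotates the remaining $2^n-1$ coordinates, preserving their $\ell_2$ norm; and the final $\mathbf{E}_{d^{(1)}}$ only relabels indices. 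I would then bound each output entry by combining the $1/\sqrt{2^n}$ attenuation of the outlier channel with a Cauchy--Schwarz estimate on the contributions of the non-outlier coordinates, using the orthonormality of the columns of $\tilde{\mathbf{R}}$ together with the coordinate-wise bound $|y_i|\le M$.

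The main obstacle will be ruling out constructive interference: generic orthogonal transformations can increase the entrywise maximum when several moderately large non-outlier entries align through a single column. The argument therefore has to exploit the specific design of $\tilde{\mathbf{R}}$, in which the attenuation by $1/\sqrt{2^n}$ is concentrated exactly on the coordinate that carries the mass $M$; a natural case split is between tokens that actually realize the block-wise max $M$ (where the outlier dominates and is reduced) and tokens whose entries are all strictly below $M$ (where orthogonality alone keeps each new coordinate bounded by $M$ via Cauchy--Schwarz). If the direct per-token bound on $\mathbf{R}^1$ turns out to be loose, a fallback is to invoke the greedy rule inductively: each subsequent $\mathbf{R}^k$ is constructed from the current maximum of $\mathbf{X}\mathbf{R}^1\cdots\mathbf{R}^{k-1}$ in exactly the same way, so the sequence of maxima over $k$ is bounded above by $\max_j O_j(\mathbf{X}_{b_i})$, and the argmin $n$ inherits the bound.
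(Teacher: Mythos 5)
Your reduction via the greedy $\arg\min$ to a single well-chosen prefix (ideally $k=1$) is sound and is implicitly what the paper does, but the step you reduce to---that multiplying $\mX_{b_i}$ by the constructed $\mathbf{R}^1$ cannot raise the entrywise maximum---is exactly the hard part, and the tools you name do not deliver it. Write $M=\max_j O_j(\mX_{b_i})$ and trace a token row $z$ (after the swap $\mathbf{E}_{d^{(1)}}$) through $\tilde{\mathbf{R}}$: each output coordinate is $w_j = z_1/\sqrt{2^n} + \sum_{i\ge 2} z_i \tilde{R}_{ij}$. Your proposed estimate---the $1/\sqrt{2^n}$ attenuation on the outlier channel plus Cauchy--Schwarz on the rest with only the coordinate-wise bound $|z_i|\le M$---yields $|w_j| \le M/\sqrt{2^n} + M\sqrt{2^n-1}\,\sqrt{1-2^{-n}} = M\sqrt{2^n}$, which is far weaker than the required $M$. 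Your case split does not rescue this: a row whose entries are all just below $M$ can be sent by an orthogonal matrix with a near-uniform column to a coordinate of size nearly $M\sqrt{2^n}$, so the claim that ``orthogonality alone keeps each new coordinate bounded by $M$ via Cauchy--Schwarz'' is false. The fallback is circular: the greedy rule minimizes over $k\in[1:N]$, so every candidate already contains $\mathbf{R}^1$; you still must exhibit some rotated prefix whose maximum does not exceed the original one, which is precisely the unproven claim, and iterating the same construction does not supply it.

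The paper closes this step differently: it restricts attention to the worst case in which the columnwise outliers $O_j(\mX_{b_i})$ are co-located in a single token row, writes the post-rotation maximum as a unit-norm combination $O_1(\mX_{b_i}\mathbf{E}_{d^{(1)}})\cdot\tfrac{1}{\sqrt{2^n}} + \sum_{m\ge2}O_m(\mX_{b_i}\mathbf{E}_{d^{(1)}})\,\delta_m$ with $\tfrac{1}{2^n}+\sum_m\delta_m^2=1$, and then charges the interference term to the dominant outlier, using that the swap has placed the largest value $O_1$ in the first column so that $\sum_{m\ge 2}O_m\delta_m$ is bounded by $O_1\cdot\tfrac{\sqrt{2^n}-1}{\sqrt{2^n}}$, giving a total of $O_1$. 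In other words, the paper's argument controls the cross term by comparing the non-dominant $O_m$ to $O_1$ and exploiting the pinned coefficient $1/\sqrt{2^n}$, not by a blanket Cauchy--Schwarz with the crude bound $M$ on every coordinate. To repair your plan you would need an analogous quantitative control on the combination weights $\delta_m$ (or on the magnitudes of the non-dominant entries) strong enough to keep the cross term below $O_1(\sqrt{2^n}-1)/\sqrt{2^n}$; as written, your bounding step fails and the theorem does not follow from your outline.
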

\begin{proof}    
    In the case of a specific block $b_i$, the potential maximum value $O_j(\mathbf{X}_{b_i}\hat{\mathbf{R}}_{b_i})$, where $\mathbf{X}_{b_i} \in \mathbb{R}^{T \times 2^n}$, can be achieved by ensuring that the different $j$-th column outliers $O_j(\mX_{b_i})$ are located in the same $t$-th row of the activation input $\mX_{b_i}$. This can be formally defined as follows,
\begin{equation*}
\begin{aligned}
        &\underset{1\le j\le 2^n}{\max} ~~O_j(\mX_{b_i} \hat{\mathbf{R}}_{b_i} ) = O_1(\mX_{b_i}\mathbf{E}_{d^{(1)}})\cdot \frac{1}{\sqrt{M}} + O_2(\mX_{b_i}\mathbf{E}_{d^{(1)}})\cdot \delta_1 + ...+ O_M(\mX_{b_i}\mathbf{E}_{d^{(1)}})\delta_M,\\
    &\textrm{where}~ M=2^n ~~ \textrm{and}~~ \|\frac{1}{2^n} + \delta_2^2 + ...+ \delta_M^2\| =1,  \qquad~~~   (\hat{\mR}_{b_i} \textrm{is orthogonal})
\end{aligned}
\end{equation*}
Without loss of generality, let's assume that $\delta_i\ge 0$ and define $m:= \arg\max_i~\delta_i^2$. Then we can derive that $\delta_m \ge \frac{1}{\sqrt{2^n}}$. Consequently, we can obtain the following inequality,
\begin{equation}
\begin{aligned}
     \underset{1\le j\le 2^n}{\max} ~~O_j(\mX_{b_i} \hat{\mathbf{R}}_{b_i} ) &= O_1(\mX_{b_i}\mathbf{E}_{d^{(1)}})\cdot \frac{1}{\sqrt{2^n}} + O_2(\mX_{b_i}\mathbf{E}_{d^{(1)}})\cdot \delta_1 + ...+ O_M(\mX_{b_i}\mathbf{E}_{d^{(1)}})\delta_M \\
    & \le O_1(\mX_{b_i}\mathbf{E}_{d^{(1)}})\cdot \frac{1}{\sqrt{2^n}} + (O_2(\mX_{b_i}\mathbf{E}_{d^{(1)}})+...+O_M(\mX_{b_i}\mathbf{E}_{d^{(1)}}))\delta_m\\
    & \stackrel{(1)}{\le} O_1(\mX_{b_i}\mathbf{E}_{d^{(1)}})\cdot \frac{1}{\sqrt{2^n}} + O_1(\mX_{b_i}\mathbf{E}_{d^{(1)}})\cdot \frac{\sqrt{2^n}-1}{\sqrt{2^n}} = O_1(\mX_{b_i}\mathbf{E}_{d^{(1)}})\\
    & = \underset{1\le j\le 2^n}{\max}~~ O_j(\mX_{b_i}).
\end{aligned}
\end{equation}
The inequality (1) holds because the switch matrix $\mathbf{E}_{d^{(1)}}$ has swap the largest outliers $O_1(\mX_{b_i})$ in the first column, i.e., $O_1(\mX_{b_i}\mathbf{E}_{d^{(1)}})> \underset{2\le m\le M}{\max} O_m(\mX_{b_i}\mathbf{E}_{d^{(1)}})$.

\end{proof}

\begin{theorem}[Zigzag Permutation]
\label{theorem:zigzag}
For the activation input $\mathbf{X}\in \mathbb{R}^{T\times C_{in}}$, it can be divided into $K$ blocks, where $K=C_{in}/2^n$. Let $O_j$ denote the max outlier of the dimension $d_j$ in $\mX$, the reordered outliers from large to small is expressed as $O^{(1)}, O^{(2)},..., O^{(C_{in})}$. Moreover, the $M_{b_i}$ represents the mean value of all $O_j$ in the $i$-th block, $i=1,2,...,K$. Let $\delta := \max \{|O^{(i+1)} - O^{(i)}|\}, i=1,2,...,C_{in}\!\!-\!1$. Then, following the zigzag permutation described in Section~\ref{subsec:method_rap}, the mean value $M_{b_i}$ within each $i$-th block consistently satisfies,
\begin{equation}
    M_{b_i} \le O^{(1)} + \frac{(2^nK-1)(2^{n-1}-1)}{2^n}\delta, \qquad i=1, 2, 3,...,K.
\end{equation}
\end{theorem}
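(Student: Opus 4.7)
My plan is to reduce the theorem to an elementary calculation on ranks, combining a Lipschitz-type bound on $\{O^{(j)}\}$ with the structural symmetry of the zigzag assignment.

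First, I would make the zigzag combinatorics explicit. For each round $r \in \{1, \ldots, 2^n\}$, block $b_i$ receives the sorted outlier of rank
\begin{equation*}
    p_r(i) \;=\; \begin{cases}(r-1)K + i, & r \text{ odd},\\ rK - i + 1, & r \text{ even}.\end{cases}
\end{equation*}
Pairing consecutive odd--even rounds yields the identity $p_{2k-1}(i) + p_{2k}(i) = (4k-2)K + 1$, which does not depend on $i$. This $i$-independence is the structural heart of the argument: the zigzag forces every block to share the same cumulative rank, so any bound derived only from the ranks $p_r(i)$ is automatically uniform across blocks.

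Second, I would convert $\delta$ into pointwise control. Since $\{O^{(j)}\}$ is sorted in decreasing order with consecutive gaps at most $\delta$, a telescoping argument gives $O^{(1)} - O^{(j)} \leq (j-1)\delta$ for every $j$, hence in particular $O^{(p_r(i))} \leq O^{(1)}$. Writing $M_{b_i} = 2^{-n}\sum_{r=1}^{2^n} O^{(p_r(i))}$ and applying this bound termwise reduces the claim to an estimate involving only the ranks $p_r(i)$; the resulting slack can then be absorbed into a nonnegative quantity of the form $\frac{\delta}{2^n}\sum_r (p_r(i)-1)$.

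Third, I would finish with arithmetic. Summing the pair identity over $k = 1, \ldots, 2^{n-1}$ evaluates the relevant rank sum in closed form and, after dividing by $2^n$, produces a bound of the form $M_{b_i} \leq O^{(1)} + c(n,K)\,\delta$ with $c(n,K)$ independent of $i$; the remaining task is to check that, after simplification, $c(n,K) = \frac{(2^n K - 1)(2^{n-1} - 1)}{2^n}$. The main obstacle I expect is bookkeeping in this last step: separating odd from even rounds, keeping the offsets in $(p_r(i) - 1)$ consistent, and verifying that the constant collapses to precisely $(2^{n-1} - 1)/2^n$ rather than a cruder $1/2$. Once the pair-sum identity is in hand, however, no new ideas are required, and the rest is elementary algebra on sums of the form $\sum k$ and $\sum 1$ over $2^{n-1}$ terms.
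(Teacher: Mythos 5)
Your step (1) is exactly the paper's starting point: the paper likewise enumerates the contents of block $b_i$ as the ranks $2mK+i$ and $2(m+1)K-i+1$ for $m=0,\dots,2^{n-1}-1$, and your pair identity $p_{2k-1}(i)+p_{2k}(i)=(4k-2)K+1$ is in fact a cleaner justification of the uniformity over $i$, which the paper only asserts with a ``similarly''. The gap is in steps (2)--(3). The telescoped inequality $O^{(1)}-O^{(j)}\le (j-1)\delta$ bounds $O^{(j)}$ from \emph{below}, so it cannot be applied termwise to bound $M_{b_i}$ from above; the upper bound you actually invoke, $O^{(p_r(i))}\le O^{(1)}$, is just the definition of the descending order and carries no $\delta$-slack. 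More importantly, the constant you plan to extract does not come out as claimed: summing your pair identity gives $\sum_{r=1}^{2^n}\bigl(p_r(i)-1\bigr)=2^{n-1}\bigl(2^nK-1\bigr)$, hence $\frac{1}{2^n}\sum_{r}\bigl(p_r(i)-1\bigr)=\frac{2^nK-1}{2}$; the ``cruder $1/2$'' is precisely what this route yields, and it is strictly larger than the theorem's coefficient $\frac{(2^nK-1)(2^{n-1}-1)}{2^n}$. So the inequality your program can deliver, $M_{b_i}\le O^{(1)}+\frac{2^nK-1}{2}\delta$, is weaker than and does not imply the stated bound; no bookkeeping in step (3) will make the average excess rank collapse to the finer constant.

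To close the argument, note that your own observation $O^{(p_r(i))}\le O^{(1)}$ already gives $M_{b_i}\le O^{(1)}$, and since $\delta\ge 0$ and $2^{n-1}-1\ge 0$ the claimed inequality follows immediately -- but then the rank arithmetic plays no role and the specific coefficient is not ``derived''. The paper keeps the $\delta$-dependence by a different grouping of the block's entries: for $b_1$ it separates the top element $O^{(1)}$, the $2^{n-1}-1$ consecutive-rank pairs $(2mK,\,2mK+1)$, and the final rank $2^nK$, and estimates the pairs' excess ranks, which is exactly where the factor $(2^{n-1}-1)(2^nK-1)=\sum_{m=1}^{2^{n-1}-1}(4mK-1)$ originates. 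If you want to reproduce that coefficient rather than merely the (trivially valid) inequality, you must treat the rank-$1$ entry and the last entry separately as the paper does, instead of averaging the excess rank uniformly over all $2^n$ entries of the block.
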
 

\begin{proof}
According to the zigzag permutation described in Section~\ref{subsec:method_rap}, and considering the reordered outliers $O^{(1)}, O^{(2)},..., O^{(C_{in})}$, 
we can redistribute the channels they occupy (i.e., $O^{(i)}_c$)across different blocks. 
Specifically, for the $i$-th block, it contains the following channels,
\begin{equation*}
\begin{aligned}
     b_i & = \{O^{2mK+i}_c, O^{2(m+1)K-i+1}| m=0,1,...,2^{n-1}-1\}\\
     & =\{O^{(i)}_c, O^{(2K-i+1)}_c, ..., O^{2^nK+K}_c, O^{(2^nK+K+1)}  \} \\
\end{aligned}
\end{equation*}
Since $\delta = \max \{|O^{(i+1)} - O^{(i)}|\}, i=1,2,...,C_{in}\!\!-\!1$, then we can get
\begin{equation}
\begin{aligned}
    M_{b_1} &= \frac{1}{2^n} \{ O^{(1)} + O^{(2k)}+...+O^{2^n-2k+1}+O^{2^nK}  \}  \\
    & \le \frac{1}{2^n} \{ O^{(1)} +(4K-1)\delta +(8K-1)\delta +[(2^{n+1}-4)K-1 ]\delta +(2^nK-1)\delta \} \\
    & \le O^{(1)} + \frac{(2^nK-1)(2^{n-1}-1)}{2^n}\delta.
\end{aligned}
\end{equation}
Similarly, we can deduce that all $M_{b_i}$ ($i=1, 2, ..., K$) share the same upper bound after applying our zigzag permutation.
\end{proof}

\section{Additional Implementation Details}
\label{sec:appx_implementation}

In this work, all experiments are done on NVIDIA RTX 3090 GPUs for small-scale models and NVIDIA A100 GPUs for large-scale models. 
We set sequence length to 2048 for all evaluation tasks.

For calibration data, following \citep{shao2023omniquant, ma2024affinequant, liu2023qllm}, we randomly select 128 sampled sequences from the WikiText2 dataset, with the sequence length of 2048.
For rotation and permutation transformations, the rotation block size $2^n$ is set to 128, and maximum greedy search steps $N$ equals 256. We adopt once permutation times for efficiency. We conduct detailed ablation studies in Appendix~\ref{subsec_appx_rotation},~\ref{subsec:appx_permutation},~\ref{subsec:appx_cali}.

Regarding quantization details, for multiplications between activations in MSA, such as Query and Key, attention outputs and Value, we apply a Hadamard rotation matrix for rapid and straightforward processing. 
A Hadamard matrix is an orthogonal and symmetric matrix filled with elements $\pm 1/\sqrt{2^n}$.
For smooth parameter $\alpha$, we set it to 0.6 for \rap\ and 0.5 \lwc. We clip the maximum activation values in all projection blocks, and the clipping ratio is set to 0.9. For \rap\, we also clip the maximum values in weight matrices, with a clipping ratio of 0.8. For LWC, we keep the same default epoch numbers of 20, batch size as 1, learning rate as 5e-3, and zero weight decay, as~\citep{shao2023omniquant}.\

\section{More Empirical Results}
\label{sec:appx_empirical}
\vspace{-10pt}

\begin{table*}[!h]
\caption{Zero-shot common-sense QA ($\uparrow$) results of LLaMA2 models under 4-bit WA quantization. 
}
\vspace{-10pt}
\begin{center}
\resizebox{\linewidth}{!}{
\begin{tabular}{c|l|ccccccc}
\toprule
\textbf{Model}                                                             & \textbf{Method}  & \textbf{PIQA}  & \textbf{ARC-E} & \textbf{ARC-C} & \textbf{BoolQ} & \textbf{HellaSwag} & \textbf{WinoGrande} & \textbf{Avg.}  \\ 
\midrule
\multirow{9}{*}{\begin{tabular}[c]{@{}c@{}}LLaMA2-7B\\ W4A4\end{tabular}}  
& FP16             & 76.88          & 53.54          & 40.53          & 71.13          & 72.96              & 67.25               & 63.72          \\ \cmidrule{2-9} 
& SmoothQuant      & 60.17          & 35.23          & 27.13          & 57.92          & 37.08              & 49.57               & 44.52          \\
& OS+              & 63.11          & 39.10          & 28.84          & -              & 51.30              & 45.93               & 45.66          \\
& OmniQuant        & 65.61          & 44.28          & 30.38          & 62.66          & 53.51              & 51.85               & 51.38          \\
& AffineQuant      & 67.36          & 44.23          & 31.91          & 62.75          & 54.38              & 55.18               & 52.64          \\
& QLLM             & 67.68          & 44.40          & 30.89          & -              & 58.45              & 56.59               & 51.60          \\
& Atom             & 69.75          & 47.35          & 34.22          & 62.42          & 63.21              & 56.51               & 55.58          \\
& \cellcolor{purple!10}\Brap & \cellcolor{purple!10}75.24          & \cellcolor{purple!10}\textbf{51.89} & \cellcolor{purple!10}36.77          & \cellcolor{purple!10}67.86          & \cellcolor{purple!10}69.54              & \cellcolor{purple!10}62.12               & \cellcolor{purple!10}60.57          \\
& \cellcolor{purple!10}\Blwc & \cellcolor{purple!10}\textbf{75.68} & \cellcolor{purple!10}50.00          & \cellcolor{purple!10}\textbf{37.46} & \cellcolor{purple!10}\textbf{69.24} & \cellcolor{purple!10}\textbf{69.74}     & \cellcolor{purple!10}\textbf{63.93}      & \cellcolor{purple!10}\textbf{61.01} \\ 
\midrule
\multirow{9}{*}{\begin{tabular}[c]{@{}c@{}}LLaMA2-13B\\ W4A4\end{tabular}} 
& FP16             & 79.05          & 57.91          & 44.20          & 69.02          & 76.60              & 69.69               & 66.08          \\ \cmidrule{2-9} 
& SmoothQuant      & 62.30          & 40.28          & 30.72          & 60.49          & 42.24              & 49.96               & 47.67          \\
& OS+              & 64.47          & 41.46          & 32.17          & -              & 59.30              & 51.38               & 49.76          \\
& OmniQuant        & 69.80          & 47.22          & 33.79          & 65.47          & 59.34              & 55.49               & 55.19          \\
& AffineQuant      & 68.55          & 47.64          & 32.34          & 66.97          & 59.97              & 55.07               & 55.09          \\
& QLLM             & 70.46          & 48.48          & 34.39          & -              & 62.80              & 55.41               & 54.31          \\
& Atom             & 71.16          & 50.89          & 37.88          & 63.91          & 67.51              & 58.40               & 58.29          \\
& \cellcolor{purple!10}\Brap & \cellcolor{purple!10}\textbf{77.31} & \cellcolor{purple!10}55.60          & \cellcolor{purple!10}41.55          & \cellcolor{purple!10}\textbf{66.61} & \cellcolor{purple!10}\textbf{73.68}     & \cellcolor{purple!10}\textbf{66.06}      & \cellcolor{purple!10}\textbf{63.47} \\
& \cellcolor{purple!10}\Blwc & \cellcolor{purple!10}77.26          & \cellcolor{purple!10}\textbf{56.23} & \cellcolor{purple!10}\textbf{42.15} & \cellcolor{purple!10}65.78          & \cellcolor{purple!10}\textbf{73.68}     & \cellcolor{purple!10}65.43               & \cellcolor{purple!10}63.42          \\ \midrule
\multirow{7}{*}{\begin{tabular}[c]{@{}c@{}}LLaMA2-70B\\ W4A4\end{tabular}} 
& FP16             & 81.01          & 59.68          & 47.95          & 75.87          & 80.87              & 76.95               & 70.39          \\ \cmidrule{2-9} 
& SmoothQuant      & 64.09          & 41.84          & 32.00          & 58.56          & 54.21              & 51.07               & 50.30          \\
& OS+              & 66.16          & 42.72          & 34.90          & -              & 56.93              & 52.96               & 50.73          \\
& QLLM             & 74.27          & 50.59          & 37.20          & -              & 71.62              & 59.43               & 58.62          \\
& \cellcolor{purple!10}\Brap & \cellcolor{purple!10}79.27          & \cellcolor{purple!10}58.16          & \cellcolor{purple!10}46.07          & \cellcolor{purple!10}70.46          & \cellcolor{purple!10}79.21              & \cellcolor{purple!10}\textbf{74.19}      & \cellcolor{purple!10}67.89          \\
& \cellcolor{purple!10}\Blwc & \cellcolor{purple!10}\textbf{79.82} & \cellcolor{purple!10}\textbf{59.76} & \cellcolor{purple!10}\textbf{46.76} & \cellcolor{purple!10}\textbf{73.12} & \cellcolor{purple!10}\textbf{79.38}     & \cellcolor{purple!10}74.11               & \cellcolor{purple!10}\textbf{68.83} \\ 
\bottomrule
\end{tabular}
}
\end{center}

\vspace{-0.7em}
\label{tab:qa-llama2-w4a4}
\end{table*}

\paragraph{Zero-shot QA Results for 4-bit LLaMA2 Models.}
Table~\ref{tab:qa-llama2-w4a4} showcases the zero-shot commonsense QA results for INT4 quantized LLaMA2 models. 
Our \rap\ method excels across various model sizes and datasets, demonstrating state-of-the-art performance in commonsense reasoning tasks. 
For example, \rap\ outperforms Atom by 5.43\% for the LLaMA2-7B model and by 5.18\% for the LLaMA2-13B model.
In contrast to Atom~\citep{zhao2023atom}, which relies on GPTQ for weight quantization and maintains 128 channels at INT8, thereby increasing memory usage, our method offers a rapid and more efficient weight-activation quantization solution through Rotation and Permutation.

\paragraph{MMLU Results for 4-bit Vicuna-v1.5-7B.}

Vicuna-v1.5 models~\cite{chiang2023vicuna}, fine-tuned from LLaMA-2 models using high-quality user-shared conversations, are considered state-of-the-art chatbots. Table~\ref{tab:mmlu-7b} displays the INT4 quantization results for Vicuna-v1.5-7B on the MMLU benchmarks. 
In comparison to SmoothQuant, OmniQuant, and Atom, our \rap\ method exhibits the smallest performance decline and maintains competitive capacities in both zero-shot and five-shot settings. 
These results demonstrate the effectiveness of \rap\ in generalizing to instruction-tuned models.

\begin{table*}[!h]
\caption{Zero-shot and five-shot results on the MMLU benchmark for quantized Vicuna-v1.5-7B.
}
\vspace{-10pt}
\begin{center}
\resizebox{\linewidth}{!}{
\begin{tabular}{c|l|ccccc|ccccc}
\toprule
\multirow{2}{*}{\textbf{Model}}  & \multirow{2}{*}{\textbf{Method}} & \multicolumn{5}{c|}{\textbf{MMLU (0 shot) $\uparrow$}}          & \multicolumn{5}{c}{\textbf{MMLU (5 shot)} $\uparrow$}  \\ \cmidrule{3-12} 
                                 &                                  
&\textbf{STEM}  & \textbf{Hums}  & \textbf{Social} & \textbf{Others} & \textbf{Avg.}  & \textbf{STEM}   & \textbf{Hums}  & \textbf{Social} & \textbf{Others} & \textbf{Avg.}  \\ 
\midrule
\multirow{5}{*}{\begin{tabular}[c]{@{}c@{}}Vicuna-v1.5-7B\\ W4A4\end{tabular}}
& FP16                    & 38.70          & 45.42          & 56.13          & 56.01          & 49.07          & 39.56           & 45.76          & 58.14          & 57.43                              & 50.22          \\
& SmoothQuant             & 27.10          & 25.16          & 27.40          & 26.71          & 26.59          & 25.22           & 25.06          & 24.99          & 26.68                              & 25.49          \\
& OmniQuant               & 27.20          & 24.00          & 27.14          & 25.08          & 25.86          & 29.39           & 24.95          & 27.30          & 24.80                              & 26.39          \\
& Atom                    & 30.28          & 34.73          & 38.97          & 40.56          & 36.14          & 31.97           & 35.37          & 40.46           & 40.81           
          & 37.15          \\
&  \cellcolor{purple!10}\Brap          & \cellcolor{purple!10}\textbf{35.85} & \cellcolor{purple!10}\textbf{42.66} & \cellcolor{purple!10}\textbf{52.03} & \cellcolor{purple!10}\textbf{51.23} & \cellcolor{purple!10}\textbf{45.44}  & \cellcolor{purple!10}\textbf{38.90} & \cellcolor{purple!10}\textbf{42.57} & \cellcolor{purple!10}51.80                      & \cellcolor{purple!10}51.23             & \cellcolor{purple!10}\textbf{46.13} \\
&  \cellcolor{purple!10}\Blwc          & \cellcolor{purple!10}35.18          & \cellcolor{purple!10}41.91          & \cellcolor{purple!10}51.28          & \cellcolor{purple!10}50.52          & \cellcolor{purple!10}44.72          & \cellcolor{purple!10}37.34               & \cellcolor{purple!10}42.21          & \cellcolor{purple!10}\textbf{53.07} & \cellcolor{purple!10}\textbf{51.76}                     & \cellcolor{purple!10}46.10          \\ 
\bottomrule
\end{tabular}
}
\end{center}

\vspace{-0.7em}
\label{tab:mmlu-7b}
\end{table*}

\paragraph{Long-context Evaluation Results.}
LongBench~\citep{bai2023longbench} is proposed to access the long-context generation ability of LLMs, which covers several key long-text application scenarios. We evaluate the 4-bit quantized Vicuna models on five different tasks. Specifically, Qasper, MultiFieldQA, and NarrativeQA (F1 score) are Single-Document QA tasks; DuReader (Rouge-L score) and 2WikiMultihopQA (F1 score) are Multi-Document QA tasks; QMSum, GovReport (F1 score)  and MultiNews (Rouge-L score) are Summarization tasks; TREC (Accuracy CLS), TriviaQA (F1 score), and SAMSum (Rouge-L score) are Few-shot Learning tasks; and RepoBench-P (similarity score) is Code Completion task. Table~\ref{tab:longbench-7b} and Table~\ref{tab:longbench-13b} show that our DuQuant outperforms other baselines by a clear margin, maintaining the ability for long context generation tasks compared with FP16 models.

\begin{table*}[!h]
\caption{Long-context generation results on the LongBench benchmark for 4-bit Vicuna-v1.5-7B.
}
\vspace{-10pt}
\begin{center}
\resizebox{\linewidth}{!}{
\begin{tabular}{l|ccccccc}
\toprule
\textbf{Vicuna-v1.5-7B} & \textbf{RepoBench-P} & \textbf{MultiFieldQA-en} & \textbf{GovReport}   & \textbf{MultiNews} & \textbf{DuReader} & \textbf{2WikiMQA} & \textbf{TriviaQA} \\ 
\midrule
FP16               & 48.23                & 38.30                    & 27.93                & 26.91              & 25.53             & 18.02             & 82.59             \\
SmoothQuant        & 25.92                & 4.66                     & 2.62                 & 6.05               & 4.24              & 2.02              & 1.62              \\
OmniQuant          & 14.97                & 2.30                     & 2.51                 & 2.64               & 1.87              & 0.48              & 0.81              \\
Atom               & 29.34                & 31.15                    & 23.60                & 24.60              & 19.41             & \textbf{17.10}             & 67.20             \\
\cellcolor{purple!10}\textbf{DuQuant}   & \cellcolor{purple!10}\textbf{47.66}       & \cellcolor{purple!10}\textbf{35.62}           & \cellcolor{purple!10}\textbf{25.66}       & \cellcolor{purple!10}\textbf{25.85}     & \cellcolor{purple!10}\textbf{23.15}    & \cellcolor{purple!10}15.09    & \cellcolor{purple!10}\textbf{78.91}    \\ 
\midrule
\textbf{Vicuna-v1.5-7B} & \textbf{QMSum}       & \textbf{MultiFieldQA-zh} & \textbf{NarrativeQA} & \textbf{Qasper}    & \textbf{SAMSum}   & \textbf{TREC}     & \textbf{Avg}      \\ 
\midrule
FP16               & 21.07                & 32.56                    & 14.96                & 23.27              & 41.06             & 66.00             & 35.88             \\
SmoothQuant        & 2.00                 & 0.88                     & 1.75                 & 4.11               & 1.55              & 15.00             & 5.57              \\
OmniQuant          & 3.93                 & 1.40                     & 1.10                 & 1.62               & 0.61              & 1.00              & 2.71              \\
Atom               & 20.24                & 21.55                    & 11.57                & 17.97              & 37.94             & 58.00             & 29.21             \\
\cellcolor{purple!10}\textbf{DuQuant}   & \cellcolor{purple!10}\textbf{21.15}       & \cellcolor{purple!10}\textbf{29.56}           & \cellcolor{purple!10}\textbf{11.31}       & \cellcolor{purple!10}\textbf{19.98}     & \cellcolor{purple!10}\textbf{42.24}    & \cellcolor{purple!10}\textbf{64.00}    & \cellcolor{purple!10}\textbf{33.86}             \\ 
\bottomrule
\end{tabular}
}
\end{center}

\vspace{-0.7em}
\label{tab:longbench-7b}
\end{table*}

\begin{table*}[!h]
\caption{Long-context generation results on the LongBench benchmark for 4-bit Vicuna-v1.5-13B.
}
\vspace{-10pt}
\begin{center}
\resizebox{\linewidth}{!}{
\begin{tabular}{l|ccccccc}
\toprule
\textbf{Vicuna-v1.5-13B} & \textbf{RepoBench-P} & \textbf{MultiFieldQA-en} & \textbf{GovReport}   & \textbf{MultiNews} & \textbf{DuReader} & \textbf{2WikiMQA} & \textbf{TriviaQA} \\ 
\midrule
FP16                & 43.08                & 42.69                    & 28.43                & 26.53              & 27.57             & 29.40             & 86.81             \\
SmoothQuant         & 11.57                & 1.64                     & 2.81                 & 3.54               & 6.71              & 1.39              & 1.83              \\
OmniQuant           & 8.46                 & 4.32                     & 0.74                 & 2.83               & 13.83             & 0.75              & 1.13              \\
Atom                & 37.31                & 37.31                    & 19.34                & 23.39              & 21.79             & 15.16             & 80.75             \\
\cellcolor{purple!10}\textbf{DuQuant}    & \cellcolor{purple!10}\textbf{38.09}       & \cellcolor{purple!10}\textbf{44.12}           & \cellcolor{purple!10}\textbf{26.97}       & \cellcolor{purple!10}\textbf{26.59}     & \cellcolor{purple!10}\textbf{26.02}    & \cellcolor{purple!10}\textbf{22.07}    & \cellcolor{purple!10}\textbf{83.04}    \\ \midrule
\textbf{Vicuna-v1.5-13B} & \textbf{QMSum}       & \textbf{MultiFieldQA-zh} & \textbf{NarrativeQA} & \textbf{Qasper}    & \textbf{SAMSum}   & \textbf{TREC}     & \textbf{Avg}      \\ \midrule
FP16                & 21.24                & 40.44                    & 15.41                & 24.41              & 41.97             & 68.00             & 40.64             \\
SmoothQuant         & 2.95                 & 0.82                     & 0.97                 & 2.18               & 0.35              & 1.50              & 4.21              \\
OmniQuant           & 1.78                 & 1.06                     & 0.62                 & 0.68               & 0.45              & 9.00              & 4.58              \\
Atom                & 20.23                & 28.02                    & 8.81                 & 17.67              & 38.72             & 59.00             & 33.58             \\
\cellcolor{purple!10}\textbf{DuQuant}    & \cellcolor{purple!10}\textbf{20.72}       & \cellcolor{purple!10}\textbf{30.85}           & \cellcolor{purple!10}\textbf{13.36}       & \cellcolor{purple!10}\textbf{18.93}     & \cellcolor{purple!10}\textbf{42.67}    & \cellcolor{purple!10}\textbf{66.50}    & \cellcolor{purple!10}\textbf{38.13}    \\ 
\bottomrule
\end{tabular}
}
\end{center}

\vspace{-0.7em}
\label{tab:longbench-13b}
\end{table*}

\begin{wraptable}[6]{r}{7cm}
\vspace{-3.8mm}
\centering
\small
\tabcaption{More results on MT-Bench.}
\vspace{-2mm}
\begin{adjustbox}{width=\linewidth}
\begin{tabular}{l|ccc}
\toprule
\textbf{DuQuant v.s. FP16} & \textbf{Former Win} & \textbf{Tie} & \textbf{Former Loss} \\ \midrule
Vicuna-v1.5-7B       & 36                  & 56           & 68                   \\
Vicuna-v1.5-13B      & 43                  & 53           & 64                   \\ \bottomrule
\end{tabular}
\label{tab:mt-bench-fp16}
\end{adjustbox}
\vspace{-2mm}
\end{wraptable}

\paragraph{Comparison with FP16 models on MT-Bench.}
We conducted additional comparisons using the MT-Bench between our INT4 quantized models and the FP16 models. As shown in Table~\ref{tab:mt-bench-fp16}, for both 7B and 13B models, our DuQuant performs comparably to FP16, which further underscores the effectiveness of dual transformations in maintaining high accuracy even with reduced precision.

\paragraph{Results for 4-bit Mistral-7B and Phi2-2.8B.}
We have extended the application of DuQuant to include Mistral~\citep{jiang2023mistral} and Phi2~\citep{javaheripi2023phi} under 4-bit WA quantization. From Table~\ref{tab:mistral-phi},
we can observe that DuQuant largely surpasses other baselines, particularly with Mistral-7B. Regarding the Phi2-2.8B model, it often experiences instability in matrix multiplication between queries and values, leading to overflow issues and posing great challenges to quantization. However, while DuQuant may not perform as well as FP models, it still significantly outperforms other baselines.
In addition, we have visualized the massive outliers in the down projection layer of the Mistral-7B model and the feature space after our dual transformations. These visualizations are shown in Figure~\ref{fig:outlier_mistral_7B}. 
It can be observed that our DuQuant perfectly eliminates these outliers.
These results underscore the effectiveness of our dual transformation approach in addressing massive outliers across various types of LLMs.

\begin{table*}[!h]
\caption{
Perplexity results of Mistral-7B and Phi2-2.8B under 4-bit weight-activation quantization.
}
\vspace{-10pt}
\begin{center}
\resizebox{\linewidth}{!}{
\begin{tabular}{c|l|cc|c|l|cc}
\toprule
\textbf{Model}                                                             & \textbf{Method}  & \textbf{WikiText2} & \textbf{C4}   & \textbf{Model}             & \textbf{Method}  & \textbf{WikiText2} & \textbf{C4}    \\ \midrule
\multirow{6}{*}{\begin{tabular}[c]{@{}c@{}}Mistral-7B\\ W4A4\end{tabular}} 
& FP16             & 5.25               & 7.75          
& \multirow{6}{*}{\begin{tabular}[c]{@{}c@{}}Phi2-2.8B\\ W4A4\end{tabular}} & FP16             & 9.71               & 12.76          \\
& RTN              & 306.26             & 300.07        &                            & RTN              & 230.59             & 253.79         \\
& SmoothQuant      & 100.59             & 158.02        &                            & SmoothQuant      & 63.84              & 83.24          \\
& OmniQuant        & 5490.31            & 6094.82       &                            & OmniQuant        & NaN                & NaN            \\
& Atom             & 8.65               & 12.43         &                            & Atom             & 35.72              & 41.26          \\
& \cellcolor{purple!10}\textbf{DuQuant} & \cellcolor{purple!10}\textbf{5.86}      & \cellcolor{purple!10}\textbf{8.48} &                            & \cellcolor{purple!10}\textbf{DuQuant} & \cellcolor{purple!10}\textbf{20.65}     & \cellcolor{purple!10}\textbf{22.49} \\ 
\bottomrule
\end{tabular}
}
\end{center}
\vspace{-0.7em}
\label{tab:mistral-phi}
\end{table*}

\paragraph{Results for 4-bit LLaMA3-70B.}
As LLaMA3 models have proven to be sensitive to quantization, we apply our \rap\ to the LLaMA3-70B and present the results in Table~\ref{tab:llama3-70b}. 
Due to time constraints, we do not add learnable weight clipping. 
The results demonstrate that our DuQuant-quantized models outperform SmoothQuant by 12.9\% on Commonsense QA tasks and significantly reduce perplexity across the WikiText2, C4, and PTB datasets. 
These improvements underscore the robustness of our \rap\ method when applied to the LLaMA3-70B model.

\begin{table*}[!h]
\caption{
Perplexity and QA results of LLaMA3-70B under 4-bit weight-activation quantization.
}
\vspace{-10pt}
\begin{center}
\resizebox{\linewidth}{!}{
\begin{tabular}{c|l|ccc|ccccccc}
\toprule
\textbf{\#Bits}    & \textbf{Method}  & \textbf{WikiText2} $\downarrow$ & \textbf{C4} $\downarrow$    & \textbf{PTB} $\downarrow$   
& \textbf{PIQA}    & \textbf{ARC-E}   & \textbf{ARC-C}   & \textbf{BoolQ} & \textbf{HellaSwag} & \textbf{WinoGrande} & \textbf{Avg.} $\uparrow$   \\ 
\midrule
FP16                                                                       & -               & 2.9           & 6.9          & 8.2          & 82.4          & 86.9           & 60.3           & 85.2           & 84.9               & 80.6                & 80.1          \\ 
\midrule
\multirow{2}{*}{\begin{tabular}[c]{@{}c@{}}LLaMA3-70B\\ W4A4\end{tabular}} 
& SmoothQuant     & 9.6           & 16.9         & 17.7         & 76.9          & 75.8           & 43.5           & 64.4           & 62.9               & 58.9                & 63.7      \\
&\cellcolor{purple!10}\Brap  & \cellcolor{purple!10}\textbf{4.9} & \cellcolor{purple!10}\textbf{8.3} & \cellcolor{purple!10}\textbf{8.7} & \cellcolor{purple!10}\textbf{81.1} & \cellcolor{purple!10}\textbf{80.8}  & \cellcolor{purple!10}\textbf{57.3}  & \cellcolor{purple!10}\textbf{81.3} & \cellcolor{purple!10}\textbf{82.1}  & \cellcolor{purple!10}\textbf{77.0} &\cellcolor{purple!10}\textbf{76.6}\\ 
\bottomrule
\end{tabular}
}
\end{center}
\vspace{-0.7em}
\label{tab:llama3-70b}
\end{table*}

\paragraph{W6A6 Quantization Results.}
To thoroughly evaluate the effectiveness of our \rap\ models, we conduct comprehensive assessments under the W6A6 quantization setting. 
The perplexity results for language generation tasks are displayed in Table~\ref{tab:ppl-w6a6}, while the zero-shot accuracy for Commonsense QA tasks is detailed in Tables~\ref{tab:qa-llama1-w6a6} and \ref{tab:qa-llama2-w6a6}. 
Our findings reveal that \rap\ not only surpasses other baselines but also achieves nearly lossless performance with FP16 models in these tasks.
Interestingly, in several instances, \rap\ slightly outperforms \lwc\ . 
This suggests that the Rotation and Permutation transformations alone are sufficient to create highly competitive quantized models under W6A6 settings, without the need for additional enhancements such as the learnable weight clipping (LWC) technique. 
These outcomes highlight the exceptional versatility and robustness of \rap\ across various quantization scenarios, confirming its potential as a leading solution in post-training quantization for large language models.

\begin{table*}[!h]
\caption{Preplexity ($\downarrow$) results on the WikiText2 and C4 datasets under 6-bit WA quantization. 
}
\vspace{-10pt}
\begin{center}
\resizebox{0.92\linewidth}{!}{
\begin{tabular}{c|l|l|ccccccc}
\toprule
\textbf{Dataset}           & \textbf{\#Bit}        & \multicolumn{1}{c|}{\textbf{Method}} & \textbf{1-7B} & \textbf{1-13B} & \textbf{1-30B} & \textbf{1-65B} & \textbf{2-7B} & \textbf{2-13B} & \textbf{2-70B} \\ \midrule
\multirow{7}{*}{WikiText2} & FP16                  & -                                    & 5.68          & 5.09           & 4.10           & 3.53           & 5.47          & 4.88           & 3.31           \\ \cmidrule{2-10} 
   & \multirow{6}{*}{W6A6} & SmoothQuant                          & 6.03          & 5.42           & 4.55           & 3.88           & 6.20          & 5.18           & 3.69           \\
   &                       & OmniQuant                            & 5.96          & 5.28           & 4.38           & 3.75           & 5.87          & 5.14           & 3.71           \\
   &                       & QLLM                                 & 5.89          & 5.28           & 4.30           & 3.73           & 5.91          & 5.08           & 3.55           \\
   &                       & \cellcolor{purple!10}\Brap           & \cellcolor{purple!10}\textbf{5.73} & \cellcolor{purple!10}\textbf{5.13}  & \cellcolor{purple!10}\textbf{4.14}  & \cellcolor{purple!10}\textbf{3.57}  & \cellcolor{purple!10}\textbf{5.53} & \cellcolor{purple!10}\textbf{4.92}  & \cellcolor{purple!10}\textbf{3.35}  \\
   &                       & \cellcolor{purple!10}\Blwc           & \cellcolor{purple!10}5.74          & \cellcolor{purple!10}\textbf{5.13}  & \cellcolor{purple!10}4.15           & \cellcolor{purple!10}3.60           & \cellcolor{purple!10}\textbf{5.53} & \cellcolor{purple!10}\textbf{4.92}  & \cellcolor{purple!10}\textbf{3.35}  \\ 
   \midrule
\multirow{7}{*}{C4}        & FP16                  & -                                    & 7.08          & 6.61           & 5.98           & 5.62           & 6.97          & 6.46           & 5.52           \\ \cmidrule{2-10} 
   & \multirow{6}{*}{W6A6} & SmoothQuant                          & 7.47          & 6.97           & 6.34           & 5.99           & 7.76          & 6.76           & 5.88           \\
   &                       & OmniQuant                            & 7.43          & 6.84           & 6.22           & 5.82           & 7.48          & 6.74           & 5.91           \\
   &                       & QLLM                                 & 7.34          & 6.82           & 6.17           & 5.80           & 7.31          & 6.71           & 5.76           \\
   &                       & \cellcolor{purple!10}\Brap           & \cellcolor{purple!10}\textbf{7.12} & \cellcolor{purple!10}\textbf{6.64}  & \cellcolor{purple!10}\textbf{6.00}  & \cellcolor{purple!10}\textbf{5.64}  & \cellcolor{purple!10}\textbf{7.03} & \cellcolor{purple!10}\textbf{6.50}  & \cellcolor{purple!10}\textbf{5.54}  \\
   &                       & \cellcolor{purple!10}\Blwc           & \cellcolor{purple!10}7.13          & \cellcolor{purple!10}\textbf{6.64}  & \cellcolor{purple!10}6.01           & \cellcolor{purple!10}\textbf{5.64}  & \cellcolor{purple!10}\textbf{7.03} & \cellcolor{purple!10}\textbf{6.50}  & \cellcolor{purple!10}\textbf{5.54}  \\ 
\bottomrule
\end{tabular}
}
\end{center}
\vspace{-0.7em}
\label{tab:ppl-w6a6}
\end{table*}

\begin{table*}[!t]
\caption{Zero-shot common-sense QA ($\uparrow$) results of LLaMA1 models under 6-bit WA quantization. 
}
\vspace{-10pt}
\begin{center}
\resizebox{\linewidth}{!}{
\begin{tabular}{c|l|ccccccc}
\toprule
\textbf{Model}     & \textbf{Method}  & \textbf{PIQA}  & \textbf{ARC-E} & \textbf{ARC-C} & \textbf{BoolQ} & \textbf{HellaSwag} & \textbf{WinoGrande} & \textbf{Avg}   \\ 
\midrule
\multirow{8}{*}{\begin{tabular}[c]{@{}c@{}}LLaMA1-7B\\ W6A6\end{tabular}}  
& FP16             & 77.47          & 52.48          & 41.46          & 73.08          & 73.00              & 67.07               & 64.09                     \\ \cmidrule{2-9}
& SmoothQuant      & 76.75          & 51.64          & 39.88          & 71.75          & 71.67              & 65.03               & 62.81                     \\
& OS+              & 76.82          & 51.35          & 41.13          & 72.08          & 71.42              & 65.98               & 61.13                     \\
& OmniQuant        & 77.09          & 51.89          & 40.87          & 72.53          & 71.61              & 65.03               & 63.17                     \\
& AffineQuant      & 76.60          & 52.29          & 40.63          & 72.65          & 71.29              & 63.85               & 62.89                     \\
& QLLM             & 77.26          & 52.02          & 41.04          & -              & 71.40              & 65.19               & 61.38                      \\
& \cellcolor{purple!10}\Brap & \cellcolor{purple!10}\textbf{77.53} & \cellcolor{purple!10}51.47          & \cellcolor{purple!10}\textbf{41.13} & \cellcolor{purple!10}\textbf{72.78} & \cellcolor{purple!10}\textbf{72.76}     & \cellcolor{purple!10}66.69               & \cellcolor{purple!10}63.73                     \\
& \cellcolor{purple!10}\Blwc & \cellcolor{purple!10}77.42          & \cellcolor{purple!10}\textbf{52.65} & \cellcolor{purple!10}40.53          & \cellcolor{purple!10}71.53          & \cellcolor{purple!10}72.64              & \cellcolor{purple!10}\textbf{67.72}      & \cellcolor{purple!10}\textbf{63.75}            \\ \midrule
\multirow{7}{*}{\begin{tabular}[c]{@{}c@{}}LLaMA1-13B\\ W6A6\end{tabular}} 
& FP16             & 79.10          & 59.89          & 44.45          & 68.01          & 76.21              & 70.31               & 66.33                     \\ \cmidrule{2-9}
& SmoothQuant      & 77.91          & 56.60          & 42.40          & 64.95          & 75.36              & 69.36               & 64.43                     \\
& OS+              & 78.29          & 56.90          & 43.09          & 66.98          & 75.09              & 69.22               & 64.92                     \\
& OmniQuant        & 78.40          & 57.28          & 42.91          & 67.00          & 75.82              & 68.27               & 64.95                     \\
& QLLM             & 77.91          & 57.70          & 42.92          & -              & 75.02              & 69.14               & 64.54                     \\
& \cellcolor{purple!10}\Brap & \cellcolor{purple!10}78.62          & \cellcolor{purple!10}\textbf{59.51} & \cellcolor{purple!10}\textbf{44.03} & \cellcolor{purple!10}\textbf{68.44} & \cellcolor{purple!10}\textbf{75.98}     & \cellcolor{purple!10}\textbf{70.08}      & \cellcolor{purple!10}\textbf{66.11}            \\
& \cellcolor{purple!10}\Blwc & \cellcolor{purple!10}\textbf{79.16} & \cellcolor{purple!10}59.39          & \cellcolor{purple!10}43.69          & \cellcolor{purple!10}68.10          & \cellcolor{purple!10}75.81              & \cellcolor{purple!10}69.06               & \cellcolor{purple!10}{65.87} \\ \midrule
\multirow{7}{*}{\begin{tabular}[c]{@{}c@{}}LLaMA1-30B\\ W6A6\end{tabular}} 
& FP16             & 80.08          & 58.92          & 45.47          & 68.44          & 79.21              & 72.53               & 67.44                     \\ \cmidrule{2-9}
& SmoothQuant      & 77.14          & 57.61          & 42.91          & 65.56          & 78.07              & 69.92               & 65.20                     \\
& OS+              & 80.14          & 58.92          & 45.05          & 68.02          & 77.96              & 71.98               & 67.01                     \\
& OmniQuant        & 79.81          & 58.79          & 45.22          & 68.38          & 78.95              & 72.21               & 67.23                     \\
& QLLM             & 79.65          & 58.08          & 44.11          & -              & 78.38              & \textbf{73.24}      & 66.69                     \\
& \cellcolor{purple!10}\Brap & \cellcolor{purple!10}79.43          & \cellcolor{purple!10}\textbf{59.34} & \cellcolor{purple!10}44.54          & \cellcolor{purple!10}\textbf{70.15} & \cellcolor{purple!10}78.89              & \cellcolor{purple!10}72.77               & \cellcolor{purple!10}\textbf{67.52}            \\
& \cellcolor{purple!10}\Blwc & \cellcolor{purple!10}\textbf{80.09} & \cellcolor{purple!10}57.95          & \cellcolor{purple!10}\textbf{45.05} & \cellcolor{purple!10}68.72          & \cellcolor{purple!10}\textbf{79.17}     & \cellcolor{purple!10}73.09               & \cellcolor{purple!10}67.35                     \\ \midrule
\multirow{7}{*}{\begin{tabular}[c]{@{}c@{}}LLaMA1-65B\\ W6A6\end{tabular}} 
& FP16             & 80.79          & 58.71          & 46.24          & 82.29          & 80.72              & 77.50               & 71.04                     \\ \cmidrule{2-9}
& SmoothQuant      & 80.25          & 57.92          & 45.50          & 80.22          & 80.18              & 74.76               & 69.80                     \\
& OS+              & 79.67          & 55.68          & 45.22          & 80.02          & 78.03              & 73.95               & 68.76                     \\
& OmniQuant        & 81.01          & 58.12          & 46.33          & 80.64          & 79.91              & 75.69               & 70.28                     \\
& QLLM             & 80.14          & 57.79          & 45.05          & -              & 79.74              & 74.59               & 67.46                      \\
& \cellcolor{purple!10}\Brap & \cellcolor{purple!10}\textbf{80.96} & \cellcolor{purple!10}\textbf{59.09} & \cellcolor{purple!10}\textbf{46.76} & \cellcolor{purple!10}\textbf{82.20} & \cellcolor{purple!10}\textbf{80.68}     & \cellcolor{purple!10}\textbf{77.27}      & \cellcolor{purple!10}\textbf{71.16}            \\
& \cellcolor{purple!10}\Blwc & \cellcolor{purple!10}80.63          & \cellcolor{purple!10}58.00          & \cellcolor{purple!10}46.50          & \cellcolor{purple!10}82.08          & \cellcolor{purple!10}80.49              & \cellcolor{purple!10}76.87               & \cellcolor{purple!10}70.76                     \\ 
\bottomrule
\end{tabular}

}
\end{center}
\vspace{-0.7em}
\label{tab:qa-llama1-w6a6}
\end{table*}

\begin{table*}[!h]
\caption{Zero-shot common-sense QA ($\uparrow$) results of LLaMA2 models under 6-bit WA quantization. 
}
\vspace{-10pt}
\begin{center}
\resizebox{\linewidth}{!}{
\begin{tabular}{c|l|ccccccc}
\toprule
\textbf{Model}     & \textbf{Method}  & \textbf{PIQA} & \textbf{ARC-E} & \textbf{ARC-C} & \textbf{BoolQ} & \textbf{HellaSwag} & \textbf{WinoGrande} & \textbf{Avg}    \\ \midrule
\multirow{7}{*}{\begin{tabular}[c]{@{}c@{}}LLaMA2-7B\\ W6A6\end{tabular}}  
& FP16             & 76.88         & 53.54          & 40.53          & 71.13          & 72.96              & 67.25               & 63.72                     \\ \cline{2-9} 
& SmoothQuant      & 75.57         & 53.62          & 39.93          & 69.54          & 71.76              & 66.14               & 62.76                     \\
& OS+              & 76.22         & 52.74          & 40.70          & -              & 71.89              & 65.19               & 61.35                     \\
& OmniQuant        & 76.55         & 53.83          & 40.96          & 68.75          & 55.89              & 65.59               & 60.26                     \\
& QLLM             & 77.48         & 52.99          & 39.33          & -              & 71.38              & 65.98               & 61.43                     \\
& \cellcolor{purple!10}\Brap & \cellcolor{purple!10}76.99         & \cellcolor{purple!10}52.99          & \cellcolor{purple!10}40.87          & \cellcolor{purple!10}70.40          & \cellcolor{purple!10}72.49              & \cellcolor{purple!10}67.32               & \cellcolor{purple!10}\textbf{63.51}            \\
& \cellcolor{purple!10}\Blwc & \cellcolor{purple!10}76.88         & \cellcolor{purple!10}52.31          & \cellcolor{purple!10}40.44          & \cellcolor{purple!10}69.72          & \cellcolor{purple!10}72.60              & \cellcolor{purple!10}66.93               & \cellcolor{purple!10}63.15                     \\ \midrule
\multirow{8}{*}{\begin{tabular}[c]{@{}c@{}}LLaMA2-13B\\ W6A6\end{tabular}} 
& FP16             & 79.05         & 57.91          & 44.20          & 69.02          & 76.60              & 69.69               & 66.08                     \\ \cmidrule{2-9} 
& SmoothQuant      & 78.29         & 57.41          & 43.86          & 69.50          & 75.02              & 66.93               & 65.17                     \\
& OS+              & 78.29         & 59.13          & 43.34          & -              & 75.37              & 67.56               & 64.74                     \\
& OmniQuant        & 78.24         & 57.58          & 43.86          & 71.10          & 75.52              & 68.35               & 65.78                     \\
& AffineQuant      & 78.35         & 57.58          & 43.34          & 66.73          & 74.71              & 68.59               & 64.88                     \\
& QLLM             & 78.78         & 58.29          & 43.77          & -              & 75.10              & 68.43               & 64.87                    \\
& \cellcolor{purple!10}\Brap & \cellcolor{purple!10}78.62         & \cellcolor{purple!10}56.94          & \cellcolor{purple!10}43.43          & \cellcolor{purple!10}68.35          & \cellcolor{purple!10}76.19              & \cellcolor{purple!10}69.22               & \cellcolor{purple!10}65.46                     \\
& \cellcolor{purple!10}\Blwc & \cellcolor{purple!10}78.94         & \cellcolor{purple!10}57.95          & \cellcolor{purple!10}44.11          & \cellcolor{purple!10}68.81          & \cellcolor{purple!10}76.17              & \cellcolor{purple!10}68.98               & \cellcolor{purple!10}\textbf{65.83}            \\ \midrule
\multirow{7}{*}{\begin{tabular}[c]{@{}c@{}}LLaMA2-70B\\ W6A6\end{tabular}} 
& FP16             & 81.01         & 59.68          & 47.95          & 75.87          & 80.87              & 76.95               & 70.39                     \\ \cmidrule{2-9} 
& SmoothQuant      & 79.87         & 57.32          & 45.65          & 77.13          & 79.01              & 74.03               & 68.84                     \\
& OS+              & 79.33         & 59.09          & 47.18          & -              & 79.46              & 75.06               & 68.02                     \\
& OmniQuant        & 80.20         & 60.27          & 46.84          & -              & 80.55              & 76.01               & 68.77                     \\
& QLLM             & 80.63         & 59.01          & 45.99          & -              & 79.64              & 75.37               & 68.13                     \\
& \cellcolor{purple!10}\Brap & \cellcolor{purple!10}80.96         & \cellcolor{purple!10}59.39          & \cellcolor{purple!10}47.27          & \cellcolor{purple!10}77.34          & \cellcolor{purple!10}80.70              & \cellcolor{purple!10}76.40               & \cellcolor{purple!10}70.34                     \\
& \cellcolor{purple!10}\Blwc & \cellcolor{purple!10}81.18         & \cellcolor{purple!10}59.26          & \cellcolor{purple!10}47.78          & \cellcolor{purple!10}77.86          & \cellcolor{purple!10}80.68              & \cellcolor{purple!10}76.95               & \cellcolor{purple!10}\textbf{70.62}            \\ 
\bottomrule
\end{tabular}
}
\end{center}
\vspace{-0.7em}
\label{tab:qa-llama2-w6a6}
\end{table*}

\section{More Ablation Studies}
\label{sec:appx_ablation}


\subsection{Time Speedup and Memory Saving}
\label{subsec_appx_speedup}
The current generation of LLMs usually splits into pre-filling and decoding phases and deploys on two separate machines~\citep{patel2024splitwise}.
Here, we present more speedup and memory-saving results for these two phases achieved with the LLaMA2-7B model on a single NVIDIA RTX 3090 GPU.
We set the input sequence length to 2048 and the decoding steps to 256.
End-to-end results for time speedup and memory savings during the pre-filling stage are shown in Table~\ref{tab:e2e_speed} and~\ref{tab:peak_memory}.
We can observe that \rap\ achieves a maximum speedup of $2.01\times$ during the pre-filling phase, with speedup increasing as the batch size grows. 
From Table~\ref{tab:peak_memory}, \rap\ demonstrates significant memory savings, effectively reducing memory usage by up to $3.20\times$ through quantization.
For the decoding phase, we enlarge the batch size to 64 and measure speedup along with memory usage for one LLaMA2-7B layer, constrained by the 24 GB memory of the GPU.
As shown in Table~\ref{tab:appx_decoding}, DuQuant maintains speedup and memory usage comparable to QuaRot.
These results underscore the efficiency of \rap\ in optimizing resource utilization, highlighting its potential to enhance performance and reduce costs in deploying large language models, particularly in resource-constrained environments.

\begin{table*}[h]
\caption{End-to-end pre-filling speedup on LLaMA2-7B model.}
\vspace{-10pt}
\label{tab:e2e_speed}
\begin{center}

\label{tab:e2e_speed}
\begin{tabular}{c|ccc}
\toprule
Batch Size& FP16 Time & \rap\ Time & Speedup  \\
\midrule
 1 &  568ms & 294ms & 1.93$\times$\\
2 &  1003ms & 509ms & 1.97$\times$\\
3 &  1449ms & 720ms & 2.01$\times$\\

\bottomrule
\end{tabular}
\end{center}
\vspace{-0.7em}
\end{table*}

\begin{table*}[h]
\caption{Peak memory usage during pre-filling phase of LLaMA2-7B model.}
\vspace{-10pt}
\label{tab:peak_memory}
\begin{center}
\begin{tabular}{c|ccc}
\toprule
Batch Size&  FP16 Mem. & \rap\ Mem. & Saving Factor \\
\midrule
 1 & 15.28GB & 4.79GB & 3.20$\times$\\
2 & 17.94GB & 5.94GB& 3.02$\times$\\
3 & 20.56GB & 7.10GB & 2.90$\times$\\

\bottomrule
\end{tabular}
\end{center}
\vspace{-0.7em}
\end{table*}

\begin{table*}[h]
\caption{Decoding phase results of one LLaMA2-7B layer with a batch size of 64.
}
\vspace{-10pt}
\label{tab:appx_decoding}
\begin{center}
\resizebox{0.7\linewidth}{!}{
\begin{tabular}{l|cccc}
\toprule
Method & Time (ms) & Saving Factor & Memory (GB) & Saving Factor \\ \midrule
FP16                     & 659                & -                      & 3.550x               & -                      \\
SmoothQuant              & 437                & 1.508x                 & 1.669                & 2.127x                 \\
QLLM                     & OOM                & -                      & OOM                  & -                      \\
QuaRot                   & 457                & 1.442x                 & 1.678                & 2.116x                 \\
DuQuant                  & 499                & 1.321x                 & 1.677                & 2.117x                 \\ \bottomrule
\end{tabular}
}
\end{center}
\vspace{-0.7em}
\end{table*}

\subsection{Effects of Rotation Matrix}
\label{subsec_appx_rotation}

\paragraph{Ablation of Rotation Block Size.}
To further explore the impact of rotation block size, we apply varying block sizes in the rotation matrices to both LLaMA2-7B and LLaMA2-13B models and evaluate the perplexity of the quantized models. 
The results, presented in Table~\ref{tab:abl_block_size}, indicate that increasing block sizes generally improves model performance. 
This improvement occurs because larger block sizes allow outliers to be distributed across more channels, evening out values throughout the activation/weight matrix thereby enhancing quantization accuracy and performance.
Additionally, quantization runtime decreases with larger block sizes, likely due to more efficient transformations during the reshaping of original activation/weight matrices.
Consequently, we adopt 128 as our rotation block size for all experiments for efficiency and effectiveness.

\begin{table*}[!h]
\caption{Impact of rotation block size.}
\vspace{-10pt}
\label{tab:abl_block_size}
\begin{center}
\begin{tabular}{c|ccc|ccc}
\toprule
\multirow{2}{*}{Block Size}
 & \multicolumn{3}{c|}{LLaMA2-7B} & \multicolumn{3}{c}{LLaMA2-13B}\\
\cmidrule{2-7}
& WikiText2 $\downarrow$ & C4 $\downarrow$ & Time/s & WikiText2 $\downarrow$ & C4 $\downarrow$ & Time/s \\
\midrule
4 & 18.69 & 26.48& 64.4& 8.81 & 13.03&97.7\\
8 & 10.77&15.04&53.8& 7.02 & 9.68&80.8\\
16 & 8.69 & 11.46&48.2& 6.12 & 8.12&75.2\\
32 & 6.96 & 8.85 &48.3& 5.61 & 7.35&76.2\\
64 & 6.38 & 8.07&50.1& 5.45 & 7.13&74.0\\
128 & 6.28 & 7.90 &48.6& 5.42 & 7.05&74.0\\

\bottomrule
\end{tabular}
\end{center}
\end{table*}

\paragraph{Ablation of Rotation Times.}
Identifying the optimal rotation matrix $\mathbf{R}$  is a complex challenge, so we employ a greedy search algorithm to approximate the matrix as $\hat{\mathbf{R}}$.
We conduct an ablation study on the number of greedy steps $N$ and summarize the results in Table~\ref{tab:abl_rotation_times}. 
Initially, as $N$ increases, the model performance improves, reflecting our ability to determine $\hat{\mathbf{R}}$ more effectively. 
However, when $N$ reaches 1024, the model begins to overfit. 
Consequently, we have chosen $N=256$ for all our experiments, as it offers the optimal balance between model performance and time usage.

\begin{table*}[!h]
\caption{Impact of rotation times.}
\vspace{-10pt}
\begin{center}
\begin{tabular}{c|ccc|ccc}
\toprule
 & \multicolumn{3}{c|}{LLaMA2-7B} & \multicolumn{3}{c}{LLaMA2-13B}\\
\midrule
Rotation Times & WikiText2 $\downarrow$ & C4 $\downarrow$& Time/s & WikiText2 $\downarrow$ & C4 $\downarrow$ & Time/s\\
\midrule
1 & 6.60 & 8.41 & 22.9 & 5.48 & 7.12 & 37.7\\
4 & 6.34 & 8.04 & 22.6 & 5.41 & 7.06 & 38.7 \\
16 & 6.32 & 7.98 & 28.8 & 5.43 & 7.05 & 41.8 \\
64 & 6.34 & 7.98 & 29.0 & 5.43 & 7.06 & 47.0  \\
256 & 6.28 & 7.90& 48.6  & 5.42 & 7.05 & 74.0\\
1024 & 6.31 & 8.01 & 129.7 & 5.46 & 7.12 & 179.8 \\

\bottomrule
\end{tabular}
\end{center}
\label{tab:abl_rotation_times}
\end{table*}

\subsection{Effects of Permutation Algorithm.}
\label{subsec:appx_permutation}

As discussed in Section~\ref{subsec:method_rap}, rotation transformations within each block are limited and unable to redistribute outliers across different blocks. 
To address this, we introduce a permutation transformation aimed at balancing outliers more comprehensively.
Our primary goal is to minimize the variance among different blocks, as outlined in Eqn.~\eqref{eq:var-blocks}. 
We explore several optimization algorithms, with the results detailed in Table~\ref{tab:abl_permutation}. Note that the variance values are measured on activation values of the query project in the first layer of each model, and the time in the table represents the runtime of calibration.
The Zigzag permutation notably reduces the variance to 3.0e-4, achieving this with minimal time expenditure and yielding competitive perplexity results. 
While Simulated Annealing slightly outperforms Zigzag in terms of perplexity for the LLaMA2-7B model, it was significantly more time-consuming, and the marginal gains did not justify the additional complexity. 
Therefore, we select Zigzag permutation as our preferred method, leading to smoother outlier distribution and more effective quantized models.

\begin{table*}[!h]
\caption{Impact of channel permutation algorithm.}
\vspace{-10pt}
\begin{center}
\resizebox{\linewidth}{!}{
\begin{tabular}{c|cccc|cccc}
\toprule

 & \multicolumn{4}{c|}{LLaMA2-7B} & \multicolumn{4}{c}{LLaMA2-13B}\\
\midrule
Permutation Method &WikiText2 $\downarrow$ & C4 $\downarrow$  & Variance & Time/s & WikiText2 $\downarrow$ & C4 $\downarrow$& Variance & Time/s\\
\midrule
w.o. Permutation & 7.92 & 10.64 & 3.9e-2 & 27.5 &5.96 & 7.94 & 3.1e-2 & 44.7 \\
Random &  6.40 & 8.08 & 4.9e-3 & 89.5 & 5.43 & 7.07 & 3.9e-3 & 148.6 \\
Simulated Annealing & 6.26 & 7.89 & 1.7e-4  & 769.6 & 5.42 & 7.06 & 1.5e-4  & 1257.8\\
Zigzag & 6.28 & 7.90 & 3.0e-4  &48.6 & 5.42 & 7.05&2.5e-4&74.0  \\
\bottomrule
\end{tabular}
}
\end{center}
\label{tab:abl_permutation}
\end{table*}

\subsection{Effects of Calibration Datasets}
\label{subsec:appx_cali}

\paragraph{Ablation of Different Calibration Datasets.}

We apply our \rap\ to quantize the LLaMA2-7B model using different calibration datasets, with results presented in Table~\ref{tab:abl_cali}. 
It can be observed that 
the selection of calibration datasets has a relatively minor impact on quantization performance. 
This is because our method uses the calibration data solely to identify outlier channels, rather than for gradient-based parameter learning as seen in methods like OmniQuant \citep{shao2023omniquant} and AffineQuant \citep{ma2024affinequant}. 
This ablation study underscores the robustness of our \rap\ method.

\begin{table*}[h!]
\caption{Ablation of calibration datasets.}
\vspace{-10pt}
\begin{center}
\begin{tabular}{cccc}
\toprule
\multicolumn{2}{c}{\multirow{2}{*}{LLaMA2-7B}}& \multicolumn{2}{c}{Eval.}\\

& & WikiText2 $\downarrow$ & C4 $\downarrow$ \\
\midrule
\multirow{2}{*}{Calib.} & WikiText2 & 6.28 & 7.90\\
& C4 & 6.25 & 7.87\\
\bottomrule
\end{tabular}
\end{center}
\label{tab:abl_cali}
\end{table*}

\paragraph{Calibration-free Quantization.}
To further explore the robustness of \rap\ under varying calibration conditions, we generate random calibration data within the vocabulary range of the model, setting the sample count to 256. 
The results, shown in Table~\ref{tab:abl_cali_free}, indicate that even in calibration-free settings, our method continues to perform well, achieving results that are competitively close to those obtained with actual calibration data.
This demonstrates that \rap\ could provide a viable solution in real-world scenarios where obtaining specific calibration data is challenging or impossible. 
In addition, our findings suggest that outliers are inherent to certain model layers, reflecting characteristics of the model weights or modules, especially in recent LLMs. This aligns with concurrent research: \cite{yang2024mitigating} identified consistent massive outliers at the FFN down projection layer in GLU-based LLMs, such as LLaMA, Mistral, Mixtral, SOLAR, and Gemma, while \cite{paglieri2024outliers_calibration} reported that, although OPT models are sensitive to different calibration sets, newer models demonstrate robustness to outliers and maintain stable activations. These insights reinforce the idea that outliers are more linked to the internal structure of model weights and modules than to calibration data.
DuQuant's ability to deliver high performance without relying on traditional calibration data creates opportunities for deploying quantized models in environments with stringent privacy requirements or limited data availability. This highlights a promising direction for future research, focusing on improving model adaptability and deployment flexibility.

\begin{table*}[h!]
\caption{Calibration-free quantization, where we generate random data within vocabulary range.}
\vspace{-10pt}
\label{tab:abl_cali_free}
\begin{center}
\begin{tabular}{cccc}
\toprule
\multicolumn{2}{c}{\multirow{2}{*}{LLaMA2-7B}}& \multicolumn{2}{c}{Eval.}\\

& & WikiText2 $\downarrow$ & C4 $\downarrow$ \\
\midrule
\multirow{2}{*}{Calib.} & Randomly Generated & 6.25 & 7.86\\
& WikiText2 & 6.25 & 7.87\\
\bottomrule
\end{tabular}

\vspace{10pt}

\begin{tabular}{cccc}
\toprule
\multicolumn{2}{c}{\multirow{2}{*}{LLaMA2-13B}}& \multicolumn{2}{c}{Eval.}\\

& & WikiText2 $\downarrow$ & C4 $\downarrow$ \\
\midrule
\multirow{2}{*}{Calib.} & Randomly Generated & 5.45 & 7.05\\
& WikiText2 & 5.44 & 7.05 \\
\bottomrule
\end{tabular}
\end{center}
\end{table*}

\paragraph{Ablation of Different Numbers of Calibration Samples.}
We utilize our \rap\ to quantize the LLaMA2-7B model using varying numbers of calibration samples from the WikiText2 dataset, with results detailed in Table~\ref{tab:abl_cali_number}.
Interestingly, the quantization performance shows a low correlation with the number of samples, demonstrating the robustness of \rap\. 
This stability arises because we utilize the mean activation values from these samples to construct our rotation matrices. 
Since we average the activations, the influence of any single, potentially non-representative sample is minimized, ensuring consistent performance. 
Notably, as we use mean values, the time cost of our quantization process remains constant regardless of the number of samples, enhancing the efficiency.

\begin{table*}[h!]
\caption{Ablation of different numbers in the calibration dataset.}
\vspace{-10pt}
\label{tab:abl_cali_number}
\begin{center}
\begin{tabular}{ccc}
\toprule
\# of Samples & WikiText2 $\downarrow$ & C4 $\downarrow$\\
\midrule
16 & 6.29 & 7.88\\
32 & 6.31 & 7.99\\
64 & 6.29 & 7.88\\
128 & 6.28 & 7.90\\
256 & 6.23 & 7.88\\

\bottomrule
\end{tabular}
\end{center}
\end{table*}

\section{Detailed Comparison with QuaRot}
\label{sec:appx_quarot}

\begin{table*}[!h]
\caption{Comparison of quantization settings between QuaRot and DuQuant.
}
\vspace{-10pt}
\label{tab:quant_time_quarot}
\begin{center}
\resizebox{0.85\linewidth}{!}{
\begin{tabular}{c|cccc}
\toprule
Setting & Weight                 & Activation           & Query                \\ \midrule
QuaRot  & per-channel symmetric  & per-token symmetric  & FP16                 \\
DuQuant & per-channel asymmetric & per-token asymmetric & per-token asymmetric \\ 
\bottomrule
\end{tabular}
}
\end{center}
\vspace{-0.7em}
\label{tab:quarot_setting}
\end{table*}

\begin{table*}[!h]
\caption{
Evaluation results between QuaRot and \rap\ under DuQuant settings.
}
\vspace{-10pt}
\begin{center}
\resizebox{\linewidth}{!}{
\begin{tabular}{c|l|cc|ccccccc}
\toprule
\textbf{Model}                       & \textbf{Method}  & \textbf{WikiText2} $\downarrow$  & \textbf{c4} $\downarrow$    & \textbf{PIQA}  & \textbf{ARC-E} & \textbf{ARC-C} & \textbf{BoolQ} & \textbf{HellaSwag} & \textbf{WinoGrande} & \textbf{Avg} $\uparrow$   \\ 
\midrule
\multirow{5}{*}{\begin{tabular}[c]{@{}c@{}}LLaMA1-7B\\ W4A4\end{tabular}}
& FP16             & 5.68               & 7.08           & 77.47          & 52.48          & 41.46          & 73.08          & 73.00              & 67.07               & 64.09          \\
\cmidrule{2-11} 
& QuaRot-RTN       & 7.08               & 8.73           & 74.59          & 48.57          & 36.01          & 68.99          & 65.69              & 58.56               & 46.03          \\
& QuaRot-GPTQ      & 6.44               & 7.87           & 76.17          & 49.96          & 38.23          & 70.80          & 69.29              & 63.06               & 61.25          \\
&\cellcolor{purple!10}\Brap & \cellcolor{purple!10}6.40      & \cellcolor{purple!10}7.84           & \cellcolor{purple!10}\textbf{76.44} & \cellcolor{purple!10}\textbf{50.04} & \cellcolor{purple!10}\textbf{38.99} & \cellcolor{purple!10}\textbf{70.98} & \cellcolor{purple!10}69.39              & \cellcolor{purple!10}\textbf{64.72}      & \cellcolor{purple!10}\textbf{61.76} \\
&\cellcolor{purple!10}\Blwc & \cellcolor{purple!10}\textbf{6.18}      & \cellcolor{purple!10}\textbf{7.73}  & \cellcolor{purple!10}76.22          & \cellcolor{purple!10}\textbf{50.04} & \cellcolor{purple!10}38.31          & \cellcolor{purple!10}70.09          & \cellcolor{purple!10}\textbf{69.82}     & \cellcolor{purple!10}62.59      & \cellcolor{purple!10}61.18          \\ \midrule
\multirow{5}{*}{\begin{tabular}[c]{@{}c@{}}LLaMA2-7B\\ W4A4\end{tabular}} 
& FP16             & 5.47               & 6.97           & 76.88          & 53.54          & 40.53          & 71.13          & 72.96              & 67.25               & 63.72          \\
\cmidrule{2-11} 
& QuaRot-RTN       & 9.66               & 11.98          & 69.48          & 46.25          & 32.76          & 64.80          & 60.75              & 56.67               & 44.04          \\
 & QuaRot-GPTQ      & 6.39               & 8.15           & 75.15          & 49.15         & 36.68          & 67.89          & 68.87              & 61.33               & 59.85          \\
&\cellcolor{purple!10}\Brap & \cellcolor{purple!10}6.28      & \cellcolor{purple!10}7.90           & \cellcolor{purple!10}75.24          & \cellcolor{purple!10}\textbf{51.89} & \cellcolor{purple!10}36.77          & \cellcolor{purple!10}67.86          & \cellcolor{purple!10}69.54              & \cellcolor{purple!10}62.12               & \cellcolor{purple!10}60.57          \\
&\cellcolor{purple!10}\Blwc & \cellcolor{purple!10}\textbf{6.08}   & \cellcolor{purple!10}\textbf{7.79}  & \cellcolor{purple!10}\textbf{75.68} & \cellcolor{purple!10}50.00    & \cellcolor{purple!10}\textbf{37.46} & \cellcolor{purple!10}\textbf{69.24} & \cellcolor{purple!10}\textbf{69.74}     & \cellcolor{purple!10}\textbf{63.93}      & \cellcolor{purple!10}\textbf{61.01} \\ 
\midrule
\multirow{5}{*}{\begin{tabular}[c]{@{}c@{}}LLaMA3-8B\\ W4A4\end{tabular}}  
& FP16             & 6.14               & 8.88           & 80.85          & 77.78          & 53.41          & 81.28          & 79.16              & 72.84               & 74.22          \\
\cmidrule{2-11} 
& QuaRot-RTN       & 13.89              & 17.59          & 69.64          & 57.58          & 34.56          & 66.76          & 63.46              & 62.75               & 59.13          \\
& QuaRot-GPTQ      & 8.69               & 12.40          & 74.54          & 67.38          & 40.61          & 70.43          & 70.47              & 65.11               & 64.76          \\
&\cellcolor{purple!10}\Brap & \cellcolor{purple!10}8.53      & \cellcolor{purple!10}12.01          & \cellcolor{purple!10}76.93          & \cellcolor{purple!10}\textbf{70.88} & \cellcolor{purple!10}\textbf{45.05} & \cellcolor{purple!10}\textbf{74.59} & \cellcolor{purple!10}73.17              & \cellcolor{purple!10}66.14               & \cellcolor{purple!10}\textbf{67.79} \\
&\cellcolor{purple!10}\Blwc & \cellcolor{purple!10}\textbf{8.06}      & \cellcolor{purple!10}\textbf{11.29} & \cellcolor{purple!10}\textbf{76.22} & \cellcolor{purple!10}70.41      & \cellcolor{purple!10}43.69          & \cellcolor{purple!10}74.34     & \cellcolor{purple!10}\textbf{73.87}     & \cellcolor{purple!10}\textbf{67.80}      & \cellcolor{purple!10}67.72          \\ 
\bottomrule
\end{tabular}
}
\end{center}
\vspace{-0.7em}
\label{tab:quarot}
\end{table*}

\begin{table*}[!h]
\caption{
Evaluation results between QuaRot and \rap\ under QuaRot settings.
}
\vspace{-10pt}
\begin{center}
\resizebox{\linewidth}{!}{
\begin{tabular}{c|l|cc|ccccccc}
\toprule
Model     & Method               & WikiText2 $\downarrow$    & C4     $\downarrow$       & PIQA           & WinoGrande     & HellaSwag      & ARC-E          & ARC-C          & LAMBADA        & Avg $\uparrow$           \\ \midrule
\multirow{5}{*}{\begin{tabular}[c]{@{}c@{}}LLaMA2-7B\\ W4A4\\ QuaRot Setting\end{tabular}}  
& FP16                 & 5.47          & 6.97          & 79.11          & 69.06          & 75.99          & 74.58          & 46.25          & 73.90          & 69.82          \\
& QuaRot-RTN           & 8.37          & -             & 72.09          & 60.69          & 65.4           & 58.88          & 35.24          & 57.27          & 58.26          \\
& QuaRot-GPTQ          & 6.1           & -             & 76.77          & 63.77          & 72.16          & 69.87          & 40.87          & \textbf{70.39} & 65.64          \\
& \cellcolor{purple!10}\textbf{DuQuant}     & \cellcolor{purple!10}6.23          & \cellcolor{purple!10}7.91          & \cellcolor{purple!10}76.28          & \cellcolor{purple!10}66.93          & \cellcolor{purple!10}72.96          & \cellcolor{purple!10}69.99          & \cellcolor{purple!10}40.53          & \cellcolor{purple!10}69.61          & \cellcolor{purple!10}66.05          \\
& \cellcolor{purple!10}\Blwc & \cellcolor{purple!10}\textbf{6.01} & \cellcolor{purple!10}\textbf{7.67} & \cellcolor{purple!10}\textbf{77.64} & \cellcolor{purple!10}\textbf{67.8}  & \cellcolor{purple!10}\textbf{72.97} & \cellcolor{purple!10}\textbf{70.37} & \cellcolor{purple!10}\textbf{41.81} & \cellcolor{purple!10}69.53          & \cellcolor{purple!10}\textbf{66.69} \\ 
\midrule
\multirow{5}{*}{\begin{tabular}[c]{@{}c@{}}LLaMA2-13B\\ W4A4\\ QuaRot Setting\end{tabular}} 
& FP16                 & 4.88          & 6.46          & 80.47          & 72.22          & 79.39          & 77.48          & 49.23          & 76.75          & 72.59          \\
& QuaRot-RTN           & 6.09          & -             & 77.37          & 67.32          & 73.11          & 70.83          & 43.69          & 70.66          & 67.16          \\
& QuaRot-GPTQ          & 5.4           & -             & 78.89          & 70.24          & 76.37          & 72.98          & 46.59          & 73.67          & 69.79          \\
& \cellcolor{purple!10}\textbf{DuQuant}     & \cellcolor{purple!10}5.39          & \cellcolor{purple!10}7.05          & \cellcolor{purple!10}78.51          & \cellcolor{purple!10}\textbf{70.88} & \cellcolor{purple!10}76.80          & \cellcolor{purple!10}\textbf{74.62} & \cellcolor{purple!10}\textbf{48.21} & \cellcolor{purple!10}73.92          & \cellcolor{purple!10}\textbf{70.49} \\
& \cellcolor{purple!10}\Blwc & \cellcolor{purple!10}\textbf{5.27} & \cellcolor{purple!10}\textbf{6.93} & \cellcolor{purple!10}\textbf{78.73} & \cellcolor{purple!10}\textbf{70.88} & \cellcolor{purple!10}\textbf{77.20} & \cellcolor{purple!10}74.07          & \cellcolor{purple!10}47.27          & \cellcolor{purple!10}\textbf{73.96} & \cellcolor{purple!10}70.35          \\ 
\bottomrule
\end{tabular}

}
\end{center}
\vspace{-0.7em}
\label{tab:quarot_2}
\end{table*}

\begin{table*}[!h]
\caption{Matrices comparison between \rap\ and QuaRot under W4A4 quantization.}
\vspace{-10pt}
\begin{center}
\begin{tabular}{c|cc|cc}
\toprule

Model & \multicolumn{2}{c|}{LLaMA2-7B} & \multicolumn{2}{c}{LLaMA2-13B}\\
\midrule
Dataset & WikiText2 $\downarrow$ & C4 $\downarrow$ & WikiText2 $\downarrow$ & C4 $\downarrow$\\
\midrule
QuaRot & 9.66 & 11.98 & 6.73 & 8.69\\
\cellcolor{purple!10}\Brap & \cellcolor{purple!10}\textbf{7.92} & \cellcolor{purple!10}\textbf{10.64} & \cellcolor{purple!10}\textbf{5.96} & \cellcolor{purple!10}\textbf{7.94}\\
\bottomrule
\end{tabular}
\end{center}
\label{tab:matrices}
\end{table*}

\begin{table*}[!h]
\caption{Quantization runtime comparison on a single NVIDIA A100 80G GPU.}
\vspace{-10pt}
\label{tab:quant_time_quarot}
\begin{center}
\begin{tabular}{c|ccc}
\toprule
Model & LLaMA2-7B & LLaMA2-13B & LLaMA2-70B \\
\midrule
QuaRot & 20min & 36min & 5.1h \\
\cellcolor{purple!10}\Brap & \cellcolor{purple!10}50s & \cellcolor{purple!10}71s & \cellcolor{purple!10}270s\\
\bottomrule
\end{tabular}
\end{center}
\vspace{-10pt}
\label{tab:runtime}
\end{table*}

In this section, we present a detailed comparison between our \rap\ and QuaRot~\citep{ashkboos2024quarot}. QuaRot employs Hadamard matrices to mitigate outliers in activations and utilizes the GPTQ algorithm for weight quantization to achieve competitive performance. However, our \rap\ method demonstrates several distinct advantages:
\begin{itemize}[leftmargin=2em]
    \item \textbf{Effective Use of Prior Knowledge:} 
    \rap\ leverages prior knowledge to accurately target and eliminate outliers through multiple rotations, achieving a smoother activation distribution compared to the Hadamard transformation, as demonstrated in Figure~\ref{fig:hadamard}. 
    \item \textbf{Efficient Channel Permutation:} Our channel permutation not only further smooths outlier features but also benefits from rapid implementation, enhancing overall performance.
    \item \textbf{Simultaneous Weight Matrix Smoothing:} Unlike QuaRot, \rap\ directly and efficiently smooths the weight matrix, avoiding the time-consuming GPTQ algorithm and accelerating the quantization process, as demonstrated high quantization efficiency in Table~\ref{tab:quant_time_quarot}.
\end{itemize}
Experimental results underscore the superiority of \rap\ over QuaRot. 
We first summarize the experimental setting differences between the original paper of QuaRot with ours in Table~\ref{tab:quarot_setting}\footnote{Following prior works~\citep{shao2023omniquant,liu2023qllm,ma2024affinequant}, we describe operations on Query, Key, and Value states as per-tensor quantization for consistency. To be precise, these operations are effectively applied on a per-head basis.}.
For a fair comparison, we reproduce the QuaRot under 4-bit per-channel weight and per-token activation asymmetric quantization. 
Table~\ref{tab:quarot} displays the perplexity (PPL) and zero-shot accuracy for models LLaMA1-7B, LLaMA2-7B, and LLaMA3-8B. 
Our \rap\ method consistently outperforms QuaRot-RTN across all benchmarks, showcasing our advanced weight matrix management.
Furthermore, compared to QuaRot-GPTQ, \rap\ and \lwc\ achieve better average accuracy across six QA tasks and demonstrate superior performance on the WikiText and C4 datasets, particularly for LLaMA3-8B.
Moreover, we further provide the results of DuQuant under the setting utilized in the original paper of QuaRot in Table~\ref{tab:quarot_2}.
DuQuant still surpasses QuaRot by a large margin.

Additionally, we assess the effectiveness of the rotation matrices utilized in \rap\, which incorporate prior knowledge against the Hadamard matrices used in QuaRot. 
We omit the permutation step in \rap\ and directly contrast it with QuaRot-RTN. Results in Table~\ref{tab:matrices} show that our \rap\ without permutation outperforms QuaRot by a clear margin, which confirms that our rotation transformation is more effective than Hadamard by leveraging prior knowledge. It is worth noting that because a Hadamard matrix is orthogonal and symmetric, it multiplies by itself to yield the identity matrix. In other words, the Hadamard matrix is not suitable for greedy searches aimed at finding smaller outliers.
These findings differentiate \rap\ from QuaRot and highlight the effectiveness of our approach in managing outliers for post-training quantization of large language models.

\section{Algorithm for Rotation Matrix}
\label{sec:appx_algo}

\begin{algorithm}[h]
\small
\caption{Construction of the Rotation Matrix}
\label{alg:mope}
\vspace{0.3ex}
\hspace*{0.02in} {\bf Input:} Pre-initialized rotation matrix $\tilde{\mathbf{R}}$, greedy search steps $N$, activation matrix $\mathbf{X}$ with shape of [$T, C_{\text{in}}$]  \\
\hspace*{0.02in} {\bf Output:} Rotation matrix $\hat{\mathbf{R}}$ \\
\hspace*{0.02in} {\bf function} get\_rotation\_matrix ($\mathbf{X}, \tilde{\mathbf{R}}, N$) 
\begin{algorithmic}[1]
\vspace{0.3ex}
    \State $T$, $C_{\text{in}}$ = $\mathbf{X}.\text{shape}$
    \State{$\mathbf{R}$ = eye($C_{\text{in}}$)}
    \State{$a=\max_{i,j}|\mathbf{X}_{ij}|$}
    \For{$k$ in $1,\ ..., N$}
        \State{$\text{channel}\_{\text{max}}$ = $\mathbf{X}.\text{abs}().\text{max}(\text{dim}=0).\text{values}$}
        \vspace{0.25ex}
        \State{outlier\_channel = $\arg\max$($\text{channel}\_{\text{max}}$)}
        \vspace{0.25ex}
        \State{Obtain randomly initialized orthogonal matrix $\mathbf{Q}'$  with shape of [$C_{\text{in}}-1$, $C_{\text{in}}-1$]}
        \vspace{0.25ex}
        \State{$\mathbf{Q}'$ = concat([zeros$(C_{\text{in}}-1, 1)$, $\mathbf{Q}'$ ], dim=1)}
        \vspace{0.25ex}
        \State{$\mathbf{Q}$ = concat([zeros$(1, C_{\text{in}})$, $\mathbf{Q}'$], dim=0)}
        \vspace{0.25ex}
        \State{$\mathbf{Q}[0, 0] = 1$}
        \vspace{0.25ex}
        \State{$\mathbf{R}'$ = matmul($\tilde{\mathbf{R}}$, $\mathbf{Q}$)}
        \vspace{0.25ex}
        \State{$\mathbf{R}'$[:, outlier\_channel], $\mathbf{R}'$[:, 0] = $\mathbf{R}'$[:, 0], $\mathbf{R}'$[:, outlier\_channel]}
        \vspace{0.25ex}
        \State{$\mathbf{R}'$[outlier\_channel, :], $\mathbf{R}'$[0, :] = $\mathbf{R}'$[0, :], $\mathbf{R}'$[outlier\_channel]}
        \vspace{0.25ex}
        \State{$\mathbf{R}$ = matmul($\mathbf{R}$, $\mathbf{R}'$)}
        \State{${\mathbf{X}}$ = matmul(${\mathbf{X}}$, $\mathbf{R}'$)}
        \If{$\max_{i,j}|\mathbf{X}_{ij}|<a$}
            \State{$\hat{\mathbf{R}}=\mathbf{R}$}
            \State{$a=\max_{i,j}|\mathbf{X}_{ij}|$}
        \EndIf
    \EndFor    
    \State{\textbf{return} $\hat{\mathbf{R}}$}
\end{algorithmic}
\end{algorithm}

\section{Limitations and Broader Impacts}
\label{sec:appx_limit}

\paragraph{Limitations.}
The primary limitation of our method is the lack of a specialized strategy for calibration data selection. 
We adhere to established practices~\citep{shao2023omniquant,ma2024affinequant,liu2023qllm,zhao2023atom,ashkboos2024quarot,ashkboos2023quik} by randomly selecting 128 samples from the WikiText2 dataset to compute the mean embeddings that inform our rotation matrix and zigzag permutation order.
We also explore the possibility of calibration-free quantization and show some promising results.
However, further investigating more tailored choices for calibration data can potentially enhance the performance of our quantized models.
We leave this for future study.

\paragraph{Broader Impacts.}
Our work identifies the presence of massive outliers in down-projection layer of FFN modules, which significantly complicates low-bit weight-activation quantization. 
To address this challenge, we implement a combination of rotation and permutation matrices to effectively smooth both massive and uniform outliers, proving both fast and effective.
Consequently, we establish a new state-of-the-art for INT4 weight-activation post-training quantization methods. 
Our approach aims to accelerate large language models and reduce memory usage during deployment, offering substantial benefits to the field of LLM research. 
These advancements could lead to more efficient and accessible LLM applications, facilitating broader usage and enabling more sustainable AI implementations.

\section{More Visualizations}
\label{sec:appx_vis}

We provide additional visualizations of normal and massive outliers in various models (LLaMA1, LLaMA2, Vicuna-v1.5, Mistral) from Figure~\ref{fig:outlier_1_7B} to Figure~\ref{fig:outlier_mistral_7B}. 
In each figure except Mistral, the left side illustrates changes in normal outliers before and after applying our rotation and permutation transformations, while the right side shows the changes in massive outliers before and after transformations.
It is evident that massive outliers consistently occur in the down-projection layer of the FFN module across all models, supporting our findings discussed in Section~\ref{sec:motivation}.
Conversely, normal outliers appear in different modules within the transformation block. 
For instance, Figure~\ref{fig:outlier_1_13B_2} shows normal outliers at the up-projection layer of the FFN module in LLaMA1-13B.
Significantly, both massive and normal outliers are reduced markedly after our rotation and permutation transformations, leading to easier quantization of activations. 
This underscores the effectiveness of our DuQuant in managing outlier features across diverse LLM models.

\vspace{0.25cm}
\begin{figure*}[h]
    \centering
    \includegraphics[width=1.0\linewidth]{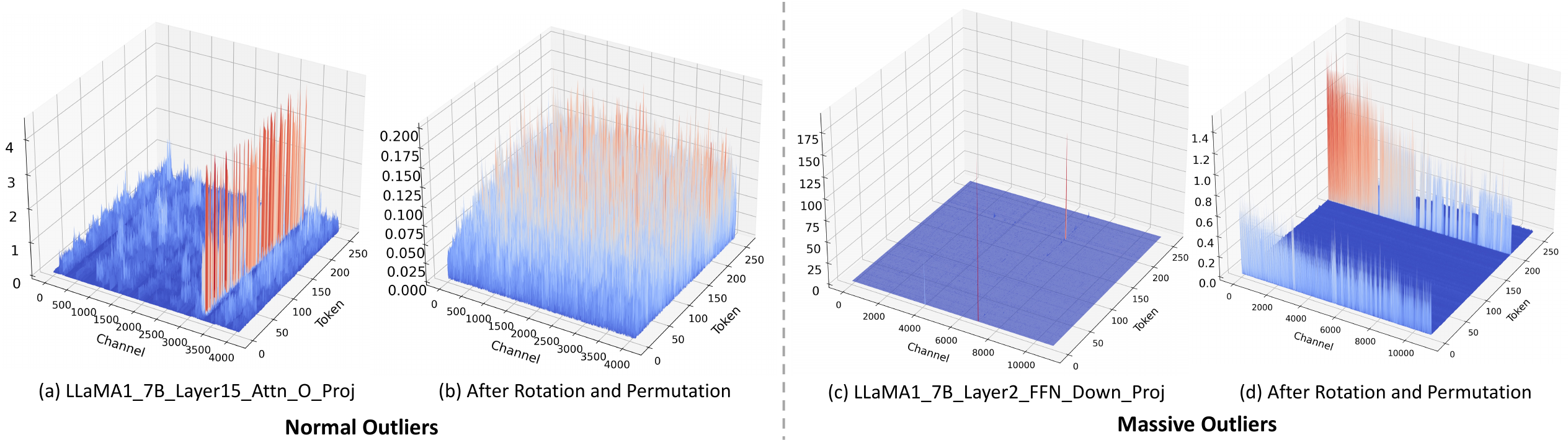}
    \caption{
    Activation change with the use of our \rap\ for LLaMA1-7B.
    }
    \label{fig:outlier_1_7B} 
\end{figure*}

\begin{figure*}[h]
    \centering
    \includegraphics[width=1.0\linewidth]{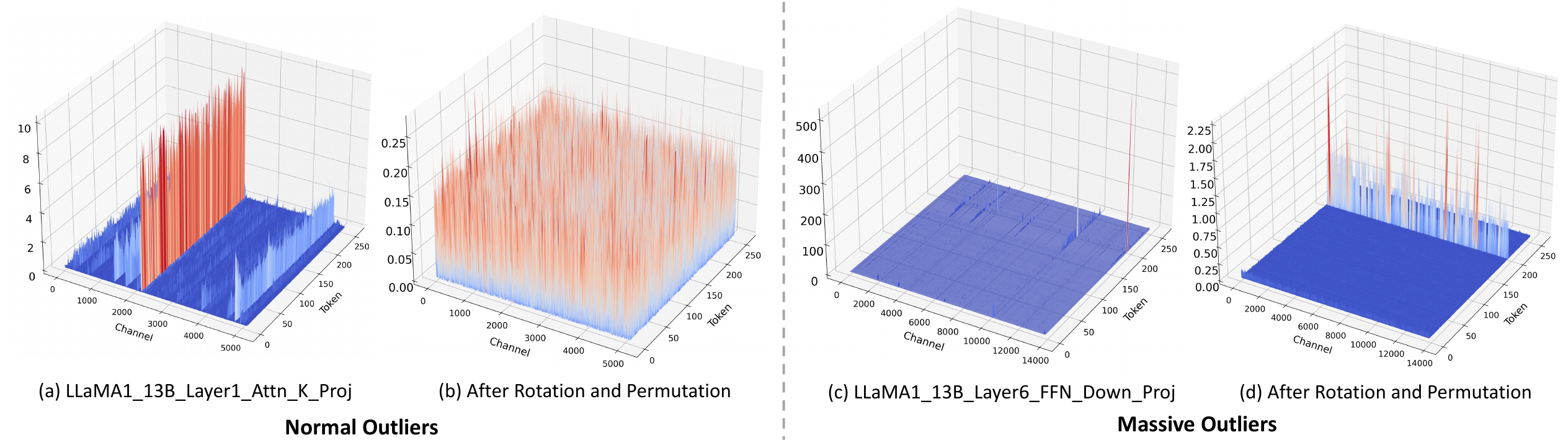}
    \caption{
    Activation change with the use of our \rap\ for LLaMA1-13B.
    }
    \label{fig:outlier_1_13B}  
\end{figure*}

\begin{figure*}[h]
    \centering
    \includegraphics[width=1.0\linewidth]{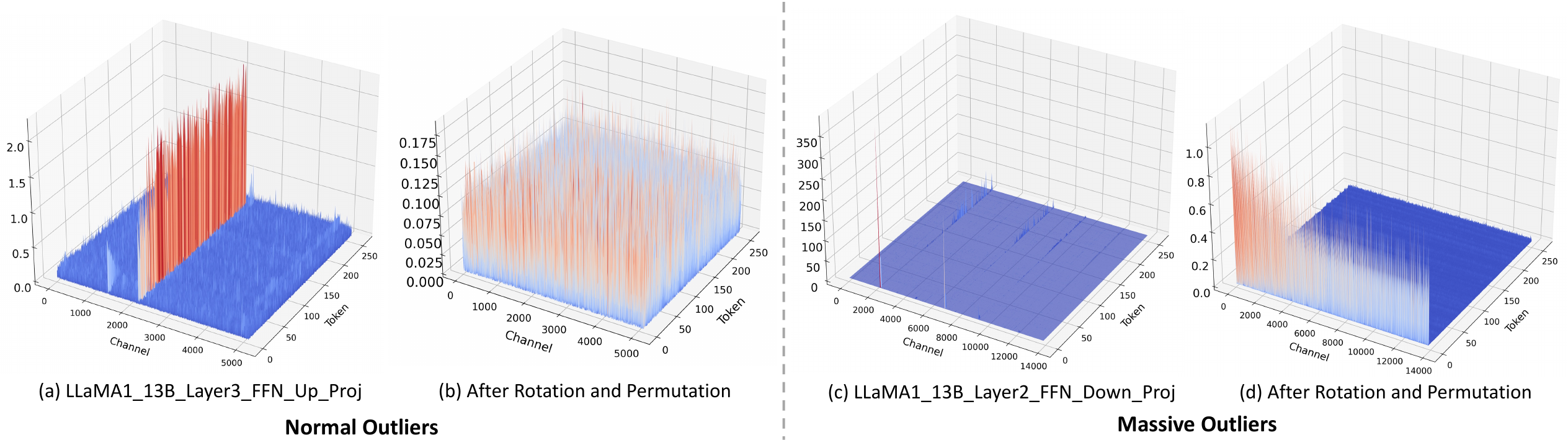}
    \caption{
    More examples of activation change with the use of our \rap\ for LLaMA1-13B.
    }
    \label{fig:outlier_1_13B_2}  
    \vspace{1cm}
\end{figure*}

\begin{figure*}[h]
    \centering
    \includegraphics[width=1.0\linewidth]{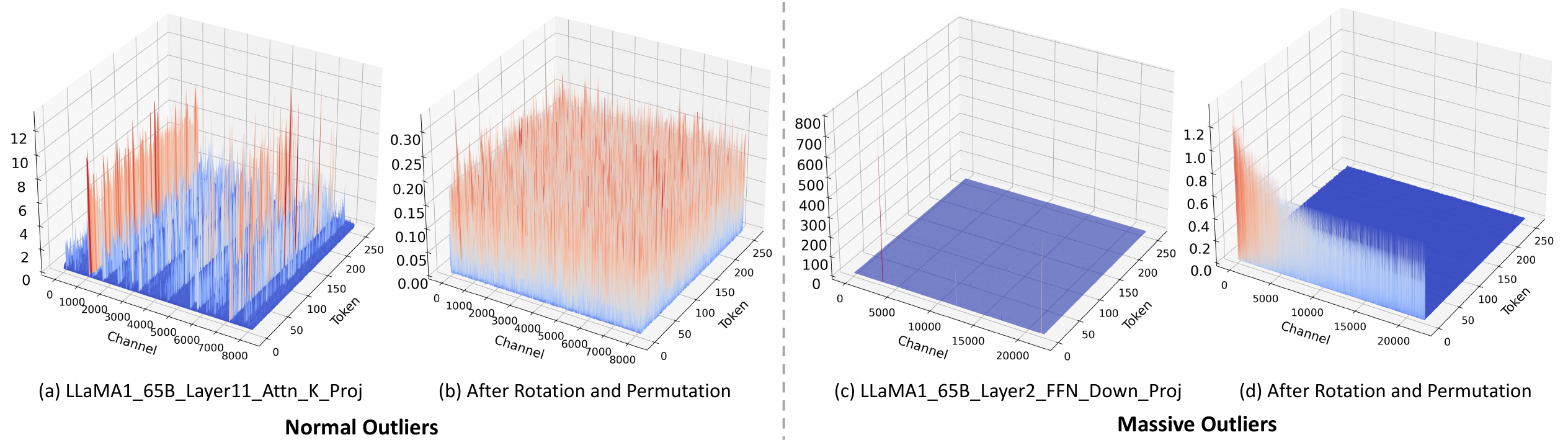}
    \caption{
    Activation change with the use of our \rap\ for LLaMA1-65B.
    }
    \label{fig:outlier_1_65B}  
    \vspace{1cm}
\end{figure*}

\begin{figure*}[h]
    \centering
    \includegraphics[width=1.0\linewidth]{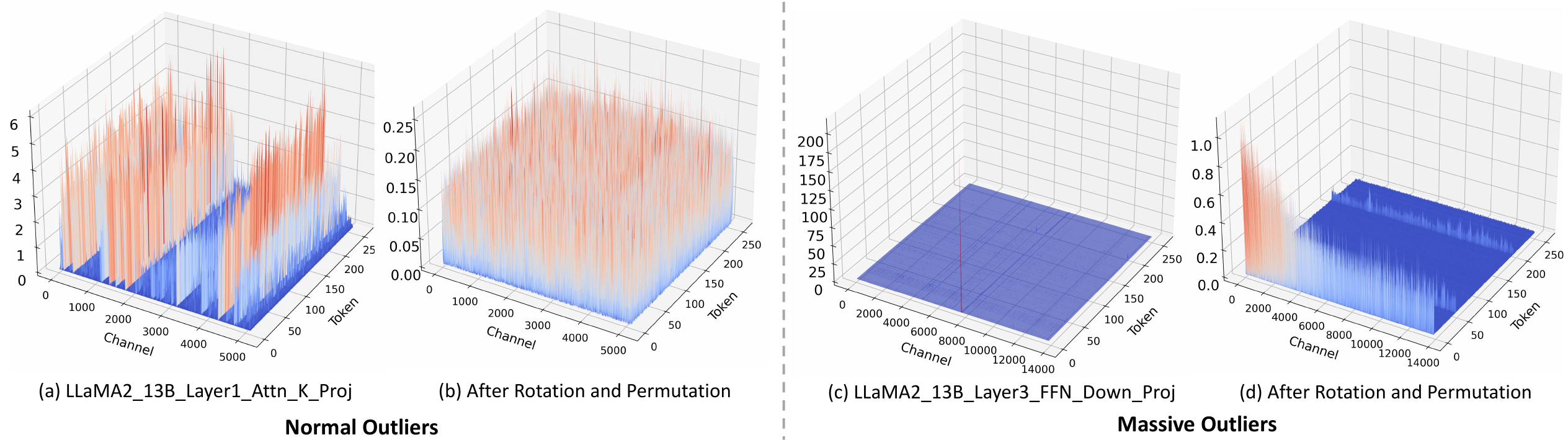}
    \caption{
    Activation change with the use of our \rap\ for LLaMA2-13B.
    }
    \label{fig:outlier_2_13B}    
    \vspace{1cm}
\end{figure*}

\begin{figure*}[h]
    \centering
    \includegraphics[width=1.0\linewidth]{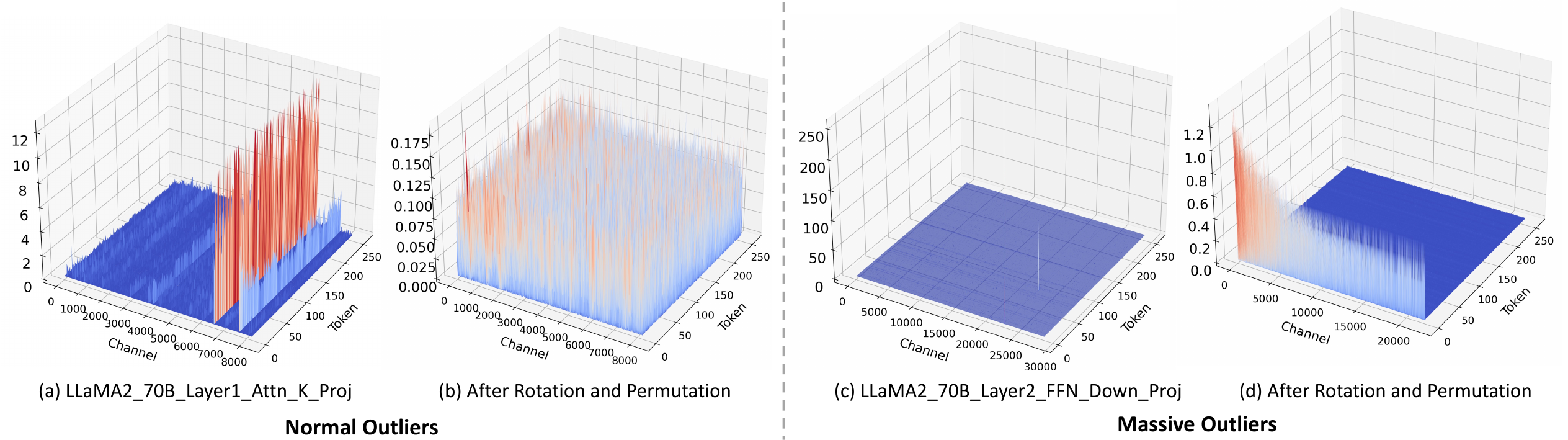}
    \caption{
    Activation change with the use of our \rap\ for LLaMA2-70B.
    }
    \label{fig:outlier_2_70B}  
    \vspace{1cm}
\end{figure*}

\begin{figure*}[h]
    \centering
    \includegraphics[width=1.0\linewidth]{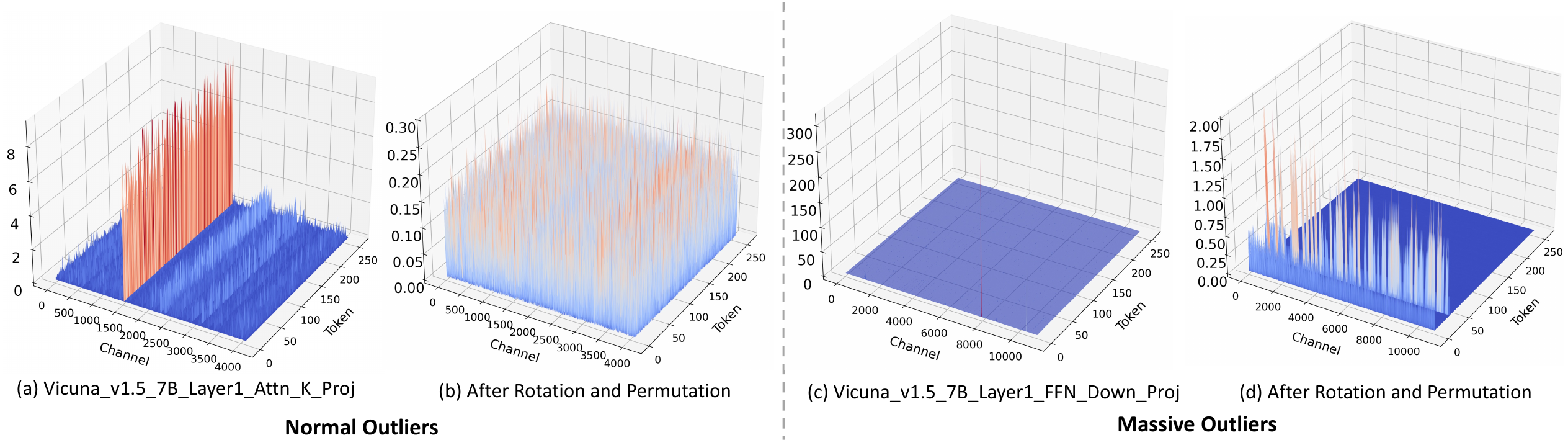}
    \caption{
    Activation change with the use of our \rap\ for Vicuna-v1.5-7B.
    }
    \label{fig:outlier_vicuna_7B}    
    \vspace{1cm}
\end{figure*}

\begin{figure*}[h]
    \centering
    \includegraphics[width=1.0\linewidth]{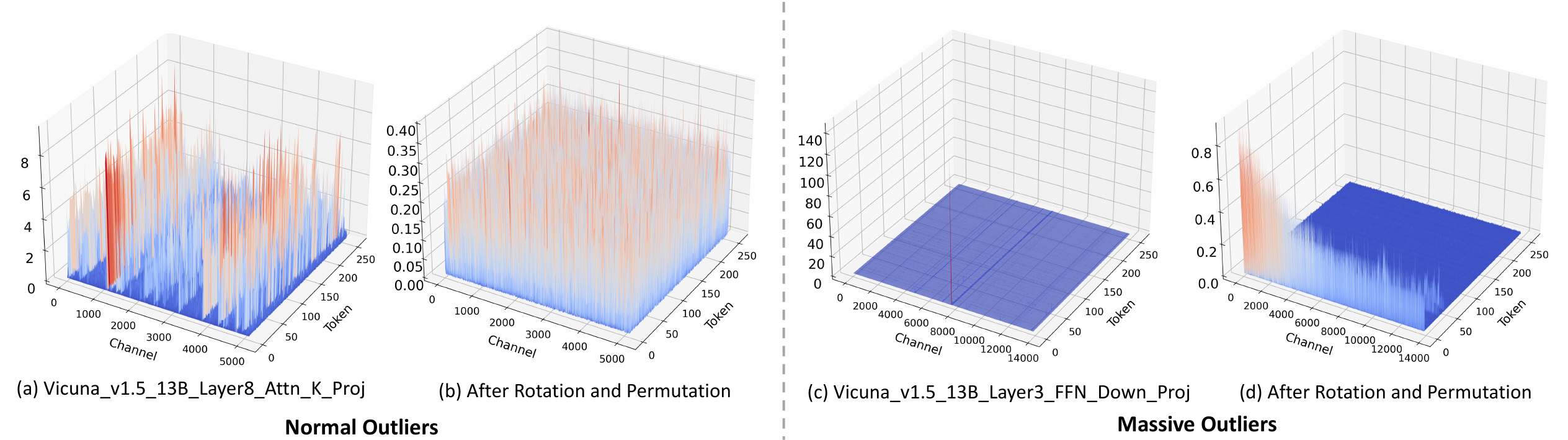}
    \caption{
    Activation change with the use of our \rap\ for Vicuna-v1.5-13B.
    }
    \label{fig:outlier_vicuna_13B}    
    \vspace{1cm}
\end{figure*}

\begin{figure*}[h]
    \centering
    \includegraphics[width=1.0\linewidth]{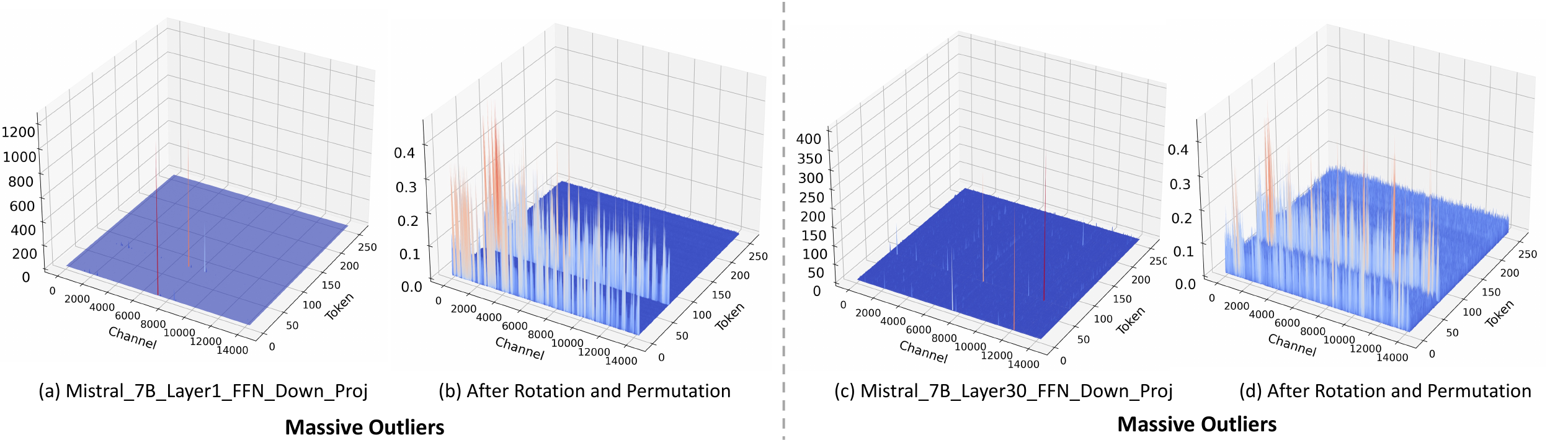}
    \caption{
    Massive activation change with the use of our \rap\ for Mistral7B.
    }
    \label{fig:outlier_mistral_7B} 
\end{figure*}

\newpage
\newpage
\clearpage

\end{document}